\newtheorem{lemma}{Lemma}
\newtheorem{theorem}{Theorem}
\newtheorem{corollary}{Corollary}
\newtheorem{proposition}{Proposition}
\newtheorem{remark}{Remark}
\newtheorem{definition}{Definition}
\title{Non-Stationary Lipschitz Bandits}
\author{Nicolas Nguyen \\
  University of Tübingen \\
  \texttt{nicolas.nguyen@uni-tuebingen.de} \\
  \And
  Solenne Gaucher \\
  École Polytechnique, CMAP \\
 \texttt{solenne.gaucher@polytechnique.edu} \\
   \AND
  Claire Vernade \\
  University of Tübingen  \\
  \texttt{claire.vernade@uni-tuebingen.de} \\
}
\begin{document}

\maketitle

\begin{abstract}
We study the problem of non-stationary Lipschitz bandits, where the number of actions is infinite and the reward function, satisfying a Lipschitz assumption, can change arbitrarily over time. We design an algorithm that adaptively tracks the recently introduced notion of significant shifts, defined by large deviations of the cumulative reward function. To detect such reward changes, our algorithm leverages a hierarchical discretization of the action space. Without requiring any prior knowledge of the non-stationarity, our algorithm
achieves a minimax-optimal dynamic regret bound of  $\mathcal{\widetilde{O}}(\Tilde{L}^{1/3}T^{2/3})$, where $\Tilde{L}$ is the number of significant shifts and $T$ the horizon. This result provides the first optimal guarantee in this setting.
\end{abstract}
\section{Introduction}\label{sec:introduction}

%

Many practical applications involve decision-making over a continuous space of actions (or \emph{arms}) where the environment evolves over time. For example, in dynamic pricing, the space of possible prices is inherently continuous, and market dynamics may shift abruptly. In such a setting, it is both natural and crucial to leverage smoothness in the reward function to generalize across similar actions, while also remaining adaptive to temporal changes. However, most existing works on non-stationary bandits focus on discrete action spaces and overlook the structure present in continuous domains.

The non-stationary bandit problem extends the classical stochastic multi-armed bandit setting \citep{slivkins2019introduction, lattimore2020bandit} by allowing the mean reward functions to change over time. Over the past two decades, several frameworks have been proposed to quantify non-stationarity. Common approaches quantify the number of changes $L_T$ of any arm’s mean reward over $T$ rounds \citep{auer2002nonstochastic, garivier2011upper}, or their total variation $V_T$ in the reward functions \citep{besbes2014stochastic, cheung2019learning}. Algorithms that are built upon these approaches achieve dynamic regret bounds of order $\sqrt{L_TT}$ or $V_T^{1/3}T^{2/3}$ respectively. Importantly, they all assume prior knowledge of the degree of non-stationarity $L_T$ or $V_T$. It remained an open question whether such rates could be achieved without knowing $L_T$ or $V_T$, until a line of work answered this affirmatively \citep{auer2019achieving, auer2019adaptively, chen2019new, wei2021non}.

Recent works have introduced more refined measures of non-stationarity. In fact, the standard metrics $L_T$ and $V_T$ can be overly pessimistic: both of them may be arbitrarily large even when the best arm remains unchanged. To address this, \citet{abbasi2023new} designed an algorithm that tracks the number of times the best arm changes over $T$ rounds, $S_T$, and showed that it achieves $\sqrt{S_TT}$ for the dynamic regret, \emph{adaptively}\footnote{In all our work, "adaptively" should be understood the same as "without knowledge of the non-stationarity".}. The important work of \citet{suk2022tracking} proposed a finer notion of non-stationarity called \emph{significant shifts}, which identifies only changes that meaningfully affect regret through large aggregate differences in reward. In the $K$-armed bandit setting, their algorithm achieves the optimal rate of $\sqrt{\Tilde{L}_TT}$ where $\Tilde{L}_T$ is the number of significant shifts over $T$ rounds, with $\Tilde{L}_T\ll L_T$.

In this work, we extend the ideas of \citet{suk2022tracking} to the setting of Lipschitz bandits, where the arm space is continuous (typically the interval $[0,1]$) and the reward functions satisfy a Lipschitz condition at each round. Despite the maturity of research in both non-stationary and Lipschitz bandits individually, their intersection remains unexplored. Our work is the first one to tackle the problem of non-stationary Lipschitz bandits.

\subsection{Outline and contributions}
In this work, we answer the following question positively: \emph{Can we design an algorithm that adaptively achieves the minimax-optimal rate in non-stationary Lipschitz bandits?} We argue that non-stationary Lipschitz bandits pose unique and fundamental challenges that go beyond a simple combination of non-stationarity and Lipschitz continuity. We highlight two central challenges addressed in this paper.

\textbf{Discretization level.} In the stationary Lipschitz bandit setting, a common approach is to discretize the continuous arm space into a finite set of \emph{bins}, thereby allowing the application of standard multi-armed bandit algorithms \citep{kleinberg2004nearly}. When the time horizon $T$ is known, the discretization level can be carefully calibrated, typically using $\mathcal{O}(T^{1/3})$ bins. This choice reflects the optimal balance between discretization error and the regret incurred by running a bandit algorithm over the discretized arms, leading to the minimax-optimal regret of $\mathcal{O}(T^{2/3})$.
In the non-stationary setting, however, the durations of the time intervals during which the mean reward remains stable are unknown and not rigorously defined. Thus, direct horizon-dependent tuning becomes infeasible. This raises a key question: how to adaptively adjust the discretization level to match the unknown degree of non-stationarity in the environment?

\textbf{Forced exploration via restarts and replays.} In non-stationary bandits, detecting changes in the reward distribution requires frequent exploration of suboptimal arms. In the $K$-armed bandit setting, this is often handled through periodic restarts, where all $K$ arms are re-explored to detect shifts of the reward function. In the continuous setting, this strategy is significantly harder: not only must we decide when to restart, but also at what resolution to discretize the arm space during such restarts. Finer discretization enables more precise detection of changes but significantly increases the cost of exploration. Balancing this trade-off is a central technical challenge tackled in our work.

This paper is organized as follows. In \cref{sec:problem_setting}, we formally introduce the problem setting, and present our algorithm, \algo, in \cref{sec:algorithm}. Our main results are stated in \cref{sec:main_results}, followed by a high-level proof sketch of its regret analysis in \cref{sec:proof_sketch}. For completeness, the full pseudocode of \algo is provided in \cref{app:sec:pseudo_code}, and all detailed regret proofs can be found in \cref{app:sec:proof_th_upper_bound}.

\subsection{Related Work}\label{subsec:related_works}

\textbf{Non-stationary bandits beyond $K$-armed.} A growing body of work has extended non-stationary bandit models beyond the classical $K$-armed setting. These include linear and generalized linear bandits \citep{russac2019weighted, faury2021optimal}, contextual bandits \citep{suk2021self}, convex bandits \citep{zhao2021bandit, wang2023adaptivity,liu2025non}, kernelized bandits \citep{hong2023optimization, iwazaki2024near, cai2024lower}, and more structured problems \citep{seznec2020single, vernade2020non, azizi2022non}. Some works further explore temporal regularity assumptions, such as smooth or slowly varying reward functions \citep{slivkins2008adapting, krishnamurthy2021slowly, jia2023smooth}. Across these settings, non-stationarity is typically quantified using either the number of global changes $L_T$ or the total variation $V_T$.

\textbf{Bandits with infinite arms.} In Lipschitz bandits \citep{agrawal1995continuum}, the infinite arm space is tackled via discretization strategies, either fixed or adaptive, that exploit the Lipschitz continuity of the reward function \citep{kleinberg2004nearly, kleinberg2008multi, bubeck2011lipschitz, bubeck2011x, magureanu2014lipschitz}. More recently, adversarial formulations have been studied, where the goal is to minimize the \emph{regret in hindsight} \citep{podimata2021adaptive, kang2023robust}. These differ fundamentally from our focus on dynamic regret in a stochastic setting. A separate line of work investigates infinite-armed bandits in the so-called \emph{reservoir model}, where arm means are drawn independently from a fixed distribution \citep{wang2008algorithms, carpentier2015simple, de2021bandits, kim2022rotting, kim2024adaptive}. Unlike Lipschitz bandits, these models assume no structural relationship between arms and therefore do not exploit similarity across them. Hence, direct comparisons to our setting do not apply. The only work we are aware of that addresses the problem of non-stationary Lipschitz bandits is \citet{kang2023online}, who propose a zooming-like algorithm to tune a generalized bandit framework. However, their formulation of non-stationary Lipschitz bandits is tailored to their specific setting and does not achieve optimal theoretical guarantees.

\textbf{Tracking significant shifts.} The idea of tracking \emph{significant shifts} was first introduced in the $K$-armed setting by \citet{suk2022tracking}, and has since been extended to contextual bandits \citep{suk2023tracking}, preference-based settings \citep{suk2023can, buening2023anaconda, suk2024non}, and smooth non-stationary models with temporal structure \citep{suk2024adaptive}.

The works closest to ours are  \citet{suk2023tracking} and \citet{suk2025tracking}, both extending the framework of tracking significant shifts beyond the classical $K$-armed setting. The former considers contextual bandits where rewards are Lipschitz in the context space, but with a finite number of arms. After discretizing the context space, their algorithm effectively reduces to a finite-arm bandit problem at each context, allowing the use of techniques from the $K$-armed setting \citep{suk2022tracking} without requiring discretization of the arm space. The latter studies infinite-armed bandits under a reservoir model, where arms are independently drawn from a fixed distribution. While this shares the challenge of choosing from infinitely many arms, their setting assumes no regularity across arms. Therefore, their techniques and guarantees are not applicable to our setting.

\subsection{Notations} 
For any integer $n$, we write $[\![n]\!] = \{1,\dots, n\}$. For a filtration $(\cF_t)_t$, we denote the conditional expectation by $\E{t}{\cdot} = \E{}{\cdot\condi\cF_{t}}$ (the underlying filtration will be clear from context). For real numbers $a$ and $b$, we write $a \vee b = \max\{a, b\}$ and $a \wedge b = \min\{a, b\}$.

\section{Problem setting}\label{sec:problem_setting}
Without loss of generality, we assume that the arm space is the unit interval $[0,1]$.\footnote{Extensions to arbitrary metric spaces and other Lipschitz constants are discussed in \cref{app:sec:extension_arbitrary_spaces}.} An \emph{oblivious adversary} selects a sequence of $T$ mean reward functions $(\mu_t)_{t=1}^T$, where each $\mu_t : [0,1] \to [0,1]$ satisfies the following Lipschitz condition:
\begin{assumption}[\textbf{Lipschitz mean rewards $\bm{\mu_t}$}]\label{assumption:lipschitz}
$\forall x,x'\in[0, 1], \, \left|\mu_t(x)-\mu_t(x') \right| \leq |x-x'| $.
\end{assumption}

At each round $t \in [T]$, an algorithm $\pi$ selects an arm $x_t \in [0,1]$ and observes a stochastic reward $Y_t(x_t) \in [0,1]$ independent from the past (arms and observations) and satisfying $\E{}{Y_t(x_t)\,|\,x_t=x} = \mu_t(x)$. The objective is 
to minimize the \emph{dynamic regret} 
\begin{align*}
R(\pi, T) = \sum_{t=1}^T \sup_{x\in \cX}\mu_t(x) - \sum_{t=1}^T \mu_t(x_t) \,,
\end{align*}
defined as the difference between the cumulative gain of the algorithm and that of the dynamic benchmark oracle, which, at each round, knows the reward function $\mu_t$. In this paper, we design an algorithm and provide a bound on its \emph{expected dynamic regret} $\mathbb{E}[R(\pi,T)]$, where the expectation is taken with respect to the randomness of the interactions between the algorithm and the environment. We denote the instantaneous gap between two arms $x'$ and $x$ as $\delta_t(x',x) = \mu_t(x') - \mu_t(x)$, and the instantaneous regret of arm $x$ as $\delta_t(x) = \max_{x'\in[0, 1]}\mu_t(x') - \mu_t(x)$, so that $R(\pi, T) =\sum_{t=1}^T \delta_t(x_t)$.

\subsection{Significant regret and significant shifts}\label{subsec:sig_regret_and_sig_shifts}
We now introduce the notion of \emph{significant regret} for an arm $x\in[0, 1]$. 
\begin{definition}[\textbf{Significant regret of an arm} $\bm{x}$]\label{def:significant_regret_arm}
An arm $x\in\cX$ incurs significant regret on interval $[s_1, s_2]$ if its cumulative regret on this interval is lower bounded as
\begin{align*}
    \sum_{t=s_1}^{s_2}\delta_t(x)\geq \log(T)(s_2 - s_1)^{2/3}\,.
\end{align*}
We call such arm an \textbf{unsafe} arm on interval $[s_1, s_2]$, and otherwise we call it \textbf{safe} on this interval.
\end{definition}
The right hand side of \cref{def:significant_regret_arm} corresponds to the minimax-optimal regret for a stationary Lipschitz bandit over horizon $s_2 - s_1$, and thus captures the level of regret that is statistically detectable.
Conversely, if no such interval exists where arm $x$ incurs significant regret, then its cumulative regret over the entire horizon must remain small. 
A \emph{significant shift} occurs when \emph{all} arms become unsafe, \emph{i.e.} when each arm has incurred significant regret over some interval.
\begin{definition}[\textbf{Significant shift, significant phase}]\label{def:significant_shift}
Let $\tau_0 = 1$. For $i\geq1$, we define the $i^{\text{th}}$ significant shift as the smallest round $\tau_i\in]\tau_{i-1},T]$ such that for all arm $x\in\cX$, there exists an interval $[s_1, s_2]\subseteq[\tau_{i-1}, \tau_i]$ on which arm $x$ incurs significant regret (\cref{def:significant_regret_arm}). 
For all $i$, we call $[\tau_i, \tau_{i+1}[$ a \textbf{significant phase}, and we denote by $\Tilde{L}_T$ the (unknown) \textbf{number of significant phases}.
\end{definition}
This definition implies that tracking significant shifts amounts to detecting intervals where the regret exceeds what would be expected in a stationary Lipschitz environment.

\subsection{Comparison with other non-stationary metrics.}
The notion of significant shifts provides a refined perspective compared to classical metrics such as the \emph{number of global changes} $L_T = \sum_{t=1}^{T-1} \mathbb{I}\left\{ \exists x \in [0,1] : \mu_{t+1}(x) \neq \mu_t(x) \right\}$, the \emph{total variation} $V_T = \sum_{t=1}^{T-1} \max_{x \in [0,1]} |\mu_{t+1}(x) - \mu_t(x)|$, and the \emph{number of best-arm changes} over $T$ rounds, $S_T = \sum_{t=1}^{T-1} \mathbb{I}\left\{ \argmax_{x\in[0, 1]} \mu_{t+1}(x) \neq \argmax_{x\in[0, 1]} \mu_t(x) \right\}.$
Unlike these metrics, $\tilde{L}_T$ is robust to benign transformations. For instance, if each $\mu_t$ is shifted by the same constant across all $x$, then $L_T$ and $V_T$ may scale linearly with $T$, while $\tilde{L}_T = 0$. Similarly, $S_T$ may count frequent changes in the optimal arm due to small fluctuations, even if the overall impact is negligible. In contrast, $\tilde{L}_T$ captures only statistically meaningful shifts and satisfies $\tilde{L}_T \leq S_T \leq L_T$.

For example, consider a recommendation system with a continuous pool of content (\emph{e.g.} movies), indexed by $x \in [0,1]$, where nearby values of $x$ correspond to similar content types. Suppose there exist two regions of high and comparable user preference, centred around $x_1 = 0.3$ and $x_2 = 0.7$. Imagine a scenario where preferences near $x_1$ remain stable over time (\emph{e.g.} a timeless classic), while preferences near $x_2$ undergo very frequent changes (\emph{e.g.} a trending topic that evolves daily). In this case, an algorithm that consistently recommends content near $x_1$ would incur little to no regret, even though the underlying reward function changes frequently in other regions. From a global perspective, the number of changes or the total variation in mean reward could be as large as $L_T = V_T= \mathcal{O}(T)$. An algorithm that relies solely on such metrics would unnecessarily restart its estimates too frequently, as it would overestimate the effective difficulty of the problem. This leads to both theoretical sub-optimality and practical inefficiency. However, such changes do \emph{not} constitute a significant shift, as the latter captures only changes that are \emph{statistically consequential} to learning, rather than indiscriminately counting all shifts in the environment. 

\section{Algorithm \algo: Multi-Depth Bin Elimination}\label{sec:algorithm}
We now present the key ideas leading to the design of our algorithm, before detailing each one of them\footnote{The full pseudo-code of Multi-Depth Bin Elimination (\algo) can be found in \cref{app:sec:pseudo_code}.}. At a high level, our goal is to detect significant changes in the aggregate gaps $\sum_{t=s_1}^{s_2} \delta_t(x', x)$ between pairs of arms over time intervals. This quantity lower bounds the dynamic regret $\sum_{t=s_1}^{s_2} \delta_t(x)$ over this interval, meaning that large changes in these aggregate gaps are indicative of large dynamic regret. However, directly monitoring such changes for all possible pairs of arms is infeasible in a continuous action space.
To address this, we adopt a \textbf{discretization} strategy and instead monitor aggregate gaps between pairs of \emph{bins} $B,B'$ (\emph{i.e.}, subintervals of $[0, 1]$) in the form $\sum_{t=s_1}^{s_2} \delta_t(B', B)$. 
\smallskip

Our algorithm operates in stable \textbf{episodes}, each split into \textbf{blocks} of doubling length with varying discretization levels. Within each block, a variant of the \textit{Successive Elimination} algorithm removes empirically suboptimal regions. Since discarded bins may become optimal later, we use the replay mechanism of \citet{suk2022tracking} to revisit them and avoid missing promising regions. A key challenge is selecting the right discretization level for each replay. Our \textbf{sampling procedure} guarantees sufficient exploration at different scales, while a carefully designed \textbf{bin eviction mechanism} progressively eliminates regions that appear suboptimal based on empirical evidence. 

\textbf{Discretization.} To estimate the reward function at multiple discretization levels, we use a recursive dyadic partitioning of the space, described below.



\begin{definition}[\textbf{Depth and Dyadic tree}]\label{def:dyadic_tree}
$\cT_d$ denotes the partition of $[0, 1]$ into $2^d$ bins of size $1/2^d$ each. We say that $d$ is the corresponding \textbf{depth} of this discretization. We define the \textbf{dyadic tree} $\cT = \{\cT_d\}_{d\in\Nat}$ as the hierarchy of nested partitions of $[0, 1]$ at all possible depth $d\in\Nat$. For any bin $B\in\cT_d$, we denote by $\mathrm{Children}(B,d')$ the \textbf{set of children} of bin $B$ at depth $d'>d$, and $\Par(B, d'')$ the unique \textbf{parent bin} of $B$ at depth $d''<d$. In particular we have $d''<d<d'\implies\forall B'\in \mathrm{Children}(B,d')\,,\,B'\subset B \subset \Par(B, d'')$.
\end{definition}
For any bin $B\in\cT_d$, we define the mean reward of bin $B$ at round $t\in\llbracket T \rrbracket$ as
\begin{align*}
\mu_t(B) =\frac{1}{|B|}\int_{x\in B}\mu_t(x)\dint x\,,
\end{align*}
where $|B|=1/2^d$ denotes the width of bin $B\in\cT_d$. We define the instantaneous relative regret between two bins $B, B' \in \cT_d$ as $\delta_t(B', B) = \mu_t(B') - \mu_t(B)$, and introduce its empirical estimate, to be formalized later, as $\hat\delta_t(B', B) = \hat\mu_t(B') - \hat\mu_t(B)$. We also define the instantaneous regret of a bin $B\in\cT_d$ as $\delta_t(B)=\max_{B'\in\cT_d}\mu_t(B')-\mu_t(B)$. For simplicity, we use the same notation $\mu_t(B), \delta_t(B)$ and $\mu_t(x), \delta_t(x)$ to denote the mean and regret for bins and arms, respectively. The distinction should be clear from context. 



\textbf{Episodes and blocks.}
The algorithm proceeds in \emph{episodes}, indexed by $l \in \mathbb{N}$, with episode $l$ starting at round $t_l$ (with $t_0 = 1$). Each episode is divided into \emph{blocks}, where the $m^{\text{th}}$ block lasts for $8^m$ rounds. Within the $m^{\text{th}}$ block, the action space $[0,1]$ is discretized into $2^m = (8^m)^{1/3}$ bins, corresponding to the $m^{\text{th}}$ depth $\cT_m$. We denote the block interval by $[\tau_{l,m}, \tau_{l,m+1}[$, and maintain a set $\cBM \subseteq \cT_m$ of \emph{safe bins} at depth $m$ (referred to as the MASTER set).
A \emph{significant shift} is detected when $\cBM = \emptyset$, meaning all bins at depth $m$ have been evicted. In such cases, the block (and thus the episode) ends prematurely. Otherwise, a block ends naturally after $8^m$ rounds.

\begin{algorithm}
\caption{Routine procedure for one block}
\KwIn{Starting round of the block $\tau_{l,m}$.}
$\cBM\gets\cT_m$\tcp*{Initialize MASTER set with all bins at depth $m$}
Schedule replays for $t=\tau_{l,m}+1,\dots,\tau_{l,m}+8^m-1$ and $d\in\llbracket m-1\rrbracket$ according to \eqref{eq:proba_of_replay};\\
\For{$t=\tau_{l,m},\dots,\tau_{l,m}+8^m-1$}{
    \If{Enter replay at depth $d$}{
        $\cD_t\gets\cD_t\cup\{d\}$\tcp*{Activate depth $d$ for replay}
    }
    \If{Exit replay at depth $d$}{
        $\cD_t\gets\cD_t\setminus\{d\}$\tcp*{Deactivate depth $d$ after replay ends}
    }
    \textcolor{white}{x}Choose $x_t$ using \textbf{Sampling scheme} (\cref{alg:sampling_scheme});\\
    Update mean estimates and evict bins using \eqref{eq:star};\\
    \If{$\cBM=\emptyset$}{
        \textbf{Break}; terminate episode $l\gets l+1$, restart from block $m\gets 1$ \tcp*{Shift detected}}
}
Change block: $m\gets m+1$, $\tau_{l,m+1}\gets\tau_{l,m}+8^m$\tcp*{If $\cBM\neq\emptyset$}
\label{alg:routine}
\end{algorithm}

At each round $t \in \llbracket T \rrbracket$, \algo maintains a set of \emph{active depths} $\cD_t \subset \mathbb{N}$. For each active depth $d\in\cD_t$, the algorithm maintains a set of \emph{active bins} $\cB_t(d) \subseteq \cT_d$, which are those not yet evicted based on their observed performance. If $d$ is not active, \emph{i.e.} $d \notin \cD_t$, then $\cB_t(d) = \emptyset$.

\textbf{Replays.} To detect non-stationarity at different scales, the algorithm performs \emph{replays} at various depths. Each block $[\tau_{l,m}, \tau_{l,m+1}[$ always initiates a replay at depth $m$ by setting $\cBM \gets \mathcal{T}_m$. Additionally, replays may be triggered for all rounds $s=\tau_{l,m} + 1,\dots,\tau_{l,m}+8^m-1$ at coarser depths $d < m$ with probability
\begin{align}\label{eq:proba_of_replay}
    p_{s, d} = \sqrt{\frac{8^d}{s-\tau_{l,m}}}\I{s - \tau_{l,m}\equiv 0[8^d]}\,.
\end{align} 
Each replay at depth $d$ runs for $8^d$ rounds, unless it is interrupted by the end of the current block. At the beginning of such a replay, the set of \emph{active bins} at each active depth $d \in \cD_t$ is reset to contain \emph{all} bins at that depth: $\cB_t(d) \gets \cT_d$ for all $d \in \cD_t$. 
When the replay at depth $d$ concludes, depth $d$ is simply removed from the set of active depths $\cD_t$, but any other active depths may continue their replays independently. Thanks to this design, multiple replays at different depths can overlap in time, allowing the algorithm to monitor for potential significant shifts at multiple scales simultaneously (\cref{alg:routine}).

\textbf{Sampling scheme.} 
At each round, the algorithm selects an action using a hierarchical, top-down sampling procedure from the \emph{minimum active depth} $\dmin = \min\cD_t$ down to the deepest active depth that contains an active bin (\cref{alg:sampling_scheme}). Remark that if $\cD_t = \{m\}$, then $\cB_t(m) = \cBM$. In this case, the sampling procedure reduces to selecting a bin uniformly from $\cBM$, and subsequently sampling an action from the chosen bin (see \cref{fig:algo_explanation}).

\begin{algorithm}
\caption{Sampling scheme} 
\KwIn{Round $t$, active depths $\cD_t$, active bins $(\cB_t(d))_{d\in\cD_t}$.}
Compute $\dmin = \min\cD_t$ \tcp*{Identify the minimum active depth}
$B_{\mathrm{parent}}\sim\mathcal{U}(\cB_t(\dmin))$\tcp*{Sample a parent bin at depth $\dmin$}
\For{$d\in\mathrm{Sort}(\cD_t)\setminus\{\dmin\}$}{
    \If{$\mathrm{Children}(B_{\mathrm{parent}}, d)\cap\cB_t(d)=\emptyset$\tcp*{No active child bin found}}{
        $x_t\sim\mathcal{U}(B_{\mathrm{parent}})$\tcp*{Sample uniformly from current bin}
        \textbf{Return} $x_t$;
    }
    \Else{
        $B_{\mathrm{child}}\sim\mathcal{U}(\mathrm{Children}(B_{\mathrm{parent}}, d)\cap \cB_t(d))$\tcp*{Sample an active child bin}
        $B_{\mathrm{parent}}\gets B_{\mathrm{child}}$\tcp*{Continue to the selected node}
    }
}
\label{alg:sampling_scheme}
\end{algorithm}

\textbf{Bin eviction criterion.} Our algorithm aims to identify and eliminate arms that incur significant regret, as defined in \cref{def:significant_regret_arm}. However, since the algorithm only relies on bin estimates $\delta_t(B)$, we introduce a notion of \emph{significant regret for bins} to make localized decisions.
\begin{definition}[\textbf{Significant regret for a bin}]\label{def:significant_regret_bin}
We say a bin $B\in\cT_d$ incurs significant regret on interval $[s_1, s_2]$ if
\begin{align*}
\sum_{t=s_1}^{s_2}\delta_t(B) \geq 3\log(T)\sqrt{(s_2 - s_1)2^d}\,.
\end{align*}
We call such bin an \textbf{unsafe bin} on this interval, and otherwise we call it \textbf{safe} on this interval.
\end{definition}
The right-hand side of the inequality corresponds to the rate of the minimax regret of a $2^d$-armed bandit over interval $[s_1, s_2]$. Intuitively, this threshold captures whether the cumulative sub-optimality of the bin is statistically significant over $[s_1, s_2]$. Importantly, we will later show that if a bin meets this threshold, then all arms within the bin also suffer significant regret in the sense of \cref{def:significant_regret_arm} (see \cref{prop:from_bin_to_arm}). We emphasize that the notion of significant shifts (\cref{def:significant_shift}) is defined \emph{independently of any depth} (and thus independently of the depth used by the algorithm).\\
To estimate the mean reward of a bin $B$, we use the following importance-weighted estimator:
\begin{align}\label{eq:IPS_mean_estimate}
\forall B\in\cT_d,\quad\hat\mu_t(B) = \frac{Y_t(x_t)}{\mathbb{P}(x_t \in B \condi \mathcal{F}_{t-1})}\I{x_t \in B},
\end{align}
where $\cF_{t}$ is the natural filtration generated by the algorithm. Under uniform sampling within bin $B$, $\hat\mu_t(B)$ is an unbiased estimate of $\mu_t(B)$. In particular, during a replay at depth $d$, the algorithm leverages these estimates to potentially evict bins across active depths $d' \geq d$ that appear suboptimal: over some $[s_1, s_2]$ of length at most $8^d$ ($d$ is active), bin $B$ is evicted if
\begin{align}\tag{\textcolor{red}{$\star$}}\label{eq:star}
\max_{B' \in \cB_{[s_1, s_2]}(d)} \sum_{t=s_1}^{s_2} \hat{\delta}_t(B', B) 
> c_0 \log(T) \sqrt{(s_2 - s_1) 2^d \vee 4^d} + \frac{4(s_2 - s_1)}{2^d},
\end{align}
where $c_0$ is a positive universal constant\footnote{An exact value of $c_0$ can be derived from the analysis.}, and $\cB_{[s_1, s_2]}(d)$ denotes the set of bins that remain active throughout interval $[s_1, s_2]$. Importantly, when a bin $B \in \cT_d$ is evicted based on this criterion, it is immediately removed from the current set of active bins $\cB_t$, and also permanently \textbf{removed from the global MASTER set} $\cBM$. In addition, all children of $B$ in the dyadic tree are evicted as well. As a result, no future arms will be sampled from the entire region corresponding to $B$ and its descendants in $\cT$ until a new replay is scheduled.
The next proposition establishes that the eviction rule \eqref{eq:star} is sound: evicted bins indeed correspond to areas where all arms suffer significant regret.

\begin{proposition}[\textbf{Significant regret of a bin implies significant regret of an action}]\label{prop:from_bin_to_arm}
If a bin $B \in \mathcal{T}_d$ incurs significant regret on an interval $[s_1, s_2]$ with $s_2 - s_1 \leq 8^d$, then every point $x \in B$ also incurs significant regret over $[s_1, s_2]$.
\end{proposition}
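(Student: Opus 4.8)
The plan is to reduce the statement to a purely deterministic, per-round comparison between the instantaneous regret of the bin, $\delta_t(B)$, and that of an arbitrary arm $x \in B$, and then to sum this comparison over $[s_1,s_2]$ and invoke the length constraint $s_2 - s_1 \le 8^d$. No concentration is needed here: the proposition relates the \emph{true} regrets of Definitions~\ref{def:significant_regret_bin} and \ref{def:significant_regret_arm}, so everything follows from Assumption~\ref{assumption:lipschitz} and arithmetic. Concretely, I would first prove that for every round $t$ and every $x \in B$,
\[
\delta_t(x) \ge \delta_t(B) - 2^{-d},
\]
i.e.\ the bin regret controls the arm regret up to an additive error equal to the bin width, and then show that, after summing, this error is dominated thanks to $s_2 - s_1 \le 8^d$.

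For the per-round inequality I would treat the two terms of $\delta_t(x)=\max_{x'}\mu_t(x')-\mu_t(x)$ separately. On the maximizer side, each bin mean $\mu_t(B')$ is an average of $\mu_t$ over $B'$, hence bounded by $\max_{x'\in[0,1]}\mu_t(x')$; taking the maximum over $B'\in\cT_d$ gives $\max_{B'\in\cT_d}\mu_t(B') \le \max_{x'\in[0,1]}\mu_t(x')$ (no Lipschitz needed). On the comparator side I use Lipschitz continuity: for $x,y\in B$ we have $|x-y|\le|B|=2^{-d}$, so $|\mu_t(x)-\mu_t(y)|\le 2^{-d}$, and averaging over $y\in B$ yields $|\mu_t(x)-\mu_t(B)|\le 2^{-d}$, in particular $\mu_t(x)\le\mu_t(B)+2^{-d}$. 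Combining the two,
\[
\delta_t(x) = \max_{x'}\mu_t(x') - \mu_t(x) \ge \max_{B'\in\cT_d}\mu_t(B') - \mu_t(B) - 2^{-d} = \delta_t(B) - 2^{-d}.
\]

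It then remains to sum and balance the two scales. Writing $n=s_2-s_1$ for the interval length (matching the convention of the thresholds), summation gives $\sum_{t=s_1}^{s_2}\delta_t(x) \ge \sum_{t=s_1}^{s_2}\delta_t(B) - n\,2^{-d}$, and since $B$ incurs significant regret, $\sum_{t=s_1}^{s_2}\delta_t(B)\ge 3\log(T)\sqrt{n\,2^d}$ by Definition~\ref{def:significant_regret_bin}. The constraint $n\le 8^d=2^{3d}$ is precisely what converts the $2^d$-armed detection rate into the Lipschitz rate: it gives $2^d\ge n^{1/3}$, hence $\sqrt{n\,2^d}\ge n^{2/3}$ and simultaneously $n\,2^{-d}\le n^{2/3}$. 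Therefore $\sum_{t=s_1}^{s_2}\delta_t(x) \ge 3\log(T)\,n^{2/3} - n^{2/3} \ge \log(T)\,n^{2/3}$, where the last inequality uses $2\log(T)\ge 1$, which holds for all $T\ge 2$. This is exactly the significant-regret condition of Definition~\ref{def:significant_regret_arm} for $x$.

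The main obstacle is conceptual rather than technical: no individual step is hard, but one must recognize that the length restriction $s_2-s_1\le 8^d$ and the constant $3$ in the bin threshold are calibrated so that two competing effects are controlled at once — (i) the $K$-armed detection rate $\sqrt{n\,2^d}$ at $K=2^d$ bins is at least the target Lipschitz rate $n^{2/3}$, and (ii) the accumulated discretization error $n\,2^{-d}$ stays below $n^{2/3}$ and is absorbed by the slack between the constants $3$ and $1$. The care required is thus in tracking these constants and the exponent bookkeeping induced by $2^d\ge n^{1/3}$, not in any deep estimate.
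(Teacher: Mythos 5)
Your proof is correct and takes essentially the same route as the paper's: both reduce the claim to a pointwise comparison between bin-level regret $\delta_t(B)$ and arm-level regret $\delta_t(x)$ with a discretization error of order $2^{-d}$ per round (via Assumption~\ref{assumption:lipschitz}), then sum over $[s_1,s_2]$ and use $s_2-s_1\le 8^d$ to absorb the accumulated error $(s_2-s_1)2^{-d}\le (s_2-s_1)^{2/3}$ into the slack between the constants $3$ and $1$. The only cosmetic difference is that you compare $\max_{B'\in\mathcal{T}_d}\mu_t(B')$ directly to $\max_{x'}\mu_t(x')$ by an averaging argument, saving one Lipschitz application (error $2^{-d}$ instead of the paper's $2\cdot 2^{-d}$, which comes from also approximating the best bin's mean by a point inside it); the argument is otherwise identical.
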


\begin{figure}[h]
    \centering
    \includegraphics[width=0.325\textwidth]{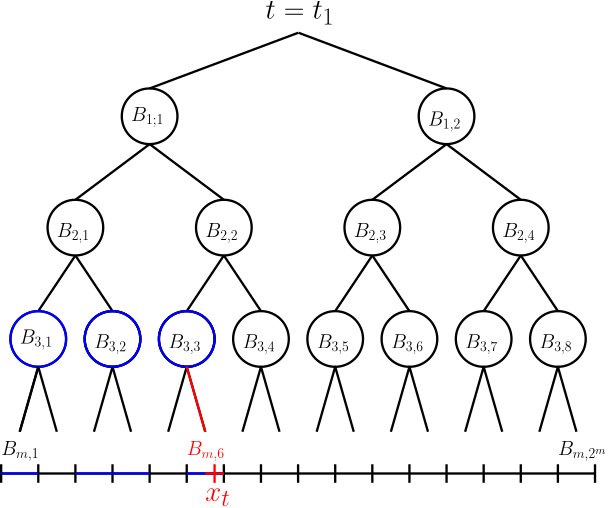}
    \includegraphics[width=0.325\textwidth]{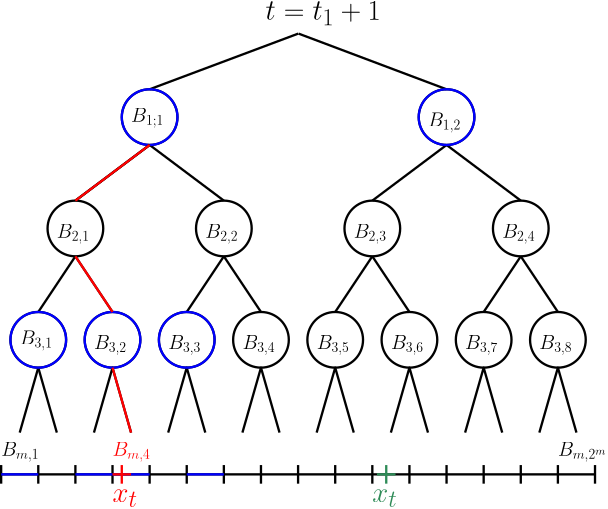}
    \includegraphics[width=0.325\textwidth]{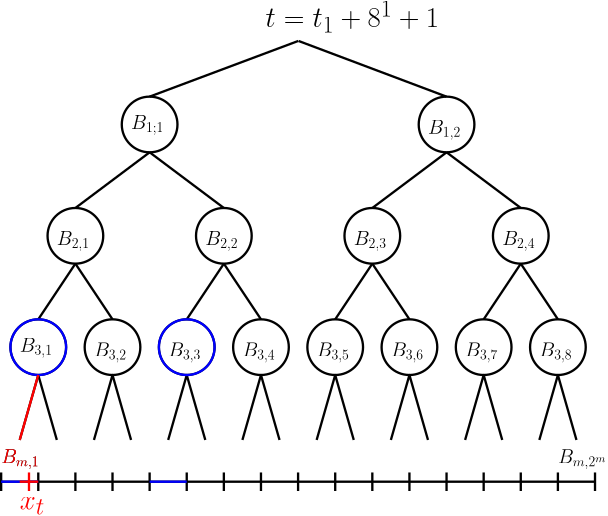}
    \caption{Example of sampling with $m=4$; active bins are in blue. \textit{Left}: At time $t_1$, depths $3$ and $m$ are active. A sample path may select bin $B_{3,3}$ uniformly at random (\emph{u.a.r.}) at depth $3$, then $B_{m,6}$ \emph{u.a.r.} among its active children, then arm $x_t$ \emph{u.a.r.} in $B_{m,6}$ (red path).  
    \textit{Center}: At $t_1 + 1$, a replay starts at depth $1$. A path may go through $B_{1,1} \rightarrow B_{3,2} \rightarrow B_{m,4}$, selecting $x_t$ in $B_{m,4}$ (red path). Alternatively, $B_{1,2}$ could be chosen; with no active children, $x_t$ is sampled directly from it (green choice).  
    \textit{Right}: At $t_1 + 9$, depth $1$ exits replay. Bin has been $B_{3,2}$ eliminated during the replay, and a path may select $B_{3,1} \rightarrow B_{m,1}$, then $x_t$ in $B_{m,1}$ (red path).}
    \label{fig:algo_explanation}
\end{figure}

\section{Main results}\label{sec:main_results}
We establish the fundamental limits for the non-stationary Lipschitz bandit problem, and prove that our algorithm \algo achieves minimax optimal dynamic regret. We first derive a lower bound on the dynamic regret, expressed in terms of both the number of significant shifts $\tilde{L}_T$ and the total variation $V_T$. To the best of our knowledge, this is the first lower bound for non-stationary Lipschitz bandits.

\begin{theorem}[\textbf{Lower bound on the dynamic regret}]\label{th:lower_bound}
Let $\mathcal{E}_{\mathrm{Lip}}(T, \tilde{L})$ denote the class of non-stationary Lipschitz bandit environments over the arm space $[0,1]$ with at most $\tilde{L}_T\leq \tilde{L}$ significant shifts. There exists a numerical constant $\ubar{c}_1>0$ such that for any algorithm $\pi$,
\begin{align*}
    \sup_{\bm{\mu}\in\mathcal{E}_{\mathrm{Lip}}(T,\tilde{L})}\E{\bm{\mu}}{R(\pi,T)}\geq \ubar{c}_1 \Tilde{L}^{1/3}T^{2/3} \,.
\end{align*}
Similarly, for the class $\mathcal{E}_{\mathrm{Lip}}(T, V)$ with at most $V_T\leq V$ total variation, there exists a numerical constant $\ubar{c}_2>0$ such that for any algorithm $\pi$,
\begin{align*}
    \sup_{\bm{\mu}\in\mathcal{E}_{\mathrm{Lip}}(T, V)}\E{\bm{\mu}}{R(\pi,T)}\geq \ubar{c}_2\left(T^{2/3}+  V^{1/4}T^{3/4}\right) \,.
\end{align*}
\end{theorem}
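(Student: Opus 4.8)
The plan is to derive both bounds from a single device: tile the horizon with fresh, independent copies of the classical stationary hard instance for Lipschitz bandits, and then read off the number of significant shifts (respectively the total variation) directly from the tiling. The choice of the bump height in the building block is what reconciles the two competing requirements.

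\textbf{Building block.} First I would recall the stationary needle-in-haystack lower bound over a horizon $n$: partition $[0,1]$ into $K=\lceil n^{1/3}\rceil$ bins of width $w=1/K$, pick a bin $j^\star$ uniformly at random, and set $\mu(x)=\tfrac12+\epsilon\,\Lambda_{j^\star}(x)$, where $\Lambda_{j^\star}$ is a tent supported on bin $j^\star$, peaking at its centre, and $\epsilon=w/2=\Theta(n^{-1/3})$ so that the Lipschitz condition (Assumption~\ref{assumption:lipschitz}) holds. A standard Le Cam/KL argument, as in \citet{kleinberg2004nearly}, shows that any policy incurs expected regret $\Omega(\min(\epsilon n,\,K/\epsilon))=\Omega(n^{2/3})$ on this family, since certifying $j^\star$ requires $\Omega(1/\epsilon^2)$ pulls in each of the $K$ bins. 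The feature I will exploit repeatedly is that, because rewards lie in $[\tfrac12,\tfrac12+\epsilon]$, every instantaneous gap obeys $\delta_t(x)\le\epsilon$.

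\textbf{Significant-shift bound.} I would cut $[\![T]\!]$ into $\tilde L$ consecutive super-blocks of length $\Delta=T/\tilde L$ and, in each, run an independent copy of the building block with $n=\Delta$, redrawing $j^\star$ uniformly and independently across super-blocks. Since $R(\pi,T)=\sum_t\delta_t(x_t)$ splits exactly into within-super-block regret against each block's own optimum, and since the needle in block $i$ is independent of the history $\mathcal F_{\tau_{i-1}}$, the building-block bound applies conditionally, giving $\mathbb E[R(\pi,T)]\ge\sum_{i=1}^{\tilde L}c\,\Delta^{2/3}=c\,\tilde L^{1/3}T^{2/3}$. To certify the instance lies in $\mathcal E_{\mathrm{Lip}}(T,\tilde L)$, the uniform gap bound does the work: since $\delta_t(x)\le\epsilon=\Theta(\Delta^{-1/3})$ for every $x$ and $t$, any interval $[s_1,s_2]$ on which some arm accumulates significant regret must satisfy $\epsilon(s_2-s_1)\ge\log(T)(s_2-s_1)^{2/3}$, i.e. $s_2-s_1\ge\log^3(T)\,\Delta$. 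Hence every significant phase has length at least $\log^3(T)\Delta$, so $\tilde L_T\le T/(\log^3(T)\Delta)=\tilde L/\log^3 T\le\tilde L$ deterministically, regardless of where the needles land.

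\textbf{Total-variation bound.} The $T^{2/3}$ term is just the single-block ($\tilde L=1$, hence $V_T=0$) instance. For the variation term I would reuse the same tiling with $N$ super-blocks left free, $\Delta=T/N$ and $\epsilon=\Theta(\Delta^{-1/3})$. Each needle relocation changes $\mu$ by at most $\epsilon$ at any point, so $V_T\le(N-1)\epsilon\le N\Delta^{-1/3}=N^{4/3}T^{-1/3}$, while the regret is $\ge c\,N^{1/3}T^{2/3}$. Taking $N=\Theta(V^{3/4}T^{1/4})$ enforces $V_T\le V$ and yields $\mathbb E[R(\pi,T)]\ge c\,V^{1/4}T^{3/4}$; since each lower bound separately bounds the supremum, combining them gives the stated sum up to constants. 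Degenerate regimes ($\tilde L\gtrsim T$, or $V$ so large that $\epsilon>\tfrac12$ or the forced $N$ exceeds $T$) I would dispatch with the trivial linear-regret observation, where the target rate already saturates.

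\textbf{Main obstacle.} The delicate point is not any single estimate but the simultaneous control of two quantities pulling in opposite directions: the per-block instances must be hard enough to force the full stationary rate $\Delta^{2/3}$, yet the non-stationarity they induce ($\tilde L_T$ or $V_T$) must stay within budget. Both are reconciled by the single choice $\epsilon=\Theta(\Delta^{-1/3})$ — precisely the gap at which the stationary minimax regret is attained, and simultaneously small enough that all gaps are $\le\epsilon$, making significant intervals long and each needle switch cheap in variation. The second point needing care is that the algorithm's memory persists across blocks; this is neutralized by the independent redrawing of needles, which makes the stationary bound apply conditionally on the past so that the per-block contributions add up cleanly.
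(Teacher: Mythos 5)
Your proposal is correct and shares the paper's overall architecture---tiling the horizon into stationary needle-in-haystack blocks with the same parameter scalings ($K\approx\Delta^{1/3}$ bins, bump height $\epsilon\approx\Delta^{-1/3}$, with $\Delta=T/\tilde L$ for the shift bound and $\Delta\approx(T/V)^{3/4}$ for the variation bound)---but it aggregates the per-block hardness by a genuinely different probabilistic argument. The paper derandomizes on the fly: it builds a single hard sequence by induction over phases, where in phase $l+1$ it compares the near-flat function $m^1$ against $m^{k^*}$ with $k^*$ the bin least explored by the algorithm given the already-constructed prefix, and applies the Bretagnolle--Huber inequality to this two-point family; membership in the class is then inherited from the raw change count (at most $T/\tau$ changes, and $\tilde L_T\le S_T\le L_T$). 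You instead draw the needles i.i.d.\ uniformly across blocks and apply the stationary bound conditionally on $\mathcal{F}_{\tau_{i-1}}$, summing via the tower property and finally selecting a good realization. This is valid, but note that it requires the \emph{prior-averaged} (Bayes-risk) form of the stationary $\Omega(\Delta^{2/3})$ bound, not merely the worst-case form usually quoted from \citet{kleinberg2004nearly}; that form does follow from the same KL plus Cauchy--Schwarz computation, so there is no gap, but it is the one step you should state and prove explicitly, since the conditional argument collapses without it. In exchange, your route avoids the paper's inductive bookkeeping, and your certification of the shift budget is more self-contained than the paper's: rather than counting raw changes, you use the uniform gap bound $\delta_t(x)\le\epsilon$ to show that any interval carrying significant regret must have length at least of order $\log^3(T)\Delta$, so the constructed instances have at most roughly $\tilde L/\log^3 T$ significant shifts deterministically---which also makes explicit the instructive fact that the hard family forces regret $\tilde L^{1/3}T^{2/3}$ while containing almost no significant shifts. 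Two cosmetic points: the phase count should account for the possibly incomplete final phase (i.e.\ $\tilde L_T\lesssim \tilde L/\log^3 T+1$, harmless for $\tilde L\ge 2$ and handled by the stationary instance otherwise), and the degenerate regimes you defer to a ``trivial linear-regret observation'' should be spelled out, as the paper does for $L_T\in\{0,1,2\}$ and for small $V_T$.
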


To give intuition whether the optimal rate $\mathcal{O}(\tilde{L}^{1/3} T^{2/3})$ is achievable, consider an \emph{oracle algorithm} $\pi_{\rm{oracle}}$ with access to the exact times of the significant shifts $\{\tau_i\}_{i=0}^{\tilde{L}_T}$. The optimal algorithm partitions the horizon into $\tilde{L}_T$ phases and, within each phase, runs a minimax-optimal bandit algorithm (\emph{e.g.} \texttt{UCB} \citep{auer2002finite}) using a discretization of $(\tau_{i+1} - \tau_i)^{1/3}$ arms.
This oracle algorithm suffers regret at most $c_{\rm{oracle}}(\tau_{i+1} - \tau_i)^{2/3}$ per phase, where $c_{\rm{oracle}} > 0$ is a numerical constant, yielding
\begin{align*}
\E{}{R(\pi_{\rm{oracle}}, T)} \leq c_{\rm{oracle}} \sum_{i=0}^{\tilde{L}_T} (\tau_{i+1} - \tau_i)^{2/3} \leq c_{\rm{oracle}} \tilde{L}_T^{1/3} T^{2/3} \,.
\end{align*}

However, this oracle relies on complete knowledge of $\tau_i$'s. In contrast, our algorithm \algo is \emph{fully adaptive}. \cref{th:upper_bound} proves that it achieves minimax optimal regret, up to poly-logarithmic factors.

\begin{theorem}[\textbf{Adaptive upper bound on dynamic regret}]\label{th:upper_bound}
Let $\bm{\mu}\in\mathcal{E}_{\mathrm{Lip}}(T, \tilde{L}_T)$ be a Lipschitz bandit environment with $\tilde{L}_T$ unknown significant shifts $\{\tau_i(\bm{\mu})\}_{i=0}^{\tilde{L}_T}$. There exists a numerical constant $\bar{c}_1>0$ such that the expected dynamic regret of \algo satisfies
\begin{align*}
    \E{\bm{\mu}}{R(\pi_{\algo}, T)}\leq \bar{c}_1\log^2(T)\sum_{i=1}^{\tilde{L}_T}\left(\tau_i(\bm{\mu})-\tau_{i-1}(\bm{\mu})\right)^{2/3}\,.
\end{align*}
In particular, this yields the worst-case upper bound
\begin{align*}
\E{\bm{\mu}}{R(\pi_{\algo}, T)}\leq \tilde{O}(\tilde{L}_T^{1/3}T^{2/3})\,.  
\end{align*}
Therefore, \algo is minimax optimal for the number significant shifts $\tilde{L}_T$ up to poly-log factors.
\end{theorem}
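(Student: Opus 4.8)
The strategy is to adapt the significant-shift analysis of \citet{suk2022tracking} to the hierarchical dyadic discretization, organising the argument around a high-probability good event, a bound on the number of episodes, and a per-phase regret bound. First I would fix a good event $\mathcal{G}$ on which every importance-weighted estimate concentrates. Since $\hat\mu_t(B)$ in \eqref{eq:IPS_mean_estimate} is unbiased with conditional variance of order $1/\mathbb{P}(x_t\in B\mid\mathcal{F}_{t-1})=\mathcal{O}(2^d)$ under the near-uniform sampling of \cref{alg:sampling_scheme}, a Freedman-type martingale inequality applied to the partial sums $\sum_t\hat\delta_t(B',B)$, together with a union bound over all bins, depths $d\le\log_2 T$ and interval endpoints in $\llbracket T\rrbracket$, shows that with probability $1-\mathcal{O}(1/T)$ the empirical aggregate gaps deviate from the true ones by at most the $\log(T)\sqrt{(s_2-s_1)2^d}$ term appearing in \cref{def:significant_regret_bin} and \eqref{eq:star}. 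The residual event $\mathcal{G}^c$ contributes only $\mathcal{O}(1)$ to the regret, so I work on $\mathcal{G}$. There the eviction rule \eqref{eq:star} is \emph{sound} (an evicted bin truly incurs significant regret in the sense of \cref{def:significant_regret_bin}, hence by \cref{prop:from_bin_to_arm} so does every arm inside it) and \emph{complete} (a bin with large true aggregate gap is evicted within the expected delay).

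Next I would bound the number of episodes by $\tilde{L}_T+1$ and, more importantly, show that a full restart never occurs strictly inside a significant phase. An episode $[t_l,e_l[$ ends only when $\cBM$ empties, i.e.\ when \emph{every} depth-$m$ bin is evicted; by soundness and \cref{prop:from_bin_to_arm} this forces every arm to incur significant regret over a subinterval of $[t_l,e_l[$. If $t_l\in[\tau_{i-1},\tau_i[$, that subinterval also lies in $[\tau_{i-1},e_l[$, so the minimality of $\tau_i$ in \cref{def:significant_shift} forces $e_l\ge\tau_i$: each detection crosses a true shift, giving at most $\tilde{L}_T+1$ episodes and, in particular, no restart strictly before $\tau_i$. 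Conversely, before $\tau_i$ some arm is still safe (\cref{def:significant_shift}); by the contrapositive of \cref{prop:from_bin_to_arm} and soundness, the bin containing it is never evicted, so a near-optimal region survives throughout the phase and the tracked regret stays controlled.

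The third and most delicate step is the per-phase bound $\tilde{\mathcal{O}}((\tau_i-\tau_{i-1})^{2/3})$. Within a block of length $n=8^m$ at depth $m$ with $2^m=n^{1/3}$ bins, the base successive-elimination regret splits into a discretization term $n\cdot 2^{-m}=n^{2/3}$ (by \cref{assumption:lipschitz} the bin-versus-arm error is at most the bin width) and an elimination term $\sqrt{2^m n}=n^{2/3}$; summing the doubling blocks $8^1,\dots,8^M$ is dominated by the last and remains $\tilde{\mathcal{O}}(n^{2/3})$. The crux is the replay cost: a depth-$d$ replay lasts $8^d$ rounds and costs $\tilde{\mathcal{O}}(4^d)$ relative to the global optimum (discretization plus elimination at scale $d$), while by \eqref{eq:proba_of_replay} the expected number of triggered depth-$d$ replays over the block is $\sum_{j\ge1}\sqrt{8^d/(j\,8^d)}=\mathcal{O}(\sqrt{n/8^d})$; the product is $\tilde{\mathcal{O}}(\sqrt{n}\,4^d/\sqrt{8^d})=\tilde{\mathcal{O}}(\sqrt{n\,2^d})$, and summing the geometric series over $d\le m$ yields $\tilde{\mathcal{O}}(\sqrt{n\,2^m})=\tilde{\mathcal{O}}(n^{2/3})$. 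Thus the replay probabilities in \eqref{eq:proba_of_replay} are calibrated precisely so that the forced multi-scale exploration matches the base rate. I expect this to be the main obstacle: one must control the \emph{overlapping} replays across depths and the coupling induced by the top-down sampler of \cref{alg:sampling_scheme}, verifying both that deactivating one depth does not bias the estimators at other active depths and that the surviving safe bin is consistently reachable along the sampled path.

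Finally I would sum the per-phase bounds. Using the concavity of $x\mapsto x^{2/3}$ together with $\sum_i(\tau_i-\tau_{i-1})=T$, Hölder's inequality gives $\sum_{i=1}^{\tilde{L}_T}(\tau_i-\tau_{i-1})^{2/3}\le \tilde{L}_T^{1/3}T^{2/3}$, so the adaptive bound $\bar c_1\log^2(T)\sum_i(\tau_i-\tau_{i-1})^{2/3}$ implies the worst-case guarantee $\tilde{\mathcal{O}}(\tilde{L}_T^{1/3}T^{2/3})$.
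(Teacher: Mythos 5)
There is a genuine gap, and it sits exactly at the technical heart of the theorem. Your plan covers concentration (Freedman plus union bounds), soundness of eviction, the episodes-to-shifts correspondence, the replay-cost accounting, and the final H\"older step — all of which do match the paper's architecture. What it never actually bounds is the regret that bins still in the MASTER set (in particular the last surviving bin $\Blast_{l,m}$ of a block) accumulate \emph{relative to the safe benchmark while a change is still undetected}. You dispose of this by asserting, as part of the good event, that the eviction rule is ``complete (a bin with large true aggregate gap is evicted within the expected delay).'' Concentration cannot deliver that: the test \eqref{eq:star} can only fire while a replay at a suitable depth is active over a suitable window, and whether such a replay exists is a property of the random schedule \eqref{eq:proba_of_replay}, not of the estimates. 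The paper needs a second, separate high-probability event (\cref{prop:a_replay_occurs}, proved by a Chernoff bound over the Bernoulli triggers $R_{s,d}$), combined with the decomposition of each block into \emph{bad segments} (\cref{def:bad_segment}, \cref{prop:bad_segments}): if the last surviving bin accumulated more than order $\log^2(T)4^m$ relative regret against the bin of the last safe arm $x_i^\sharp$, then with high probability a replay of depth $d_{i,j}$ with $8^{d_{i,j}+1}\leq$ segment length $\leq 8^{d_{i,j}+2}$ starts early enough in some bad segment and evicts it — contradicting its survival. This is term \textcolor{blue}{(B)} of the paper's proof (\cref{prop:upper_bound_term_B}); without it, your ``elimination term $\sqrt{2^m n}$'' is a stationary-setting guarantee that simply does not hold here. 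Relatedly, your per-replay cost ``$\tilde{\mathcal{O}}(4^d)$ relative to the global optimum'' is not available: within a phase the optimum can still drift arbitrarily (only \emph{significant} regret of some arm is excluded), so the paper's decomposition uses the last safe arm $x_i^\sharp$ as benchmark, bounds $\sum_t \delta_t(x_t^\sharp)$ by the significance threshold itself, and only then compares played bins to $\Blast_{l,m}$ and $\Blast_{l,m}$ to $B^\sharp_{t,m}$.

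A secondary but real error: your claim that $\hat\mu_t(B)$ is unbiased is false. Once sub-bins of $B$ at deeper active depths are evicted, the conditional law of $x_t$ given $x_t\in B$ is no longer uniform on $B$, so $\mathbb{E}_{t-1}[\hat\mu_t(B)]\neq\mu_t(B)$ in general. The paper controls this bias via Lipschitzness ($\leq 2/2^d$ per round, \cref{prop:sampling_bias}) and must carry the resulting term $4(s_2-s_1)/2^d$ through the eviction threshold \eqref{eq:star} and every subsequent inequality; at the maximal replay length $s_2-s_1=8^d$ this bias is of the same order $4^d$ as the stochastic deviation, so it cannot be absorbed into constants. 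This part is fixable without changing the rate, but as written your concentration event — and hence your soundness claim — is not established. The pieces you do have (the union-bound concentration, the argument that an episode's end forces $e_l\geq\tau_i$ as in \cref{prop:relating_phase_and_episode}, the computation $\sum_{d<m}\sqrt{8^{m-d}}\,4^d=\mathcal{O}(4^m)$, and the concluding H\"older inequality) are correct and coincide with the paper's.
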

Since $\tilde{L}_T \leq S_T \leq L_T$, \cref{th:upper_bound} achieves minimax optimal rates of $\tilde{\mathcal{O}}(S_T^{1/3} T^{2/3})$ and $\tilde{\mathcal{O}}(L_T^{1/3} T^{2/3})$, without requiring prior knowledge of $S_T$ or $L_T$, respectively. The bound provided by \cref{th:upper_bound} can be significantly sharper when $\tilde{L}_T \ll S_T$. The following result further shows that \cref{th:upper_bound} also recovers the optimal rate in terms of the total variation $V_T$. The proof is deferred to \cref{app:sec:proof_th_upper_bound}.

\begin{remark}[Beyond $1$-Lipschitz bandits]
Minimax optimality of \cref{th:upper_bound} extends beyond Lipschitz bandits with Lipschitz constant $1$. 
In particular, for $(\kappa,\beta)$-Hölder bandits we can show that the minimax optimal rate (up to polylogarithmic factors) can be adapted as
$\E{}{R(\pi_{\texttt{MBDE}}, T)}\leq \widetilde{\mathcal{O}}\Big(T^{\frac{1+\beta}{1+2\beta}}\Tilde{L}_T^{\frac{\beta}{1+2\beta}}\kappa^{\frac{1}{1+2\beta}}\Big)$, where the minimax optimality is with respect to $T$, $\tilde{L}_T$ and Hölder constants $\kappa$ and $\beta)$ jointly.
The necessary modifications to our setting and algorithm are detailed in \cref{app:sec:extension_arbitrary_spaces}.  
\end{remark}

\begin{corollary}[\textbf{Regret bound in terms of total variation }$V_T$]\label{cor:upper_bound_total_variation}
Let $\bm{\mu}\in\mathcal{E}_{\mathrm{Lip}}(T, V_T)$ be any Lipschitz bandit environment with total variation $V_T$. There exists a numerical constant $\bar{c}_2>0$ such that the expected dynamic regret of \algo satisfies:
\begin{align*}
    \E{\bm{\mu}}{R(\pi_{\algo}, T)}\leq \bar{c}_2\log^2(T)\left( T^{2/3} + V_T^{1/4}T^{3/4}\right)\,.
\end{align*}
Therefore, \algo is minimax optimal for the total variation $V_T$ up to poly-log factors.
\end{corollary}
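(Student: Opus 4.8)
The plan is to deduce Corollary~\ref{cor:upper_bound_total_variation} from the phase-length bound of Theorem~\ref{th:upper_bound}, namely $\E{}{R(\pi_{\algo},T)}\le \bar c_1\log^2(T)\sum_{i=1}^{\tilde L_T}\Delta_i^{2/3}$ with $\Delta_i:=\tau_i-\tau_{i-1}$, by converting the dependence on the phase lengths into a dependence on $V_T$. The whole reduction rests on a single quantitative estimate: that a \emph{completed} significant phase must accumulate a non-negligible amount of total variation. Concretely, writing $V_{[a,b)}:=\sum_{t=a}^{b-1}\max_{x}|\mu_{t+1}(x)-\mu_t(x)|$, I would prove the key lemma that for every $i\le \tilde L_T-1$ (i.e.\ every phase that ends in a genuine shift $\tau_i$),
\begin{equation*}
V_{[\tau_{i-1},\tau_i)}\ \ge\ \tfrac12\log(T)\,\Delta_i^{-1/3}\,.
\end{equation*}

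To establish this, I would let $x_0:=\arg\max_x\mu_{\tau_{i-1}}(x)$ be the arm optimal at the start of the phase. By Definition~\ref{def:significant_shift}, at the shift $\tau_i$ every arm is unsafe, so in particular $x_0$ incurs significant regret on some subinterval $I=[s_1,s_2]\subseteq[\tau_{i-1},\tau_i]$ of length $\ell:=s_2-s_1\le\Delta_i$, giving $\sum_{t\in I}\delta_t(x_0)\ge \log(T)\,\ell^{2/3}$. The crucial step is to upper bound this same quantity by the variation: for each $t\in I$ I would telescope through the reference time $\tau_{i-1}$,
\begin{equation*}
\mu_t(x_t^*)-\mu_t(x_0)=\big[\mu_t(x_t^*)-\mu_{\tau_{i-1}}(x_t^*)\big]+\big[\mu_{\tau_{i-1}}(x_t^*)-\mu_{\tau_{i-1}}(x_0)\big]+\big[\mu_{\tau_{i-1}}(x_0)-\mu_t(x_0)\big]\,,
\end{equation*}
where $x_t^*$ denotes an instantaneous optimizer. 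The middle bracket is $\le 0$ by optimality of $x_0$ at $\tau_{i-1}$, while the outer two brackets are each at most $V_{[\tau_{i-1},t)}\le V_{[\tau_{i-1},\tau_i)}$ by the triangle inequality over consecutive rounds. Summing over $t\in I$ yields $\log(T)\ell^{2/3}\le\sum_{t\in I}\delta_t(x_0)\le 2\ell\,V_{[\tau_{i-1},\tau_i)}$, hence $V_{[\tau_{i-1},\tau_i)}\ge\tfrac12\log(T)\ell^{-1/3}\ge\tfrac12\log(T)\Delta_i^{-1/3}$, using $\ell\le\Delta_i$ (so $\ell^{-1/3}\ge\Delta_i^{-1/3}$).

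With the lemma in hand, I would sum over phases. Since the per-phase variations of disjoint phases add up to at most the global variation, $\sum_{i=1}^{\tilde L_T-1}V_{[\tau_{i-1},\tau_i)}\le V_T$, the lemma gives $\sum_{i=1}^{\tilde L_T-1}\Delta_i^{-1/3}\le 2V_T/\log(T)$. I then split off the last (possibly incomplete) phase, whose contribution is trivially $\Delta_{\tilde L_T}^{2/3}\le T^{2/3}$, and bound the remaining sum by Hölder's inequality: writing $\Delta_i^{2/3}=\Delta_i^{3/4}\cdot\Delta_i^{-1/12}$ and applying Hölder with conjugate exponents $(4/3,4)$,
\begin{equation*}
\sum_{i=1}^{\tilde L_T-1}\Delta_i^{2/3}\ \le\ \Big(\sum_{i=1}^{\tilde L_T-1}\Delta_i\Big)^{3/4}\Big(\sum_{i=1}^{\tilde L_T-1}\Delta_i^{-1/3}\Big)^{1/4}\ \le\ T^{3/4}\big(2V_T/\log(T)\big)^{1/4}\,.
\end{equation*}
Combining the two contributions and multiplying by $\bar c_1\log^2(T)$ absorbs the stray $\log^{-1/4}(T)$ factor into the constant and yields $\E{}{R(\pi_{\algo},T)}\le\bar c_2\log^2(T)\big(T^{2/3}+V_T^{1/4}T^{3/4}\big)$, as claimed; the exponent pair $(1/4,3/4)$ is exactly what the decomposition $\tfrac23=\tfrac34\cdot 1+\tfrac14\cdot(-\tfrac13)$ forces, matching the lower bound of Theorem~\ref{th:lower_bound}. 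I expect the main obstacle to be the key lemma rather than the concluding optimization: one must choose the reference arm correctly (the start-of-phase optimizer, so that the cross term vanishes), argue that the \emph{existence} of an unsafe subinterval for this particular arm suffices, and verify that the inequality $\ell\le\Delta_i$ is used in the direction that lower bounds the variation by $\Delta_i^{-1/3}$ rather than $\ell^{-1/3}$. Care must also be taken that only the $\tilde L_T-1$ completed phases satisfy the variation lower bound, which is precisely why the additive $T^{2/3}$ term (from the trailing phase) appears in the final rate.
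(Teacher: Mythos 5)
Your proposal is correct and takes essentially the same approach as the paper's own proof: both rest on the key per-phase estimate $(\tau_{i+1}-\tau_i)^{-1/3}\le 2V_{[\tau_i,\tau_{i+1})}/\log(T)$, obtained from the fact that an arm optimal at a reference time of the phase must nonetheless become unsafe by the shift, followed by splitting off the final (possibly incomplete) phase, which contributes $T^{2/3}$, and applying H\"older's inequality with conjugate exponents $(4/3,4)$. The only cosmetic differences are that the paper anchors at the end of the phase (the arm optimal at $\tau_{i+1}$) and extracts a single bad round by a pigeonhole argument, whereas you anchor at the start of the phase and sum the pointwise bound $\delta_t(x_0)\le 2V_{[\tau_{i-1},\tau_i)}$ over the unsafe interval; both yield the same lemma up to constants.
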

\section{Proof intuition of  \texorpdfstring{\cref{th:upper_bound}}{Lg}}\label{sec:proof_sketch}
A natural question is whether one could directly apply the \texttt{MASTER} procedure of \citet{wei2021non} using a discretization of the arm space as the base algorithm. However, such a direct application would not be possible in our setting, for the following reasons. The \texttt{MASTER} procedure is specifically designed to detect and adapt to changes in the \emph{mean reward functions} $\mu_t$, whereas our goal is to handle \emph{significant shifts}, defined in terms of \emph{cumulative reward gaps} $\sum_t \delta_t$. This distinction is crucial: the quantity $\sum_t \delta_t$ is inherently more stable than the instantaneous mean rewards, as it captures \emph{suboptimality} over time, even when the underlying reward functions change smoothly or gradually. As a result, the detection problem in our setting is more subtle and requires finer control over the accumulation of regret. Therefore, even when discretizing the arm space with replays at different depths, the analysis of \citet{wei2021non} would not achieve the desired dependence on $\tilde{L}_T$.

We further emphasize that our analysis goes beyond a straightforward adaptation of the techniques developed for the $K$-armed setting by \citet{suk2022tracking}. In their setting, the algorithm maintains a set of \emph{safe arms} and applies a variant of \emph{Successive Elimination} \citep{even2006action} by uniformly sampling from the current set and progressively eliminating arms identified as suboptimal. Roughly speaking, to detect changes of magnitude $\Delta$, they schedule replays of duration $d = K/\Delta^2$, during which all $K$ arms are re-sampled and their estimates updated. However, such a replay scheme is infeasible in our infinite-arm setting. If we were to discretize the arm space into $2^m$ bins at each replay, reliably detecting a shift of magnitude $\Delta$ would require roughly $2^m/\Delta^2$ rounds, resulting in a prohibitive regret cost. To circumvent this issue, our algorithm schedules replays at different discretization scales. By leveraging the hierarchical structure of the dyadic tree $\cT$, we update estimates of the gaps at multiple depths \emph{simultaneously}. We exploit the key observation that each sample $x_t$ provides information not only for its bin at depth $d$, but also for all its child bins at depths $d' \geq d$. Through systematic aggregation of information across these nested bins, our multi-scale estimation strategy allows the algorithm to detect significant shifts efficiently at different resolutions, while avoiding the prohibitive sample complexity associated with naive discretization.

We begin by highlighting the technical challenges that arise when estimating these cumulative gaps, in particular how they differ from estimating changes in the mean reward itself (\cref{subsec:proof_sketch_estimating_gaps}). We then present the high-level ideas for bounding the dynamic regret in two scenarios: (\emph{i}) when the process is in a \emph{stable block}, and no significant shift is detected, we show that the replays do not increases the regret excessively (\cref{subsec:proof_sketch_stable_block}); and (\emph{ii}) when a \emph{significant shift} does occur, we show that the algorithm responds appropriately (\cref{subsec:proof_sketch_unstable_block}). The proofs are provided in \cref{app:sec:proof_th_upper_bound}.

\subsection{Estimating the reward gaps}\label{subsec:proof_sketch_estimating_gaps}
The continuous arm space setting presents key technical challenges. These difficulties are specific to our framework and are not addressed in existing work on non-stationary bandits, and we believe these challenges are of independent interest and may be relevant beyond our specific setting.\\
The main challenge in this setting lies in estimating a continuous, non-stationary mean reward function without prior knowledge of the magnitude of distributional shifts, or, equivalently, the discretization level and time scale at which such shifts become detectable. Coarse discretization is essential to rapidly detect \emph{large} shifts, while finer discretization is required to capture \emph{smaller yet statistically significant changes}. This motivates the multi-scale replay framework of \texttt{MDBE} in which each scale contributes adaptively based on the magnitude of the underlying shift.

\textbf{Bias in mean reward estimation.} Our algorithm actively prunes bins at different depths, meaning that for a given bin $B \in \cT_d$, its sub-bins $B' \subset B$ at deeper depths $d' > d$ may be evicted. As a result, the empirical mean reward $\hat\mu_t(B)$ can be biased, since arms $x_t$ are only drawn from non-evicted sub-bins. Leveraging \cref{assumption:lipschitz}, we control this bias, which decays as
\begin{align}\label{eq:main_bias_control}
\forall t \in \llbracket T\rrbracket,\,\forall d\in\cD_t,\ \forall B, B'\in\cB_t(d),\quad \left|\E{t-1}{\hat\delta_t(B',B)} - \delta_t(B', B)\right| \leq \frac{4}{2^d}.
\end{align}
Thus, the bias in estimating the gap $\sum_{t=s_1}^{s_2} \hat\delta_t(B', B)$ of bins at depth $d$ scales as $(s_2 - s_1)/2^d$.  

\textbf{Concentration of mean reward estimates.} Due to the martingale property of $(\hat\delta_t(B', B))_t$, we can tightly control its deviation from its conditional expectation. Recall that our \emph{hierarchical sampling scheme} (\cref{alg:sampling_scheme}) ensures that the algorithm selects an active bin at the minimum active depth $\dmin$ and recursively samples from child bins at finer active depths. This choice guarantees that
\begin{align*}
    \forall t \in \llbracket T\rrbracket,\, \forall d \in \cD_t, \,\forall B \in \cB_t(d), \quad \mathbb{P}(x_t \in B' \condi \cF_{t-1}) \geq \frac{1}{2^d}.
\end{align*}
This ensures that all active bins at any depth $d$ are sampled with at least uniform probability, allowing us to control the variance of the inverse weighted estimates at all scales and to derive the following high-probability bound.

\begin{proposition}[\textbf{Concentration event}]\label{prop:concentration}
Let $\mathcal{E}_1$ be the following event: for all intervals $[s_1,s_2]$, depths $d \in \bigcap_{t=s_1}^{s_2} \cD_t$, and bins $B, B' \in \cB_{[s_1,s_2]}(d)$, we have
\begin{align}\label{main:concentration_control}
\left| \sum_{t=s_1}^{s_2}\hat\delta_t(B', B) - \sum_{t=s_1}^{s_2}\E{t-1}{\hat\delta_t(B', B)} \right|\leq c_1\log(T)\sqrt{(s_2 - s_1)2^{d}\vee4^d} \,,
\end{align}
where $c_1$ is a positive numerical constant. Then $\mathcal{E}_1$ holds with probability at least $1-1/T^3$.
\end{proposition}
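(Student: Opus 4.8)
The plan is to establish the uniform deviation bound via a martingale concentration inequality (Freedman/Bernstein-type for martingales) applied to each fixed triple $([s_1,s_2], d, (B,B'))$, followed by a union bound over all such triples. First I would fix an interval $[s_1,s_2]$, a depth $d$ active throughout this interval, and a pair of bins $B, B' \in \cB_{[s_1,s_2]}(d)$, and define the martingale difference sequence $X_t = \hat\delta_t(B',B) - \E{t-1}{\hat\delta_t(B',B)}$, which is adapted to the filtration $(\cF_t)_t$ and satisfies $\E{t-1}{X_t} = 0$ by construction. The partial sums $\sum_{t=s_1}^{s_2} X_t$ form exactly the quantity to be bounded, so the task reduces to controlling a martingale.

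The heart of the argument is bounding the conditional variance and the range of the increments. Since $Y_t(x_t) \in [0,1]$ and the importance-weighted estimator is $\hat\mu_t(B') = Y_t(x_t)/\mathbb{P}(x_t \in B' \condi \cF_{t-1}) \cdot \I{x_t \in B'}$, the sampling guarantee $\mathbb{P}(x_t \in B' \condi \cF_{t-1}) \geq 1/2^d$ stated just before the proposition gives both a range bound $|\hat\mu_t(B')| \leq 2^d$ and a conditional second-moment bound $\E{t-1}{\hat\mu_t(B')^2} \leq 2^d$ (the inverse-weighting contributes one factor of $2^d$ to the second moment rather than the naive $4^d$, because the indicator fires with probability at most the sampling probability). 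Hence $\E{t-1}{X_t^2} \lesssim 2^d$ and $|X_t| \lesssim 2^d$. Summing the conditional variance over the interval yields a total variance proxy of order $(s_2 - s_1)2^d$. Applying Freedman's inequality then gives a deviation bound of the form $c\log(T)\big(\sqrt{(s_2 - s_1)2^d} + 2^d\log(T)\big)$ with probability at least $1 - 1/T^{\text{const}}$; the two terms combine into the stated $\sqrt{(s_2-s_1)2^d \vee 4^d}$ because $2^d = \sqrt{4^d}$ dominates when the interval is short relative to $2^d$, while $\sqrt{(s_2-s_1)2^d}$ dominates otherwise.

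The final step is the union bound. There are at most $T^2$ choices of $[s_1,s_2]$, at most $\log_2 T$ relevant depths $d$ (since any depth exceeding $\log_2 T$ is never active within the horizon), and at most $4^d \leq T^{O(1)}$ pairs of bins at each depth; altogether the number of triples is polynomial in $T$. Choosing each individual failure probability as $1/T^{\text{large}}$ and inflating $c_1$ to absorb the resulting $\log T$ factors from the union bound yields the claimed total failure probability $1/T^3$.

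The main obstacle I anticipate is obtaining the correct interplay in the $\sqrt{(s_2-s_1)2^d \vee 4^d}$ term, i.e.\ ensuring the variance term scales as $2^d$ rather than $4^d$. This requires carefully tracking that the second moment of the importance-weighted estimator picks up only a single inverse-probability factor, which in turn relies crucially on the lower bound $\mathbb{P}(x_t \in B' \condi \cF_{t-1}) \geq 1/2^d$ being tight. A secondary subtlety is that the uniform-over-intervals statement must hold simultaneously for \emph{all} $(s_1,s_2)$, so I must be careful that the depth $d$ is required to be active on the entire interval ($d \in \bigcap_{t=s_1}^{s_2}\cD_t$) and both bins remain active throughout ($B,B' \in \cB_{[s_1,s_2]}(d)$); these restrictions are exactly what keep the sampling-probability lower bound valid at every round of the sum, and hence what make the martingale increments well-controlled.
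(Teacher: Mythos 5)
Your proof skeleton matches the paper's exactly: define the martingale differences $X_t = \hat\delta_t(B',B) - \E{t-1}{\hat\delta_t(B',B)}$, bound the increments by $O(2^d)$ and the conditional second moment by $O(2^d)$ (one inverse-probability factor cancels because the indicator fires with probability equal to the sampling probability — this is precisely the paper's variance computation), apply Freedman's inequality for martingales (the paper uses the Beygelzimer et al.\ version, its \cref{lemma:freedman}), and union bound over $O(T^2)$ intervals, $O(\log T)$ depths, and polynomially many bin pairs with $\delta$ a large negative power of $T$.

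There is, however, one genuine gap: you take the bound $\mathbb{P}(x_t \in B \mid \mathcal{F}_{t-1}) \geq 2^{-d}$ as an externally given fact, whereas in the paper establishing this bound is the first — and the only non-routine, setting-specific — step of the proof of \cref{prop:concentration}. The bound is immediate only when $d = d_{\mathrm{min}}$, where sampling is uniform over at most $2^d$ active bins. For an active depth $d \neq d_{\mathrm{min}}$, the sampled arm reaches a bin at depth $d$ only through a cascade of uniform choices among active children, so one must (i) show that every active bin at depth $d$ has a chain of active ancestors at all shallower active depths — which the paper deduces from the alignment constraint in \eqref{eq:proba_of_replay} that a replay at depth $d$ can only start at rounds $s$ with $s - \tau_{l,m} \equiv 0\,[8^d]$, so that any shallower replay is launched at the same round with no evictions yet in force, combined with the rule that evicting a bin evicts all its descendants — and (ii) run an induction down the dyadic tree using $|\mathcal{B}_t(d') \cap \mathrm{Children}(B_d, d')| \leq 2^{d'-d}$ to propagate $\mathbb{P}(x_t \in B' \mid \mathcal{F}_{t-1}) \geq 2^{-d'}$. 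Without this argument, the range and variance bounds that feed Freedman's inequality are unsupported, so the proposal is incomplete rather than wrong. A second, minor slip: you state the Freedman bound as $c\log(T)\bigl(\sqrt{(s_2-s_1)2^d} + 2^d \log(T)\bigr)$, which carries a spurious extra $\log T$ on the range term and would give a $\log^2(T)$ prefactor; the correct application gives $\sqrt{V\log(1/\delta)} + R\log(1/\delta)$ with $\delta = T^{-7}$, and both terms are absorbed into the single factor $c_1 \log(T)\sqrt{(s_2-s_1)2^d \vee 4^d}$ claimed in the proposition.
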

Unlike in classical finite-armed bandits \citep{suk2022tracking, suk2023tracking, suk2023can}, uniform sampling only holds strictly at depth $\dmin$. This subtle distinction plays a critical role in the analysis. Thanks to both the bias control \eqref{eq:main_bias_control} and the concentration control \eqref{main:concentration_control}, we can ensure that bins are only evicted when truly unsafe, despite the complexity introduced by the hierarchical sampling.

\subsection{Regret within a block without significant shift}\label{subsec:proof_sketch_stable_block}
Assume, for the sake of clarity, that no significant shift occurs during the block $[\tau_{l,m},\tau_{l,m+1}[$. When only the replay at depth $m$ is active, \emph{i.e.} $\cD_t = \{m\}$, the algorithm effectively runs a \emph{Successive Elimination} strategy at depth $m$: at each round, it selects a bin uniformly at random from the set of safe bins $\cBM$, samples an arm $x_t$ from this bin, and eliminates bins that are deemed \emph{unsafe} at this depth. In this regime, since the discretization level at depth $m$ is tuned to the block length $\tau_{l,m+1} - \tau_{l,m}$, classical results indicate that the regret incurred within the block is of order $(8^m)^{2/3} = 4^m$ (up to logarithmic factors).

It remains to control the regret from replays across different scales. A key difficulty is that such replays may overlap, making it delicate to control the cumulative regret contribution at each depth simultaneously. A crucial property leveraged in the analysis is that our importance-weighted estimator \eqref{eq:IPS_mean_estimate} allows each sampled arm $x_t$ to update the mean estimates $\hat\mu_t(B)$ for \emph{all} bins $B$ containing $x_t$. This multi-scale update property ensures that replays at any given depth $d$ contribute information \emph{not only at that level} but also propagate to finer depths $d'>d$. In particular, we leverage \cref{prop:from_bin_to_arm}, which guarantees that if a bin is identified as unsafe at \emph{any} depth $d$, then \emph{all} arms it contains are suboptimal.

Our proof strategy is to quantify the contribution of each replay at depth $d$ \emph{independently}. Since each such replay lasts at most $8^d$ rounds, and arms are always selected from the set of safe bins $\cB_t(d)$ at that depth (otherwise they would have been evicted under the concentration event $\mathcal{E}_1$), the regret contribution of each replay can be upper bounded by $(8^d)^{2/3} = 4^d$. Thanks to the well-calibrated replay scheduling probability \eqref{eq:proba_of_replay}, there are roughly $\sqrt{8^{m-d}}$ replays at depth $d$ that are scheduled during a block, so overall we can show that the aggregate regret from all replays across depths $d<m$ remains bounded by $4^m$, thus matching the regret incurred by the main replay at depth $m$.

\subsection{Detecting significant shifts}\label{subsec:proof_sketch_unstable_block}
When a significant shift $\tau_i$ occurs within a block $[\tau_{l,m},\tau_{l,m+1}[$, the primary challenge is that the shift is initially \emph{undetected}, potentially causing the algorithm to suffer large regret if it continues exploiting outdated information. In particular, such regret can accumulate if a bin that was previously evicted (due to being deemed unsafe) becomes optimal after the shift, while the algorithm keeps ignoring it. More precisely, detecting a shift of magnitude $\Delta$ at depth $d$ requires a replay of length approximately $2^d/\Delta^2$. Undetected shifts of this magnitude are tolerable at this resolution over a period of duration $D$ as long as $\Delta D \leq 4^d$. By carefully calibrating the probability of scheduling replays at each depth (see \eqref{eq:proba_of_replay}), we guarantee that \emph{replays at depth $d$ occur frequently enough} to ensure, with high probability, the timely detection of shifts of size $\Delta$ before their contribution to regret becomes significant.

\section{Conclusion}\label{sec:conclusion}
We studied the previously unexplored problem of non-stationary Lipschitz bandits. We introduced \algo, a novel algorithm that leverages a hierarchical discretization of the action space to exploit the Lipschitz structure of the evolving reward function. We established a lower bound of $\mathcal{O}(\tilde{L}_T^{1/3}T^{2/3})$ on the expected dynamic regret and showed that \algo achieves this rate \emph{adaptively}. \\
While our work is mainly theoretical, we also analyze the computational worst-case complexity of \algo in \cref{app:sec:computational_complexity} and show its empirical performance on a synthetic example in \cref{app:sec:simulations}. Developing computationally efficient algorithms with strong empirical performance that do not require prior knowledge of the non-stationarity \citep{gerogiannis2025dal,gerogiannis2410prior} remains an open problem. Future work could extend the significant shift detection framework to other structured bandit models such as convex or linear bandits.
\section*{Broader impacts}
This work is mainly theoretical and contributes to the analysis of algorithms for sequential-decision making under uncertainty. Our generic setting and algorithms have broad potential use; practitioners will therefore need to
specifically address possible social impacts with respect to the relevant application.
\subsection*{Acknowledgements}
The authors gratefully acknowledge Joe Suk for his insightful discussions and for sharing his code, which served as inspiration for our numerical experiments.\\
N.Nguyen and C.Vernade are funded by the Deutsche Forschungsgemeinschaft (DFG) under both the project 468806714 
of the Emmy Noether Programme and under Germany’s Excellence Strategy–EXC number 2064/1–Project number
390727645. Both also thank the international Max Planck Research School for Intelligent Systems (IMPRS-IS).

\newpage

\bibliography{references} 
\newpage
\section*{NeurIPS Paper Checklist}
\begin{enumerate}

\item {\bf Claims}
    \item[] Question: Do the main claims made in the abstract and introduction accurately reflect the paper's contributions and scope?
    \item[] Answer: \answerYes{} 
    \item[] Justification: Our algorithm is provided in \cref{sec:algorithm} and its corresponding bounds are provided in \cref{sec:main_results}.
    \item[] Guidelines:
    \begin{itemize}
        \item The answer NA means that the abstract and introduction do not include the claims made in the paper.
        \item The abstract and/or introduction should clearly state the claims made, including the contributions made in the paper and important assumptions and limitations. A No or NA answer to this question will not be perceived well by the reviewers. 
        \item The claims made should match theoretical and experimental results, and reflect how much the results can be expected to generalize to other settings. 
        \item It is fine to include aspirational goals as motivation as long as it is clear that these goals are not attained by the paper. 
    \end{itemize}

\item {\bf Limitations}
    \item[] Question: Does the paper discuss the limitations of the work performed by the authors?
    \item[] Answer: \answerYes{} 
    \item[] Justification: We acknowledge in \cref{sec:conclusion} that our work is mainly theoretical and does not consider numerical performances in depth.
    \item[] Guidelines:
    \begin{itemize}
        \item The answer NA means that the paper has no limitation while the answer No means that the paper has limitations, but those are not discussed in the paper. 
        \item The authors are encouraged to create a separate "Limitations" section in their paper.
        \item The paper should point out any strong assumptions and how robust the results are to violations of these assumptions (e.g., independence assumptions, noiseless settings, model well-specification, asymptotic approximations only holding locally). The authors should reflect on how these assumptions might be violated in practice and what the implications would be.
        \item The authors should reflect on the scope of the claims made, e.g., if the approach was only tested on a few datasets or with a few runs. In general, empirical results often depend on implicit assumptions, which should be articulated.
        \item The authors should reflect on the factors that influence the performance of the approach. For example, a facial recognition algorithm may perform poorly when image resolution is low or images are taken in low lighting. Or a speech-to-text system might not be used reliably to provide closed captions for online lectures because it fails to handle technical jargon.
        \item The authors should discuss the computational efficiency of the proposed algorithms and how they scale with dataset size.
        \item If applicable, the authors should discuss possible limitations of their approach to address problems of privacy and fairness.
        \item While the authors might fear that complete honesty about limitations might be used by reviewers as grounds for rejection, a worse outcome might be that reviewers discover limitations that aren't acknowledged in the paper. The authors should use their best judgment and recognize that individual actions in favor of transparency play an important role in developing norms that preserve the integrity of the community. Reviewers will be specifically instructed to not penalize honesty concerning limitations.
    \end{itemize}

\item {\bf Theory assumptions and proofs}
    \item[] Question: For each theoretical result, does the paper provide the full set of assumptions and a complete (and correct) proof?
    \item[] Answer: \answerYes{} 
    \item[] Justification: Theoretical results are stated in \cref{sec:main_results}, and all complete proofs can be found in Appendix.
    \item[] Guidelines:
    \begin{itemize}
        \item The answer NA means that the paper does not include theoretical results. 
        \item All the theorems, formulas, and proofs in the paper should be numbered and cross-referenced.
        \item All assumptions should be clearly stated or referenced in the statement of any theorems.
        \item The proofs can either appear in the main paper or the supplemental material, but if they appear in the supplemental material, the authors are encouraged to provide a short proof sketch to provide intuition. 
        \item Inversely, any informal proof provided in the core of the paper should be complemented by formal proofs provided in appendix or supplemental material.
        \item Theorems and Lemmas that the proof relies upon should be properly referenced. 
    \end{itemize}

    \item {\bf Experimental result reproducibility}
    \item[] Question: Does the paper fully disclose all the information needed to reproduce the main experimental results of the paper to the extent that it affects the main claims and/or conclusions of the paper (regardless of whether the code and data are provided or not)?
    \item[] Answer: \answerYes{} 
    \item[] Justification: The code is provided in a Github repository.
    \item[] Guidelines:
    \begin{itemize}
        \item The answer NA means that the paper does not include experiments.
        \item If the paper includes experiments, a No answer to this question will not be perceived well by the reviewers: Making the paper reproducible is important, regardless of whether the code and data are provided or not.
        \item If the contribution is a dataset and/or model, the authors should describe the steps taken to make their results reproducible or verifiable. 
        \item Depending on the contribution, reproducibility can be accomplished in various ways. For example, if the contribution is a novel architecture, describing the architecture fully might suffice, or if the contribution is a specific model and empirical evaluation, it may be necessary to either make it possible for others to replicate the model with the same dataset, or provide access to the model. In general. releasing code and data is often one good way to accomplish this, but reproducibility can also be provided via detailed instructions for how to replicate the results, access to a hosted model (e.g., in the case of a large language model), releasing of a model checkpoint, or other means that are appropriate to the research performed.
        \item While NeurIPS does not require releasing code, the conference does require all submissions to provide some reasonable avenue for reproducibility, which may depend on the nature of the contribution. For example
        \begin{enumerate}
            \item If the contribution is primarily a new algorithm, the paper should make it clear how to reproduce that algorithm.
            \item If the contribution is primarily a new model architecture, the paper should describe the architecture clearly and fully.
            \item If the contribution is a new model (e.g., a large language model), then there should either be a way to access this model for reproducing the results or a way to reproduce the model (e.g., with an open-source dataset or instructions for how to construct the dataset).
            \item We recognize that reproducibility may be tricky in some cases, in which case authors are welcome to describe the particular way they provide for reproducibility. In the case of closed-source models, it may be that access to the model is limited in some way (e.g., to registered users), but it should be possible for other researchers to have some path to reproducing or verifying the results.
        \end{enumerate}
    \end{itemize}

\item {\bf Open access to data and code}
    \item[] Question: Does the paper provide open access to the data and code, with sufficient instructions to faithfully reproduce the main experimental results, as described in supplemental material?
    \item[] Answer: \answerYes{} 
    \item[] Justification: The code is available at \url{https://github.com/nguyenicolas/NS_Lipschitz_Bandits}.
    \item[] Guidelines:
    \begin{itemize}
        \item The answer NA means that paper does not include experiments requiring code.
        \item Please see the NeurIPS code and data submission guidelines (\url{https://nips.cc/public/guides/CodeSubmissionPolicy}) for more details.
        \item While we encourage the release of code and data, we understand that this might not be possible, so “No” is an acceptable answer. Papers cannot be rejected simply for not including code, unless this is central to the contribution (e.g., for a new open-source benchmark).
        \item The instructions should contain the exact command and environment needed to run to reproduce the results. See the NeurIPS code and data submission guidelines (\url{https://nips.cc/public/guides/CodeSubmissionPolicy}) for more details.
        \item The authors should provide instructions on data access and preparation, including how to access the raw data, preprocessed data, intermediate data, and generated data, etc.
        \item The authors should provide scripts to reproduce all experimental results for the new proposed method and baselines. If only a subset of experiments are reproducible, they should state which ones are omitted from the script and why.
        \item At submission time, to preserve anonymity, the authors should release anonymized versions (if applicable).
        \item Providing as much information as possible in supplemental material (appended to the paper) is recommended, but including URLs to data and code is permitted.
    \end{itemize}

\item {\bf Experimental setting/details}
    \item[] Question: Does the paper specify all the training and test details (e.g., data splits, hyperparameters, how they were chosen, type of optimizer, etc.) necessary to understand the results?
    \item[] Answer: \answerYes{} 
    \item[] Justification: The details are available in \cref{app:sec:simulations}.
    \item[] Guidelines:
    \begin{itemize}
        \item The answer NA means that the paper does not include experiments.
        \item The experimental setting should be presented in the core of the paper to a level of detail that is necessary to appreciate the results and make sense of them.
        \item The full details can be provided either with the code, in appendix, or as supplemental material.
    \end{itemize}

\item {\bf Experiment statistical significance}
    \item[] Question: Does the paper report error bars suitably and correctly defined or other appropriate information about the statistical significance of the experiments?
    \item[] Answer: \answerYes{} 
    \item[] Error bars can be found in \cref{app:sec:simulations}.
    \item[] Guidelines:
    \begin{itemize}
        \item The answer NA means that the paper does not include experiments.
        \item The authors should answer "Yes" if the results are accompanied by error bars, confidence intervals, or statistical significance tests, at least for the experiments that support the main claims of the paper.
        \item The factors of variability that the error bars are capturing should be clearly stated (for example, train/test split, initialization, random drawing of some parameter, or overall run with given experimental conditions).
        \item The method for calculating the error bars should be explained (closed form formula, call to a library function, bootstrap, etc.)
        \item The assumptions made should be given (e.g., Normally distributed errors).
        \item It should be clear whether the error bar is the standard deviation or the standard error of the mean.
        \item It is OK to report 1-sigma error bars, but one should state it. The authors should preferably report a 2-sigma error bar than state that they have a 96\% CI, if the hypothesis of Normality of errors is not verified.
        \item For asymmetric distributions, the authors should be careful not to show in tables or figures symmetric error bars that would yield results that are out of range (e.g. negative error rates).
        \item If error bars are reported in tables or plots, The authors should explain in the text how they were calculated and reference the corresponding figures or tables in the text.
    \end{itemize}

\item {\bf Experiments compute resources}
    \item[] Question: For each experiment, does the paper provide sufficient information on the computer resources (type of compute workers, memory, time of execution) needed to reproduce the experiments?
    \item[] Answer: \answerYes{} 
    \item[] Justification: No particular computer resource is needed to run the experiments.
    \item[] Guidelines:
    \begin{itemize}
        \item The answer NA means that the paper does not include experiments.
        \item The paper should indicate the type of compute workers CPU or GPU, internal cluster, or cloud provider, including relevant memory and storage.
        \item The paper should provide the amount of compute required for each of the individual experimental runs as well as estimate the total compute. 
        \item The paper should disclose whether the full research project required more compute than the experiments reported in the paper (e.g., preliminary or failed experiments that didn't make it into the paper). 
    \end{itemize}
    
\item {\bf Code of ethics}
    \item[] Question: Does the research conducted in the paper conform, in every respect, with the NeurIPS Code of Ethics \url{https://neurips.cc/public/EthicsGuidelines}?
    \item[] Answer: \answerYes{} 
    \item[] Justification: We have reviewed the NeurIPS Code of Ethics and confirm that our paper adheres to its principles.
    \item[] Guidelines:
    \begin{itemize}
        \item The answer NA means that the authors have not reviewed the NeurIPS Code of Ethics.
        \item If the authors answer No, they should explain the special circumstances that require a deviation from the Code of Ethics.
        \item The authors should make sure to preserve anonymity (e.g., if there is a special consideration due to laws or regulations in their jurisdiction).
    \end{itemize}

\item {\bf Broader impacts}
    \item[] Question: Does the paper discuss both potential positive societal impacts and negative societal impacts of the work performed?
    \item[] Answer: \answerYes{} 
    \item[] Justification: Societal impacts of our work are discussed in the broader impact section.
    \item[] Guidelines:
    \begin{itemize}
        \item The answer NA means that there is no societal impact of the work performed.
        \item If the authors answer NA or No, they should explain why their work has no societal impact or why the paper does not address societal impact.
        \item Examples of negative societal impacts include potential malicious or unintended uses (e.g., disinformation, generating fake profiles, surveillance), fairness considerations (e.g., deployment of technologies that could make decisions that unfairly impact specific groups), privacy considerations, and security considerations.
        \item The conference expects that many papers will be foundational research and not tied to particular applications, let alone deployments. However, if there is a direct path to any negative applications, the authors should point it out. For example, it is legitimate to point out that an improvement in the quality of generative models could be used to generate deepfakes for disinformation. On the other hand, it is not needed to point out that a generic algorithm for optimizing neural networks could enable people to train models that generate Deepfakes faster.
        \item The authors should consider possible harms that could arise when the technology is being used as intended and functioning correctly, harms that could arise when the technology is being used as intended but gives incorrect results, and harms following from (intentional or unintentional) misuse of the technology.
        \item If there are negative societal impacts, the authors could also discuss possible mitigation strategies (e.g., gated release of models, providing defenses in addition to attacks, mechanisms for monitoring misuse, mechanisms to monitor how a system learns from feedback over time, improving the efficiency and accessibility of ML).
    \end{itemize}
    
\item {\bf Safeguards}
    \item[] Question: Does the paper describe safeguards that have been put in place for responsible release of data or models that have a high risk for misuse (e.g., pretrained language models, image generators, or scraped datasets)?
    \item[] Answer: \answerNA{} 
    \item[] Justification: Our paper poses no such risks.
    \item[] Guidelines:
    \begin{itemize}
        \item The answer NA means that the paper poses no such risks.
        \item Released models that have a high risk for misuse or dual-use should be released with necessary safeguards to allow for controlled use of the model, for example by requiring that users adhere to usage guidelines or restrictions to access the model or implementing safety filters. 
        \item Datasets that have been scraped from the Internet could pose safety risks. The authors should describe how they avoided releasing unsafe images.
        \item We recognize that providing effective safeguards is challenging, and many papers do not require this, but we encourage authors to take this into account and make a best faith effort.
    \end{itemize}

\item {\bf Licenses for existing assets}
    \item[] Question: Are the creators or original owners of assets (e.g., code, data, models), used in the paper, properly credited and are the license and terms of use explicitly mentioned and properly respected?
    \item[] Answer: \answerNA{} 
    \item[] Justification: Our paper does not use existing assets.
    \item[] Guidelines:
    \begin{itemize}
        \item The answer NA means that the paper does not use existing assets.
        \item The authors should cite the original paper that produced the code package or dataset.
        \item The authors should state which version of the asset is used and, if possible, include a URL.
        \item The name of the license (e.g., CC-BY 4.0) should be included for each asset.
        \item For scraped data from a particular source (e.g., website), the copyright and terms of service of that source should be provided.
        \item If assets are released, the license, copyright information, and terms of use in the package should be provided. For popular datasets, \url{paperswithcode.com/datasets} has curated licenses for some datasets. Their licensing guide can help determine the license of a dataset.
        \item For existing datasets that are re-packaged, both the original license and the license of the derived asset (if it has changed) should be provided.
        \item If this information is not available online, the authors are encouraged to reach out to the asset's creators.
    \end{itemize}

\item {\bf New assets}
    \item[] Question: Are new assets introduced in the paper well documented and is the documentation provided alongside the assets?
    \item[] Answer: \answerNA{} 
    \item[] Justification: Our paper does not release new assets.
    \item[] Guidelines:
    \begin{itemize}
        \item The answer NA means that the paper does not release new assets.
        \item Researchers should communicate the details of the dataset/code/model as part of their submissions via structured templates. This includes details about training, license, limitations, etc. 
        \item The paper should discuss whether and how consent was obtained from people whose asset is used.
        \item At submission time, remember to anonymize your assets (if applicable). You can either create an anonymized URL or include an anonymized zip file.
    \end{itemize}

\item {\bf Crowdsourcing and research with human subjects}
    \item[] Question: For crowdsourcing experiments and research with human subjects, does the paper include the full text of instructions given to participants and screenshots, if applicable, as well as details about compensation (if any)? 
    \item[] Answer: \answerNA{} 
    \item[] Justification: Our paper does not involve crowdsourcing nor research with human subjects.
    \item[] Guidelines:
    \begin{itemize}
        \item The answer NA means that the paper does not involve crowdsourcing nor research with human subjects.
        \item Including this information in the supplemental material is fine, but if the main contribution of the paper involves human subjects, then as much detail as possible should be included in the main paper. 
        \item According to the NeurIPS Code of Ethics, workers involved in data collection, curation, or other labor should be paid at least the minimum wage in the country of the data collector. 
    \end{itemize}

\item {\bf Institutional review board (IRB) approvals or equivalent for research with human subjects}
    \item[] Question: Does the paper describe potential risks incurred by study participants, whether such risks were disclosed to the subjects, and whether Institutional Review Board (IRB) approvals (or an equivalent approval/review based on the requirements of your country or institution) were obtained?
    \item[] Answer: \answerNA{} 
    \item[] Justification: Our paper does not involve crowdsourcing nor research with human subjects.
    \item[] Guidelines:
    \begin{itemize}
        \item The answer NA means that the paper does not involve crowdsourcing nor research with human subjects.
        \item Depending on the country in which research is conducted, IRB approval (or equivalent) may be required for any human subjects research. If you obtained IRB approval, you should clearly state this in the paper. 
        \item We recognize that the procedures for this may vary significantly between institutions and locations, and we expect authors to adhere to the NeurIPS Code of Ethics and the guidelines for their institution. 
        \item For initial submissions, do not include any information that would break anonymity (if applicable), such as the institution conducting the review.
    \end{itemize}

\item {\bf Declaration of LLM usage}
    \item[] Question: Does the paper describe the usage of LLMs if it is an important, original, or non-standard component of the core methods in this research? Note that if the LLM is used only for writing, editing, or formatting purposes and does not impact the core methodology, scientific rigorousness, or originality of the research, declaration is not required.
    \item[] Answer: \answerNA{} 
    \item[] Justification: We only use LLM for writing purposes.
    \item[] Guidelines:
    \begin{itemize}
        \item The answer NA means that the core method development in this research does not involve LLMs as any important, original, or non-standard components.
        \item Please refer to our LLM policy (\url{https://neurips.cc/Conferences/2025/LLM}) for what should or should not be described.
    \end{itemize}

\end{enumerate}

\newpage
\appendix
\section*{Appendix}

\section{Notations}\label{app:sec:notations}
We present in \cref{tab:notations} a compilation of used notations in our paper.

\begin{table}[h!]
\centering
\begin{tabular}{@{}ll@{}}
\toprule
\textbf{Notation} & \textbf{Description} \\ \midrule
$T$ & Time horizon. \\
$L_T$ & Number of arm changes in $T$ rounds: $L_T = \sum_{t=1}^{T-1}\I{\exists x \in [0,1],\, \mu_{t+1}(x) \neq \mu_t(x)}$. \\
$S_T$ & Number of best-arm changes in $T$ rounds: $S_T = \sum_{t=1}^{T-1}\I{\argmax_{x} \mu_{t+1}(x) \neq \argmax_{x} \mu_t(x)}$. \\
$V_T$ & Total variation in $T$ rounds: $V_T = \sum_{t=1}^{T-1} \max_{x \in [0,1]} |\mu_{t+1}(x) - \mu_t(x)|$. \\
$\tilde{L}_T$ & Number of significant shifts in $T$ rounds (see \cref{def:significant_shift}). \\
$\cT$ & Dyadic tree, \emph{i.e.} hierarchical partition of $[0,1]$ (see \cref{def:dyadic_tree}). \\
$\cT_d$ & Partition of $[0,1]$ into $2^d$ bins of size $1/2^d$ each. \\
$\Par(B, d)$ & Unique parent of bin $B$ in $\cT_d$. \\
$\mathrm{Children}(B, d)$ & Children (set of bins) of bin $B$ at depth $d$. \\
$[t_l, t_{l+1}[$ & Episode $l$. An episode terminates when a significant shift is detected. \\
$[\tau_{l,m}, \tau_{l,m+1}[$ & Block $m$ within episode $l$; if uninterrupted, its duration is $\tau_{l,m+1}-\tau_{l,m}= 8^m$. \\
$M_l$ & Maximum number of blocks during one episode: $M_l = \lceil \log_8(7(t_{l+1} - t_l) + 1) - 1 \rceil$.\\
$\cD_t$ & Set of active depths at round $t$. \\
$\dmin$ & Minimum active depth at round $t$: $\dmin = \min \cD_t$. \\
$\cB_t(d)$ & Set of active bins at depth $d$ at time $t$. \\
$\cB_{[s_1, s_2]}(d)$ & Bins at depth $d$ active at all time in $[s_1, s_2]$. \\
$\cBM$ & MASTER set of block $m$: $\cBM \subseteq \cT_m$. \\
$R_{t, d}$ & Replay trigger variable: $R_{t,d}=1$ indicates a replay at depth $d$ starts at time $t$. \\
$\textsc{Replay}(t, d)$ & Replay initiated at round $t$ at depth $d$ (see \cref{def:replay_s_d}).\\
$\{B_d^{(1)},\dots,B_d^{(2^d)}\}$ & Ordered bins in $\cT_d$ (see \cref{def:bin_ordering}). \\
$\mathcal{E}_1$ & Concentration event (see \cref{prop:concentration}). \\
$\mathcal{E}_2(\tau_{l,m})$ & Event defined in \cref{prop:a_replay_occurs}. \\
$x_t$ & Arm selected at round $t$ by the algorithm. \\
$x_i^\sharp$ & Last safe arm in phase $i$ (see \cref{def:x_sharp}). \\
$x_t^\sharp$ & $x_t^\sharp = x_i^\sharp$ for $t\in[\tau_{i},\tau_{i+1}[$. \\
$\delta_t(x)$ & Instantaneous regret at round $t$ of arm $x$: $\delta_t(x) = \max_{x\in[0, 1]}\mu_t(x)-\mu_t(x)$. \\
$B_{t,d}$ & Bin selected at depth $d$: $x_t \in B_{t,d}$. \\
$B^\sharp_{i, d}$ & Bin at depth $d$ containing $x_i^\sharp$ (see \cref{def:Bsharp}). \\
$B^{\sharp}_{t, d}$ & $B_{t,d}^\sharp = B_{i,d}^\sharp$ for $t\in[\tau_i,\tau_{i+1}[$. \\
$\Blast_{l,m}$ & Last bin of block $[\tau_{l,m}, \tau_{l,m+1}[$ in $\cBM$ (see \cref{def:Blast}). \\
$\delta_t(B)$ & Instantaneous regret at round $t$ of bin $B$: if $B\in\cT_d$, $\delta_t(B) = \max_{B'\in\cT_d}\mu_t(B)-\mu_t(B)$. \\
$[s_{i,j}^m(B), s_{i,j+1}^m(B)]$ & $j^\text{th}$ bad segment of bin $B$ (see \cref{def:bad_segment}). \\
$\tilde{s}_{i,j}^m(B)$ & Approximate midpoint of bad segment $[s_{i,j}^m(B), s_{i,j+1}^m(B)]$ (see \cref{def:bad_segment}). \\
$s_{l,m}(B)$ & Bad round of bin $B$ (see \cref{def:bad_round}). \\
$M(s, d, B)$ & Last round where bin $B$ is active during $\textsc{Replay}(s, d)$ (see \cref{def:eviction_time}). \\
$\KL(\mathbb{P}, \mathbb{Q})$ & Kullback-Leibler divergence between two distributions $\mathbb{P}$ and $\mathbb{Q}$.\\
$kl(p, q)$ & Kullback-Leibler divergence between two Bernoulli with parameters $p$ and $q$.\\
\bottomrule
\end{tabular}
\caption{Summary of notations.}
\label{tab:notations}
\end{table}

\section{Full pseudo-code of \algo}\label{app:sec:pseudo_code}
\vspace{-1cm}
\begin{algorithm}[H]
\caption*{\textbf{Algorithm: }\algo: Multi-Depth Bin Elimination}
\textbf{Input.} horizon $T$, Lipschitz constant. \\
 \stepcounter{linenumber} \textbf{Init.} $l\gets 1$, $t\gets 1$\tcp*{Global initialization} 
\While{$t\leq T$}{
$\textcolor{blue}{(\bigstar)}$ $m\gets 1$, $\tau_{l,m}\gets t$\tcp*{Init first block of episode $l$} 
\If(\tcp*[f]{Check if current block has ended}){$t=\tau_{l,m}+8^m$}{
$m\gets m + 1$ \tcp*{Change block}
$\cBM\gets\cT_m$\tcp*{Reset MASTER set to full bin set at depth $m$}
$\cD_t\gets\{m\}$\tcp*{Activate only the current depth $m$}
$\tau_{l,m}\gets t$\tcp*{Update block start time}
$\mathrm{StoreActive}[m][t_{\mathrm{start}}]\gets\tau_{l,m}$\tcp*{Record start time of this replay}
\For(\tcp*[f]{Schedule replays in this block}){$s=\tau_{l,m}+1,\dots,\tau_{l,m}+8^m$}{
\For{$d=0,\dots,m-1$}{
\If{$s-\tau_{l,m}\equiv0[8^d]$}{Sample $R_{s,d}\sim\mathrm{Ber}\big(\sqrt{8^d/(s-\tau_{l,m})}\big)$;}
\Else{$R_{s,d}=0$\tcp*{No replay triggered at depth $d$ in round $s$}}
}
}
\For{$d=0,\dots,m-1$}{
$\mathrm{StoreActive}[d][t_{\mathrm{start}}]\gets\emptyset,\mathrm{StoreActive}[d][t_{\mathrm{end}}]\gets\emptyset\,$\tcp*{Initialize variable}
$\cB_t(d)\gets\emptyset$ \tcp*{Initialize set of active bins}
}
}
\For(\tcp*[f]{Check which replays are triggered in this round}){$d=0,\dots,m-1$}{
\If(\tcp*[f]{If a replay at depth $d$ is triggered}){$R_{s,d}=1$}{
$\cD_t\gets \cD_t\cup\{d\}$\tcp*{Add this depth to current active depths}
$\mathrm{StoreActive}[d][t_\mathrm{start}]\gets t$\tcp*{Store replay start time}
$\mathrm{StoreActive}[d][t_\mathrm{end}]\gets t+8^d$\tcp*{Store replay end time}
$\cB_t(d)\gets \cT_d$\tcp*{Activate all bins at this new active depth}
}
}
\For(\tcp*[f]{Check if any active replays are ending}){$d\in\cD_t$}{
\If{$\mathrm{StoreActive}[d][t_\mathrm{end}]=t$}{
$\cD_t\gets\cD_t\setminus\{d\}$\tcp*{Deactivate this depth}
$\cB_t(d)\gets\emptyset$\tcp*{Deactivate its bins}
}
}
$\dmin\gets\min\cD_t$\tcp*{Identify the minimum active depth}
$B_{\rm{parent}}\sim\mathcal{U}(\cB_t(\dmin))$\tcp*{Sample a bin from active bins at depth $\dmin$}
\For(\tcp*[f]{Go trough the active depths, from shallowest to deepest}){$d\in\mathrm{Sort}(\cD_t\setminus\{\dmin\})$}{
\If(\tcp*[f]{No active children at this depth}){$\mathrm{Children}(B_{\mathrm{parent}},d)\cap \cB_t(d)=\emptyset$}{
$x_t\sim\mathcal{U}(B_{\mathrm{parent}})$\tcp*{Sample arm uniformly from current bin}
}
\Else{
$B_{\mathrm{child}}\sim\mathcal{U}(\mathrm{Children}(B_{\mathrm{parent}},d)\cap \cB_t(d))$ \tcp*{Sample active child bin}
$B_{\mathrm{parent}}\gets B_{\mathrm{child}}$\tcp*{Move to child bin}
}
}
\For{$d\in\cD_t$}{
\For{$B\in\cB_t(d)$}{
\If{$\exists [s_1,s_2]\subseteq [\mathrm{StoreActive[d][t_{\mathrm{start}}]},t]\text{ \rm{such that} }\eqref{eq:star}$ \text{\rm{holds} for some active bin }$B'\in\cB_t(d)$}{
\For{$d'\in [\![d,m]\!]\cap \cD_t$}{$\cB_t(d')\gets\cB_t(d')\setminus\bigcup_{B'\in\cT_{d'}:B'\subseteq B}\{B'\}$ \tcp*{Evict bins at active depths}}
}
}
}
$\cBM\gets\cBM\cap\cB_t(m)$\tcp*{Update MASTER set}
\If(\tcp*[f]{A significant shift is detected}){$\cBM = \emptyset$}{
$\tau_{l,m+1}\gets t+1$;\\
$l\gets l+1$ and Restart from $\textcolor{blue}{(\bigstar)}$\tcp*{Change episode}
}
$t\gets t+1$;}
\end{algorithm}

\section{Proof of \texorpdfstring{\cref{prop:from_bin_to_arm}}{Lg}}\label{app:sec:proof_of_prop_bin_to_arm}

Let $d$ be any fixed depth, and let $B \in \cT_d$ be a bin that incurs significant regret over an interval $[s_1, s_2]$. Let $x \in B$ be any action within this bin. For each round $t$, define $B_{t,d}^* = \argmax_{B' \in \cT_d} \mu_t(B')$ as the best bin at depth $d$ at round $t$, and let $x^*_{t,d} \in B_{t,d}^*$ be any action contained in it.

To relate the regret of bin $B$ with respect to $B_{t,d}^*$ to the regret of any action $x\in B$ with respect to $x^*_{t,d}$, we discretize the bin-level regret at depth $d$ as follows:
\begin{align*}
\delta_t(B^*_{t, d}, B) &= \frac{1}{|B^*_{t, d}|}\int_{u\in B^*_{t, d}}\mu_t(u)\dint u - \frac{1}{|B|}\int_{u\in B}\mu_t(u)\dint u\\
&= \frac{1}{|B^*_{t, d}|}\int_{u\in B^*_{t, d}}(\mu_t(u) - \mu_t(x^*_{t, d}))\dint u + \mu_t(x^*_{t, d})  \\
&- \frac{1}{|B|}\int_{u\in B}(\mu_t(u)-\mu_t(x))\dint u - \mu_t(x)\\
&\leq \frac{1}{|B^*_{t, d}|}\int_{u\in B^*_{t, d}}|\mu_t(u) - \mu_t(x^*_{t, d})|\dint u + \mu_t(x^*_{t, d})  \\
&+ \frac{1}{|B|}\int_{u\in B}|\mu_t(u)-\mu_t(x)|\dint u - \mu_t(x)\\
&\leq \frac{2}{2^d}+\delta_t(x^*_{t, d}, x)\quad \text{by \cref{assumption:lipschitz}}\,.
\end{align*}

Summing over the interval $[s_1, s_2]$, we obtain:
\begin{align}
\sum_{t=s_1}^{s_2}\delta_t(B) = \sum_{t=s_1}^{s_2}\delta_t(B_{t,d}^*, B) \leq \sum_{t=s_1}^{s_2}\delta_t(x^*_{t,d}, x) + \frac{2}{2^d}(s_2 - s_1). \label{eq:discretization_trick}
\end{align}

By assumption, bin $B$ incurs significant regret over $[s_1, s_2]$; that is, from \cref{def:significant_regret_bin},
\begin{align}
\sum_{t=s_1}^{s_2}\delta_t(B) \geq 3\log(T)\sqrt{(s_2 - s_1) 2^d}. \label{eq:temp_bin}
\end{align}

Substituting \eqref{eq:temp_bin} into \eqref{eq:discretization_trick}, we get a lower bound on the cumulative regret of action $x$ with respect to $x_{d,t}^*$:
\begin{align*}
\sum_{t=s_1}^{s_2}\delta_t(x^*_{t,d}, x) &\geq 3 \log(T)\sqrt{(s_2 - s_1) 2^d} - \frac{2}{2^d}(s_2 - s_1) \\
&= (s_2 - s_1)^{2/3} \left(3\log(T)\sqrt{\frac{2^d}{(s_2 - s_1)^{1/3}}} - 2\frac{(s_2 - s_1)^{1/3}}{2^d} \right) \\
&\geq (s_2 - s_1)^{2/3} \left(3\log(T)\sqrt{\frac{2^d}{(s_2 - s_1)^{1/3}}} - 2\log(T)\frac{(s_2 - s_1)^{1/3}}{2^d} \right)\,,
\end{align*}
for horizon $T$ satisfying $\log(T)\geq 1$.

Now, since $s_2 - s_1 \leq 8^d$, it follows that
\[
\frac{2^d}{(s_2 - s_1)^{1/3}} > 1,
\]
which implies the above expression is at least:
\[
\log(T) (s_2 - s_1)^{2/3}.
\]

Therefore
\begin{align*}
\sum_{t=s_1}^{s_2}\delta_t(B) \geq 3\log(T)\sqrt{(s_2 - s_1) 2^d}  \implies  \log(T)(s_2 - s_1)^{2/3} \leq \sum_{t=s_1}^{s_2}\delta_t(x^*_{t,d}, x) \leq \sum_{t=s_1}^{s_2}\delta_t(x),
\end{align*}

\emph{i.e.} action $x$ incurs significant regret on interval $[s_1, s_2]$, as per \cref{def:significant_regret_arm}.

\section{Proof of the upper bound of \algo (\texorpdfstring{\cref{th:upper_bound}}{Lg})}\label{app:sec:proof_th_upper_bound}
Before giving the proof, we introduce a few definitions and notations that will be useful. A full summary of notations used in the proof can be found in \cref{app:sec:notations}.

\begin{definition}[\textbf{Bin ordering}]\label{def:bin_ordering}
For each depth $d\in\Nat$, we partition the bins of $\cT_d$ by ordering them as $\cT_d =\{B_d^{(1)},\dots,B_d^{(2^d)}\}$.
\end{definition}

\begin{definition}[$\texttt{Replay}(s,d)$]\label{def:replay_s_d}
$\texttt{Replay}(s,d)$ denotes a replay at depth $d$ starting at round $s$.
\end{definition}

\begin{remark}[\textbf{Eviction time}]
A bin $B\in\cT_d$ is said to be evicted at round $t$ if it is evicted at the \emph{end} of round $t$, in the sense that $B\in\cB_t(d)$ and $B\notin\cB_{t+1}(d)$.
\end{remark}

\subsection{Bias of the mean estimates}\label{app:subsec:bias}
One challenge arising from our discretization scheme is that the importance-weighted estimate defined in \eqref{eq:IPS_mean_estimate} is inherently biased. This bias comes from the fact that our algorithm selectively samples arms $x_t$ only from active bins, while evicting bins at different depths over time.
\begin{proposition}[\textbf{Sampling bias}]\label{prop:sampling_bias}
For all round $t$, any active depth $d \in \cD_t$, and any active bin $B \in \cB_t(d)$ at this depth, the bias in the mean estimate for bin $B$ is bounded as
\begin{align*}
\left|\E{t-1}{\hat\mu_t(B)} - \mu_t(B)\right| \leq \frac{2}{2^d}\,.
\end{align*}
Moreover, the cumulative bias in the gap estimates between any pair of \textbf{active} bins at depth $d$ over an interval is bounded as:
\begin{small}
\begin{align*}
\forall [s_1, s_2],\,\forall d \in \bigcap_{t=s_1}^{s_2} \cD_t,\,\forall B, B' \in \cB_{[s_1, s_2]},\quad \left|\sum_{t=s_1}^{s_2} \E{t-1}{\hat\delta_t(B', B)} - \sum_{t=s_1}^{s_2} \delta_t(B', B)\right| \leq \frac{4(s_2 - s_1)}{2^d}\,.
\end{align*}
\end{small}
\end{proposition}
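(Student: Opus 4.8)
The plan is to reduce both statements to a single per-round, per-bin bias estimate and then control that estimate using the fact that $\mu_t$ varies by at most $|B| = 1/2^d$ across any bin of width $1/2^d$. The first thing I would do is identify the conditional expectation of the importance-weighted estimator \eqref{eq:IPS_mean_estimate}. Writing $p_B = \mathbb{P}(x_t\in B\condi\cF_{t-1})$, which is $\cF_{t-1}$-measurable and strictly positive for any active bin (the hierarchical sampling always places positive mass on every active bin), and conditioning first on $x_t$ via $\E{}{Y_t(x_t)\condi x_t}=\mu_t(x_t)$, the tower property yields
\[
\E{t-1}{\hat\mu_t(B)}=\frac{1}{p_B}\,\E{t-1}{\I{x_t\in B}\,\mu_t(x_t)}=\E{t-1}{\mu_t(x_t)\condi x_t\in B}\,.
\]
So the conditional expectation of the estimator is exactly the average of $\mu_t$ over $B$ under the \emph{sampling law} of $x_t$ restricted to $B$, whereas $\mu_t(B)$ is the average under the \emph{uniform} law on $B$; the discrepancy between these two averages is the entire source of the bias.

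The core step is the Lipschitz comparison of these two averages. Let $q$ denote the conditional density of $x_t$ on $B$ given $x_t\in B$, and $u\equiv 1/|B|$ the uniform density, so both averages are integrals of $\mu_t$ against $q$ and $u$, respectively. Since $\int_B (q-u)\dint x=0$, I would subtract $\mu_t(x_0)$ for an arbitrary fixed reference $x_0\in B$ and bound
\[
\bigl|\E{t-1}{\hat\mu_t(B)}-\mu_t(B)\bigr|=\Bigl|\int_B(\mu_t(x)-\mu_t(x_0))(q(x)-u(x))\dint x\Bigr|\leq \frac{1}{2^d}\int_B|q-u|\dint x\leq\frac{2}{2^d}\,,
\]
where \cref{assumption:lipschitz} gives $|\mu_t(x)-\mu_t(x_0)|\leq|x-x_0|\leq|B|=1/2^d$ on $B$, and the $L^1$ distance between two probability densities is at most $2$. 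This proves the first inequality. (One could even obtain the sharper $1/2^d$ by noting that every weighted average of $\mu_t$ over $B$ lies in a common interval of length $1/2^d$, so both averages differ by at most $1/2^d$; the looser constant $2/2^d$ is all that is needed downstream.)

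For the cumulative statement I would write $\hat\delta_t(B',B)=\hat\mu_t(B')-\hat\mu_t(B)$, apply the triangle inequality together with the per-bin bound to each of the two terms to get a per-round bias of at most $2/2^d+2/2^d=4/2^d$ for any pair $B,B'$ active at depth $d$, and then sum over $t\in[s_1,s_2]$, using that $B,B'\in\cB_{[s_1,s_2]}$ remain active throughout the interval so the per-round bound applies at every round; this gives the claimed $4(s_2-s_1)/2^d$. The only genuine content lies in the Lipschitz comparison above: once the identity $\E{t-1}{\hat\mu_t(B)}=\E{t-1}{\mu_t(x_t)\condi x_t\in B}$ is established, the rest is bookkeeping. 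The conceptual obstacle worth flagging is \emph{why} the estimator is biased at all — eviction of descendant sub-bins makes the sampling law $q$ genuinely non-uniform on $B$ — and the whole point of the argument is that Lipschitz continuity renders this non-uniformity harmless at scale $1/2^d$, uniformly over every possible eviction pattern.
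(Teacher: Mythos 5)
Your proof is correct and takes essentially the same approach as the paper's: both reduce the claim to a per-round, per-bin bias bound by identifying $\E{t-1}{\hat\mu_t(B)}$ as the average of $\mu_t$ over $B$ under the conditional sampling density, control the discrepancy with the uniform average $\mu_t(B)$ via \cref{assumption:lipschitz}, and then obtain the cumulative statement by the triangle inequality and summing over the interval. The only cosmetic difference is in deriving the constant: the paper compares $\mu_t(u)$ pointwise to $\mu_t(B)$ and gets the sharper per-bin bound $1/2^d$, while your reference-point/total-variation argument yields $2/2^d$, which (as you note) is all that is needed.
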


\begin{proof}[Proof of \cref{prop:sampling_bias}]
We begin with the first claim. Let $d \in \cD_t$ be an active depth, and let $B \in \cB_t(d)$ be an active bin at this depth. For all round $t$, by the definition of the estimator, we have for all $B \in \cT_d$,
\begin{align*}
\hat\mu_t(B) = \frac{Y_t(x_t)}{\mathbb{P}(x_t \in B \mid \cF_{t-1})} \I{x_t \in B} \,.
\end{align*}
Taking the conditional expectation with respect to $\cF_{t-1}$ yields:
\begin{align*}
    \E{t-1}{\hat\mu_t(B)} = \int_{u \in B} \mu_t(u) f_{t-1}(u)\, \mathrm{d}u\,,
\end{align*}
where $f_{t-1}$ denotes the density of $x_t$ given $\cF_{t-1}$ and $x_t \in B$.

From \cref{assumption:lipschitz}, for all bin $B \in \cT_d$ and any $x \in B$, we have:
\begin{align*}
|\mu_t(B) - \mu_t(x)| = \left| \frac{1}{|B|} \int_{u \in B} \mu_t(u) - \mu_t(x) \, \mathrm{d}u \right| \leq \frac{1}{|B|} \int_{u \in B} |\mu_t(u) - \mu_t(x)| \, \mathrm{d}u \leq |B| = \frac{1}{2^d}\,.
\end{align*}
Using this, we can bound the estimation bias:
\begin{align*}
    \left|\E{t-1}{\hat\mu_t(B)} - \mu_t(B)\right| &= \left|\int_{u \in B} \mu_t(u) f_{t-1}(u)\, \mathrm{d}u - \mu_t(B)\right|\\
    &\leq \int_{u \in B} |\mu_t(u) - \mu_t(B)| f_{t-1}(u) \, \mathrm{d}u\\
    &\leq \frac{1}{2^d} \int_{u \in B} f_{t-1}(u)\, \mathrm{d}u = \frac{1}{2^d}\,.
\end{align*}

For the second statement, let $B'$ also belong to $\cB_t(d)$. Applying the triangle inequality and the previous bound, we can control the bias in the gap estimate:
\begin{align*}
\left|\E{t-1}{\hat\delta_t(B', B)} - \delta_t(B', B)\right| &= \left| \E{t-1}{\mu_t(B')} - \mu_t(B') - \E{t-1}{\mu_t(B)} + \mu_t(B) \right|\\
&\leq \left|\E{t-1}{\mu_t(B')} - \mu_t(B')\right| + \left|\E{t-1}{\mu_t(B)} - \mu_t(B)\right|\\
&\leq 2  \frac{1}{2^d} + 2  \frac{1}{2^d} = \frac{4}{2^d}\,.
\end{align*}
Summing this inequality over the interval $[s_1, s_2]$ establishes the second claim.
\end{proof}

\subsection{Concentration of the mean estimates}

Our sampling scheme (\cref{alg:sampling_scheme}) is carefully constructed to ensure that the estimated gap between any two active bins at any active depth concentrates around its conditional expectation, especially by \emph{controlling its cumulative variance}. Our result relies on this following inequality.

\begin{lemma}[\citet{beygelzimer2011contextual}]\label{lemma:freedman}
Let $(X_t)_t$ be a real-valued martingale difference sequence adapted to the natural filtration $\cF_t = \sigma(X_1,\dots, X_t)$, such that $\E{}{X_t \condi \cF_{t-1}} = 0$. Suppose that $X_t \leq R$ almost surely and that $\sum_{i=1}^t \E{t-1}{X_i^2} \leq V_t$. Then for all $\delta \in (0,1)$, with probability at least $1 - \delta$,
\begin{align*}
    \sum_{i=1}^t X_i \leq (e - 1)\left( \sqrt{V_t \log\left(\frac{1}{\delta}\right)} + R \log\left(\frac{1}{\delta} \right) \right).
\end{align*}
\end{lemma}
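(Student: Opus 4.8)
The plan is to prove this one-sided deviation bound by the classical Chernoff / exponential-supermartingale method, tuning the free parameter $\lambda$ only at the very end to produce the clean constant $(e-1)$. Throughout I treat $V_t$ as a deterministic upper bound on the predictable quadratic variation $W_t := \sum_{i=1}^t \E{i-1}{X_i^2}$, so that $W_t \leq V_t$ pointwise; this is the form in which the lemma is invoked in \cref{prop:concentration}.

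First I would record the elementary scalar inequality $e^z \leq 1 + z + (e-2)z^2$, valid for all $z \leq 1$: the map $z \mapsto (e^z - 1 - z)/z^2$ is nondecreasing and takes the value $e-2$ at $z=1$, hence is bounded by $e-2$ for $z \leq 1$, and multiplying by $z^2 \geq 0$ gives the claim. Fixing any $\lambda \in (0, 1/R]$ and setting $z = \lambda X_i$ (so $z \leq \lambda R \leq 1$ by the almost-sure bound $X_i \leq R$), taking $\E{i-1}{\cdot}$ and using $\E{i-1}{X_i} = 0$ yields the per-step moment-generating-function bound $\E{i-1}{e^{\lambda X_i}} \leq 1 + (e-2)\lambda^2 \E{i-1}{X_i^2} \leq \exp\big((e-2)\lambda^2 \E{i-1}{X_i^2}\big)$, where the last step uses $1 + u \leq e^u$.

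Next I would introduce the process $M_t = \exp\big(\lambda \sum_{i=1}^t X_i - (e-2)\lambda^2 W_t\big)$ and verify directly from the previous display that $\E{t-1}{M_t} \leq M_{t-1}$, so that $(M_t)_t$ is a nonnegative supermartingale with $\E{}{M_t} \leq \E{}{M_0} = 1$. Markov's inequality then gives $\mathbb{P}(M_t \geq 1/\delta) \leq \delta$, so with probability at least $1-\delta$ we have $\lambda \sum_{i=1}^t X_i \leq \log(1/\delta) + (e-2)\lambda^2 W_t$; bounding $W_t \leq V_t$ and dividing by $\lambda > 0$ yields $\sum_{i=1}^t X_i \leq \log(1/\delta)/\lambda + (e-2)\lambda V_t$ for every fixed $\lambda \in (0, 1/R]$.

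Finally I would make the choice $\lambda = \tfrac{1}{R} \wedge \sqrt{\log(1/\delta)/V_t}$ and split into two cases. When the square root is the active term, substituting it collapses the two summands into $(e-2)\sqrt{V_t\log(1/\delta)} + \sqrt{V_t\log(1/\delta)} = (e-1)\sqrt{V_t\log(1/\delta)}$; when $1/R$ is active, the defining inequality forces $V_t \leq R^2\log(1/\delta)$, and substituting $\lambda = 1/R$ bounds the right-hand side by $(e-2)R\log(1/\delta) + R\log(1/\delta) = (e-1)R\log(1/\delta)$. Both cases are dominated by $(e-1)\big(\sqrt{V_t\log(1/\delta)} + R\log(1/\delta)\big)$, giving the claim. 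The only delicate point, and the main thing to get right, is that the tuned $\lambda$ depends on $V_t$: this is harmless precisely because $V_t$ is a deterministic bound, so $\lambda$ may be fixed before building the supermartingale; were $V_t$ genuinely random one would instead need a peeling or union argument over a grid of $\lambda$ values, which is the obstacle this deterministic-bound formulation is designed to sidestep.
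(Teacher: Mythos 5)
Your proof is correct. Note, however, that the paper itself gives no proof of this statement: it is imported verbatim (as Lemma~\ref{lemma:freedman}) from \citet{beygelzimer2011contextual}, so there is no internal argument to compare against. Your exponential-supermartingale derivation --- the scalar bound $e^z \leq 1 + z + (e-2)z^2$ for $z \leq 1$, the per-step MGF control, Markov's inequality on the supermartingale $M_t$, and the two-case choice $\lambda = \frac{1}{R} \wedge \sqrt{\log(1/\delta)/V_t}$ producing the constant $(e-1)$ --- is essentially the standard proof of this result, matching the one in the cited reference. Your closing remark is also the right caveat and is consistent with how the paper uses the lemma: in the proof of \cref{prop:concentration} it is applied with the deterministic variance bound $V_t = (s_2-s_1)2^{d+1}$, so fixing $\lambda$ in advance is legitimate and no peeling over $\lambda$ is needed.
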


\begin{reproposition2}[\textbf{Concentration Event}]
Let $\mathcal{E}_1$ be the following event: for all intervals $[s_1,s_2]$, depths $d \in \bigcap_{t=s_1}^{s_2} \cD_t$, and bins $B, B' \in \cB_{[s_1,s_2]}(d)$, we have
\begin{align}
\left| \sum_{t=s_1}^{s_2}\hat\delta_t(B', B) - \sum_{t=s_1}^{s_2}\E{t-1}{\hat\delta_t(B', B)} \right|\leq c_1\log(T)\sqrt{(s_2 - s_1)2^{d}\vee4^d} \,,
\end{align}
where $c_1$ is a positive numerical constant. Then $\mathcal{E}_1$ holds with probability at least $1-1/T^3$.
\end{reproposition2}

\begin{proof}[Proof of \cref{prop:concentration}]
For all depth $d$ and pair of bins $B, B' \in \cT_d$, define the martingale difference sequence $(M_t)_t = \left( \hat\delta_t(B', B) - \E{t-1}{\hat\delta_t(B', B)} \right)_t$, adapted to the natural filtration $(\cF_t)_{t\geq 1} = \left(\sigma(x_t, Y_t(x_t))\right)_{t\geq 1}$.

\textbf{Lower Bound on Sampling Probability.}
Let $B_d \in \cB_t(d)$ be any active bin at depth $d$ at round $t$. We distinguish two cases:

\textbf{Case 1: $d = \dmin$.} By construction, arms are sampled uniformly at depth $\dmin$, implying
\begin{align*}
    \mathbb{P}(x_t \in B_d \condi \cF_{t-1}) = \frac{1}{|\cB_t(\dmin)|} \geq \frac{1}{2^d}.
\end{align*}

\textbf{Case 2: $d \neq \dmin$.} The replay scheduling ensures that at round $t$, a replay at depth $d$ can only be triggered if $t - \tau_{l,m} \equiv 0 [8^d]$. This implies that no replay
at shallower depth scheduled before $t$ can still be active at round $t$. Thus, when a replay is scheduled at round $t$ at depth $d$, either $d=\dmin$, or all replays at depth $d'<d$ have also been scheduled at time $t$. No bin has been evicted at these depths before round $t$, so all bins at depth $d$ are children of active bins at any depth $d'<d$ such that $d'\in\cD_t$. This ensures there exists a path of active bins from any bin $B$ up to depth $\dmin$. \\
Using induction, we now show that 
\begin{align}\label{eq:lower_bound_proba}
\forall d' \in \cD_t,\, \mathbb{P}(x_t \in B_{d'} \condi \cF_{t-1}) \geq 1/2^{d'}\,.
\end{align}
The base case at $\dmin$ is already established. Assuming the claim holds at depth $d$. For a deeper depth $d' > d$ and any $B' \in \cB_t(d')$ with $B' \subset B_d$, we have
\begin{align*}
\mathbb{P}(x_t \in B' \condi \cF_{t-1}) &= \mathbb{P}(x_t \in B' \condi x_t \in B_d, \cF_{t-1}) \mathbb{P}(x_t \in B_d \condi \cF_{t-1}) \\
&\geq \frac{1}{|\cB_t(d') \cap \mathrm{Children}(B_d, d')|} \frac{1}{2^d} \\
&\geq \frac{2^d}{2^{d'}} \frac{1}{2^d} \\
&= \frac{1}{2^{d'}}\,.
\end{align*}

\textbf{Bound on each $M_t$.}
Recall the definition of $\hat\delta_t(B', B)$,
\begin{align*}
\hat\delta_t(B', B) = \frac{Y_t(x_t)}{\mathbb{P}(x_t \in B' \condi \cF_{t-1})} \I{x_t \in B'} - \frac{Y_t(x_t)}{\mathbb{P}(x_t \in B \condi \cF_{t-1})} \I{x_t \in B}.
\end{align*}
Using the lower bound \eqref{eq:lower_bound_proba} and the fact that rewards lie in $[0,1]$, we have:
\begin{align*}
\left|\hat\delta_t(B', B) - \E{t-1}{\hat\delta_t(B', B)}\right| \leq 2^{d+1}.
\end{align*}

\textbf{Bound on the variance.}
First, remark that
\begin{align*}
\E{t-1}{\left( \hat\delta_t(B', B) - \E{t-1}{\hat\delta_t(B', B)} \right)^2} \leq \E{t-1}{\hat\delta_t^2(B', B)}.
\end{align*}
Applying the definition and the lower bound \eqref{eq:lower_bound_proba}, we have
\begin{align*}
\E{t-1}{\hat\delta_t^2(B', B)} \leq \frac{2}{(1/2^d)^2}\cdot \frac{1}{2^d} = 2^{d+1},\,
\end{align*}
where we use the fact that the probability that $x_t$ belong to $B_d$ or $B_d'$ is upper bounded by $1/2^d$.

\textbf{Final concentration bound.}
Applying \cref{lemma:freedman} over any interval $[s_1, s_2]$ for a fixed depth $d$ and for any pair of bins $B, B'\in\cT_d$, we get that with probability at least $1 - \delta$,
\begin{align*}
\left| \sum_{t=s_1}^{s_2} \hat\delta_t(B', B) - \sum_{t=s_1}^{s_2} \E{t-1}{\hat\delta_t(B', B)} \right| \leq (e - 1) \left( \sqrt{(s_2 - s_1) 2^{d+1} \log\left(\frac{1}{\delta}\right)} + 2^{d+1} \log\left(\frac{1}{\delta}\right) \right).
\end{align*}

Taking an union bound over all possible choices of rounds $s_1, s_2$ (there are $T^2$ choices), all possible choices of depths (there are at most $m$ depths, where $m \leq \log(T)$), and all possible choices of pair of bins $\mathcal{B}$ (there are at most $\sum_{m=1}^{\log_8(T)} (2^m)^2 \leq T$ choices of pairing), we have with probability at least $1 - \delta T^3 \log(T)$,

\begin{align*}
&\forall [s_1, s_2],\ \forall d \in \bigcap_{t=s_1}^{s_2} \mathcal{D}_t,\ \forall B', B \in \mathcal{B}_t(d),\\
&
\left| \sum_{t=s_1}^{s_2} \hat{\delta}_t(B', B) - \sum_{t=s_1}^{s_2} \mathbb{E}_{t-1}\left[\hat{\delta}_t(B', B)\right] \right|
\leq (e - 1) \left( \sqrt{(s_2 - s_1) 2^{d+1} \log\left(\frac{1}{\delta}\right)} + 2^{d+1} \log\left(\frac{1}{\delta}\right) \right).    
\end{align*}

Choosing $\delta = 1/T^7$ and $\textcolor{Maroon}{c_1 = 7(e-1)\sqrt{2}}$ concludes the proof.
\end{proof}

We summarize the bias and concentration control of mean estimates in the following statement.
\begin{corollary}[\textbf{Bias and Concentration Control}]\label{cor:concentration_and_bias}
On event $\mathcal{E}_1$, for all intervals $[s_1, s_2]$, all depths $d \in \bigcap_{t=s_1}^{s_2} \cD_t$, and all $B, B' \in \cB_{[s_1, s_2]}(d)$,
\begin{align*}
\left| \sum_{t=s_1}^{s_2} \hat\delta_t(B', B) - \sum_{t=s_1}^{s_2} \delta_t(B', B) \right| 
\leq \underbrace{c_1 \log(T) \sqrt{(s_2 - s_1) 2^d \vee 4^d}}_{\text{Concentration}} 
+ \underbrace{4(s_2 - s_1)/2^d}_{\text{Bias}}.
\end{align*}
\end{corollary}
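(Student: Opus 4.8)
The plan is to prove this corollary by a single application of the triangle inequality, splitting the deviation of the empirical cumulative gap from the true cumulative gap into a \emph{concentration} term and a \emph{bias} term, each of which has already been controlled by a result stated above.

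First I would insert the conditional-expectation sum $\sum_{t=s_1}^{s_2}\E{t-1}{\hat\delta_t(B', B)}$ as an intermediate quantity. Adding and subtracting it inside the absolute value and applying the triangle inequality gives
\begin{align*}
\left| \sum_{t=s_1}^{s_2} \hat\delta_t(B', B) - \sum_{t=s_1}^{s_2} \delta_t(B', B) \right|
&\leq \left| \sum_{t=s_1}^{s_2}\hat\delta_t(B', B) - \sum_{t=s_1}^{s_2}\E{t-1}{\hat\delta_t(B', B)} \right| \\
&\quad + \left|\sum_{t=s_1}^{s_2} \E{t-1}{\hat\delta_t(B', B)} - \sum_{t=s_1}^{s_2} \delta_t(B', B)\right|.
\end{align*}

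Next I would bound the first right-hand term using the concentration event $\mathcal{E}_1$ from \cref{prop:concentration}, which on this event and under the stated quantifiers is at most $c_1 \log(T) \sqrt{(s_2 - s_1) 2^d \vee 4^d}$. The second term is the cumulative estimation bias, bounded deterministically (without appealing to $\mathcal{E}_1$) by $4(s_2 - s_1)/2^d$ via the second claim of \cref{prop:sampling_bias}. Summing the two bounds yields exactly the claimed inequality.

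Since both component bounds are already established and are stated over the same index sets — all intervals $[s_1, s_2]$, all depths $d \in \bigcap_{t=s_1}^{s_2} \cD_t$, and all pairs $B, B' \in \cB_{[s_1, s_2]}(d)$ active throughout the interval — there is no genuine technical obstacle. The only point requiring care is to verify that the quantifiers and the activity condition $B, B' \in \cB_{[s_1,s_2]}(d)$ line up so that \emph{both} inequalities can be invoked simultaneously on the same event: the requirement that the bins remain active on all of $[s_1,s_2]$ is precisely what makes the bias bound of \cref{prop:sampling_bias} applicable and what guarantees the importance weights underlying \cref{prop:concentration} are well defined, so the two estimates hold on the identical collection of $(\,[s_1,s_2],d,B,B'\,)$ tuples and may be added termwise.
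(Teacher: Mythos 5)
Your proof is correct and follows exactly the paper's argument: the paper's own proof is the one-line observation that \cref{cor:concentration_and_bias} follows by combining \cref{prop:concentration} with \cref{prop:sampling_bias}, which is precisely your triangle-inequality decomposition through the conditional-expectation sum. Your additional remark about the quantifiers and the activity condition lining up is a sound (and welcome) elaboration of what the paper leaves implicit.
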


\begin{proof}[Proof of \cref{cor:concentration_and_bias}]
This follows immediately by combining the concentration event (\cref{prop:concentration}) with the bias control result from \cref{prop:sampling_bias}.
\end{proof}

\subsection{Properties of the eviction scheme of \algo}

We show that, under the concentration event $\mathcal{E}_1$, if a bin is \emph{evicted} by \algo, then all arms within that bin must have incurred significant regret.

\begin{proposition}\label{prop:eviction_implies_sig_regret}
On event $\mathcal{E}_1$, if a bin $B\in\cT_d$ is evicted at some round $t\geq \tau_{l,m}$, then for all arm  $x\in B$, there exists an interval $[s_1, s_2]$ with $s_1\geq \tau_{l,m}$ and $s_2=t$ such that $x$  has incurred significant regret on this interval.
\end{proposition}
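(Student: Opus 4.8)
The plan is to chain the eviction rule \eqref{eq:star}, the bias-and-concentration control of \cref{cor:concentration_and_bias}, and the bin-to-arm transfer of \cref{prop:from_bin_to_arm}. First I would unpack what eviction means. A bin $B \in \cT_d$ is evicted at round $t$ (that is, $B \in \cB_t(d)$ but $B \notin \cB_{t+1}(d)$) precisely when some active bin $\tilde{B} \in \cT_{\tilde{d}}$ at a depth $\tilde{d} \leq d$ with $B \subseteq \tilde{B}$ satisfies the criterion \eqref{eq:star}, triggering the removal of $\tilde{B}$ together with all its descendants, including $B$; when $B$ itself triggers the eviction we simply take $\tilde{B} = B$ and $\tilde{d} = d$. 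Since $B \subseteq \tilde{B}$, it suffices to prove the significant-regret claim for $\tilde{B}$. I would then fix the witnessing interval $[s_1, s_2]$ supplied by \eqref{eq:star}: here $s_2 = t$, the left endpoint $s_1$ is no earlier than the start of the current replay at depth $\tilde{d}$ (so $s_1 \geq \tau_{l,m}$), and $s_2 - s_1 \leq 8^{\tilde{d}}$ because a replay at depth $\tilde{d}$ runs for at most $8^{\tilde{d}}$ rounds. Let $B^\star \in \cB_{[s_1,s_2]}(\tilde{d})$ attain the maximum in \eqref{eq:star}; both $\tilde{B}$ and $B^\star$ remain active throughout $[s_1,s_2]$, so the pair $(B^\star, \tilde{B})$ is eligible for \cref{cor:concentration_and_bias}.

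Next I would pass from the empirical gap to the true gap. The eviction criterion states
\[
\sum_{t'=s_1}^{s_2} \hat\delta_{t'}(B^\star, \tilde{B}) > c_0 \log(T)\sqrt{(s_2-s_1)2^{\tilde{d}} \vee 4^{\tilde{d}}} + \frac{4(s_2-s_1)}{2^{\tilde{d}}},
\]
whereas \cref{cor:concentration_and_bias} controls the deviation of the empirical gap from the true gap by $c_1 \log(T)\sqrt{(s_2-s_1)2^{\tilde{d}} \vee 4^{\tilde{d}}} + 4(s_2-s_1)/2^{\tilde{d}}$. Subtracting the two, the additive bias terms cancel exactly, leaving
\[
\sum_{t'=s_1}^{s_2} \delta_{t'}(B^\star, \tilde{B}) > (c_0 - c_1)\log(T)\sqrt{(s_2-s_1)2^{\tilde{d}} \vee 4^{\tilde{d}}}.
\]
Because $\delta_{t'}(\tilde{B}) = \max_{B'' \in \cT_{\tilde{d}}}\mu_{t'}(B'') - \mu_{t'}(\tilde{B}) \geq \delta_{t'}(B^\star, \tilde{B})$ at every round, the same lower bound holds for $\sum_{t'=s_1}^{s_2}\delta_{t'}(\tilde{B})$, and dropping the $\vee\, 4^{\tilde{d}}$ in our favour gives $\sum_{t'=s_1}^{s_2} \delta_{t'}(\tilde{B}) > (c_0 - c_1)\log(T)\sqrt{(s_2-s_1)2^{\tilde{d}}}$.

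To conclude, I would calibrate the universal constant $c_0$ so that $c_0 - c_1 \geq 3$; the previous display then matches exactly the significant-regret-for-bins threshold of \cref{def:significant_regret_bin} for $\tilde{B}$ on $[s_1,s_2]$. Since $s_2 - s_1 \leq 8^{\tilde{d}}$, \cref{prop:from_bin_to_arm} applies and transfers significant regret from $\tilde{B}$ to every arm it contains, over the same interval $[s_1,s_2]$ with $s_1 \geq \tau_{l,m}$ and $s_2 = t$; as $B \subseteq \tilde{B}$, this covers every $x \in B$, which is the claim. I expect the main obstacle to be bookkeeping rather than mathematical depth: one must verify that the witnessing interval truly has length at most $8^{\tilde{d}}$ and that both $\tilde{B}$ and the maximizer $B^\star$ are active throughout it (otherwise \cref{cor:concentration_and_bias} does not apply), and check that the calibration $c_0 \geq c_1 + 3$ is consistent with the constant $c_1$ produced in \cref{prop:concentration}. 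The exact cancellation of the bias terms between \eqref{eq:star} and \cref{cor:concentration_and_bias} is the crux that lets the $\sqrt{\,\cdot\,}$ threshold survive, which is precisely why the eviction rule carries the additive $4(s_2-s_1)/2^{\tilde{d}}$ correction.
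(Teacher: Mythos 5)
Your proposal is correct and follows essentially the same route as the paper's proof: identify the ancestor bin $\tilde{B}$ (the paper's $B_{\mathrm{parent}}$ at depth $d'$) whose violation of \eqref{eq:star} triggered the eviction, use \cref{cor:concentration_and_bias} so that the additive bias terms cancel and the true gap exceeds $(c_0-c_1)\log(T)\sqrt{(s_2-s_1)2^{d'}\vee 4^{d'}}$, calibrate $c_0 \geq c_1+3$ to meet the threshold of \cref{def:significant_regret_bin}, and transfer to every arm via \cref{prop:from_bin_to_arm}. Your explicit bookkeeping that the witnessing interval has length at most $8^{\tilde{d}}$ (needed for \cref{prop:from_bin_to_arm}) and that both bins stay active on it is left implicit in the paper, so if anything your write-up is slightly more complete.
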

\begin{proof}[Proof of \cref{prop:eviction_implies_sig_regret}]
Assume $B\in\cT_d$ has been evicted at a round $t$. By design of \algo, there exists an interval $[s_1,s_2]$ with $s_1\geq\tau_{l,m}$ and $s_2=t$, such that
\begin{align*}
\exists d'\in \bigcap_{t=s_1}^{s_2}\cD_t,\, \exists \BParent\in\cB_{[s_1,s_2]}(d') \text{ s.t. }B\subseteq \BParent \text{ and } \BParent \text{ is evicted at round }t\,,
\end{align*}
which means that there exists a bin $B'\in\cB_{[s_1, s_2]}(d')$ such that
\begin{align*}
\sum_{t=s_1}^{s_2}\hat\delta_t(B', \BParent)> c_0\log(T)\sqrt{(s_2-s_1)2^{d'}\vee4^{d'}} + \frac{4(s_2-s_1)}{2^{d'}}\,.
\end{align*}
On the concentration event $\mathcal{E}_1$, we have by \cref{cor:concentration_and_bias}, on interval $[s_1, s_2]$,
\begin{align*}
    \sum_{t=s_1}^{s_2}\delta_t(B', \BParent)\geq\sum_{t=s_1}^{s_2}\hat\delta_t(B',\BParent) - c_1\log(T)\left(\sqrt{(s_2 - s_1)2^{d'}\vee4^{d'}}\right) - \frac{4(s_2-s_1)}{2^{d'}}\,.
\end{align*}
and therefore by eviction criteria \eqref{eq:star} we have
\begin{align*}
    \sum_{t=s_1}^{s_2}\delta_t(B', \BParent)&\geq c_0\log(T)\left(\sqrt{(s_2 - s_1)2^{d'}\vee4^{d'}}\right) + \frac{4(s_2-s_1)}{2^{d'}} \\
    &- c_1\log(T)\left(\sqrt{(s_2 - s_1)2^{d'}\vee4^{d'}}\right) - \frac{4(s_2-s_1)}{2^{d'}}\\
    &\geq 3\log(T)\sqrt{(s_2-s_1)2^{d'}\vee 4^{d'}}\,,
\end{align*}
for $c_0$ satisfying $c_0\geq 3 + c_1$. Since $\sum_{t=s_1}^{s_2} \delta_t(B', \BParent) \leq \sum_{t=s_1}^{s_2} \delta_t(\BParent)$, this implies that $\BParent$ incurs significant regret on $[s_1, s_2]$ (\cref{def:significant_regret_bin}).
Since $x\in\BParent$, it also implies that $x$ incurs significant regret (\cref{def:significant_regret_arm}) by \cref{prop:from_bin_to_arm}.
\end{proof}

We now present a complementary result showing that bins which are not evicted must exhibit low relative cumulative regret. 

\begin{proposition}[\textbf{Safe bins have low relative regret}]\label{cor:useful_corollary}
On concentration event $\mathcal{E}_1$, $\forall\,[s_1,s_2]\,,\forall d\in\bigcap_{t=s_1}^{s_2}\cD_t\,,\, \forall B, B'\in\cB_{[s_1, s_2]}(d)$, if both $B$ and $B'$ are not evicted during this interval, then
\begin{align*}
\sum_{t=s_1}^{s_2}\delta_t(B, B') \leq (c_0+c_1)\log(T)\sqrt{(s_2-s_1)2^d\vee4^d}+\frac{8(s_2-s_1)}{2^d}\,,
\end{align*}
where $c_0$ and $c_1$ are the positive numerical constants defined respectively in \eqref{eq:star} and \cref{cor:concentration_and_bias}.
\end{proposition}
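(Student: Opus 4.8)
The goal is to upper-bound the relative cumulative regret $\sum_{t=s_1}^{s_2}\delta_t(B,B')$ between two bins that both survive the interval $[s_1,s_2]$. This should follow by the same mechanics as Proposition~\ref{prop:eviction_implies_sig_regret}, but run in the \emph{contrapositive direction}: since neither $B$ nor $B'$ is evicted, the eviction criterion \eqref{eq:star} must \emph{fail} for both of them throughout $[s_1,s_2]$, and I will convert this failure into a bound on the true gap using the bias-and-concentration control of Corollary~\ref{cor:concentration_and_bias}.

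\paragraph{Plan.} First I would observe that because $B,B'\in\cB_{[s_1,s_2]}(d)$ both remain active throughout the interval and $d\in\bigcap_{t=s_1}^{s_2}\cD_t$, the eviction rule \eqref{eq:star} applied to $B'$ with competitor $B$ never triggered; in particular, taking $B$ as the ``evicting'' bin in the max, we have
\begin{align*}
\sum_{t=s_1}^{s_2}\hat\delta_t(B, B') \;\leq\; \max_{B''\in\cB_{[s_1,s_2]}(d)}\sum_{t=s_1}^{s_2}\hat\delta_t(B'', B') \;\leq\; c_0\log(T)\sqrt{(s_2-s_1)2^d\vee4^d}+\frac{4(s_2-s_1)}{2^d}\,.
\end{align*}
Then I would pass from the empirical gap $\sum_t\hat\delta_t(B,B')$ to the true gap $\sum_t\delta_t(B,B')$ by invoking Corollary~\ref{cor:concentration_and_bias} on the event $\mathcal{E}_1$, which gives
\begin{align*}
\sum_{t=s_1}^{s_2}\delta_t(B, B') \;\leq\; \sum_{t=s_1}^{s_2}\hat\delta_t(B, B') + c_1\log(T)\sqrt{(s_2-s_1)2^d\vee4^d}+\frac{4(s_2-s_1)}{2^d}\,.
\end{align*}
Adding the two displays and collecting the concentration terms ($c_0+c_1$) and the bias terms ($4+4=8$ over $2^d$) yields exactly the claimed bound.

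\paragraph{Main subtlety.} The one point requiring care is the justification that the eviction criterion genuinely failed \emph{for the specific pairing} $(B',B)$ over \emph{this} interval $[s_1,s_2]$. The eviction rule \eqref{eq:star} quantifies over all sub-intervals and over the running $\max$ against any active competitor, so ``$B'$ not evicted during $[s_1,s_2]$'' must be read as: for every admissible window and every active competitor bin, the threshold was not exceeded. I would make explicit that $B$ is a valid competitor for $B'$ on the whole window (it is active throughout, by hypothesis $B\in\cB_{[s_1,s_2]}(d)$), so the $\max$ in \eqref{eq:star} dominates the single term $\sum_t\hat\delta_t(B,B')$, which is what licenses the first display. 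A secondary check is that the $2^d\vee 4^d$ form of the deviation term is consistent between the eviction rule and Corollary~\ref{cor:concentration_and_bias} — it is, since both are stated with the same $\sqrt{(s_2-s_1)2^d\vee4^d}$ radical — so the two bounds combine cleanly without any regime split on $s_2-s_1$ versus $2^d$.
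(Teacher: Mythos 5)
Your proposal is correct and follows essentially the same route as the paper's proof: read "not evicted" as the failure of the eviction criterion \eqref{eq:star} for $B'$ against every active competitor (in particular against $B$), then convert the resulting bound on the empirical gap $\sum_t \hat\delta_t(B,B')$ into a bound on the true gap via Corollary~\ref{cor:concentration_and_bias}, giving the $(c_0+c_1)$ concentration term and the $4+4=8$ bias term. Your added remark that $B$ is a legitimate competitor in the $\max$ because it is active throughout $[s_1,s_2]$ is exactly the specialization ($B''=B$) the paper makes.
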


\begin{proof}[Proof of \cref{cor:useful_corollary}]
If $B$ and $B'$ are not evicted on $[s_1, s_2]$, then for all $B'' \in \cB_{[s_1, s_2]}(d)$, the eviction condition does not hold, \emph{i.e.} we have
\begin{align*}
\sum_{t=s_1}^{s_2} \hat\delta_t(B'', B') 
\leq c_0 \log(T) \sqrt{(s_2 - s_1) 2^d \vee 4^d} + \frac{4(s_2 - s_1)}{2^d}.
\end{align*}
In particular, for $B'' = B$, we get
\begin{align*}
\sum_{t=s_1}^{s_2} \hat\delta_t(B, B') 
\leq c_0 \log(T) \sqrt{(s_2 - s_1) 2^d \vee 4^d} + \frac{4(s_2 - s_1)}{2^d}.
\end{align*}
By \cref{cor:concentration_and_bias}, the true cumulative gap satisfies
\begin{align*}
\sum_{t=s_1}^{s_2} \delta_t(B, B') 
&\leq \sum_{t=s_1}^{s_2} \hat\delta_t(B, B') 
    + c_1 \log(T) \sqrt{(s_2 - s_1) 2^d \vee 4^d} 
    + \frac{4(s_2 - s_1)}{2^d} \\
&\leq (c_0 + c_1) \log(T) \sqrt{(s_2 - s_1) 2^d \vee 4^d} + \frac{8(s_2 - s_1)}{2^d}.
\end{align*}
\end{proof}

\subsection{Relating episode and significant shifts}
We show that, with high probability, a new episode begins only if a significant shift occurs within it. This ensures a correspondence between \emph{episodes} $[t_l, t_{l+1}[$ and significant phases $[\tau_i, \tau_{i+1}[$. In particular, each significant phase overlaps with at most two episodes.

\begin{proposition}\label{prop:relating_phase_and_episode}
On concentration event $\mathcal{E}_1$, for each episode $[t_l, t_{l+1}[$, there exists at least one significant shift $\tau_i\in[t_l, t_{l+1}[$.
\end{proposition}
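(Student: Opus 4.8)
The plan is to trace how an episode terminates and connect that termination to the defining minimality property of a significant shift. Recall that episode $l$ ends at round $t_{l+1}-1$ precisely when the MASTER set $\cBM$ at the current block depth $m$ becomes empty. Since $\cBM$ is reset to the full partition $\cT_m$ at the start of the block $[\tau_{l,m}, \tau_{l,m+1}[$ and bins are only ever removed from it thereafter, the condition $\cBM = \emptyset$ means that every bin $B \in \cT_m$ has been evicted at some round $t_B \in [\tau_{l,m}, t_{l+1}-1]$ during this final block.

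First I would invoke \cref{prop:eviction_implies_sig_regret}: on the event $\mathcal{E}_1$, each evicted bin $B$ yields an interval $[s_1(B), t_B]$ with $s_1(B) \geq \tau_{l,m}$ on which \emph{every} arm $x \in B$ incurs significant regret in the sense of \cref{def:significant_regret_arm}. Because the depth-$m$ bins $\cT_m$ partition $[0,1]$, every arm $x \in [0,1]$ lies in exactly one such evicted bin, so every arm incurs significant regret on some subinterval of $[\tau_{l,m}, t_{l+1}-1]$.

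Next I would bring in the significant-shift structure. Let $\tau_i$ be the largest significant shift with $\tau_i \leq t_l$ (which exists since $\tau_0 = 1$), so that $t_l$ lies in the phase $[\tau_i, \tau_{i+1}[$. Since $\tau_i \leq t_l \leq \tau_{l,m}$, all the intervals produced above are contained in $[\tau_i, t_{l+1}-1]$. This is exactly the condition appearing in \cref{def:significant_shift}: for every arm $x$ there is an interval $[s_1, s_2] \subseteq [\tau_i, t_{l+1}-1]$ on which $x$ incurs significant regret. By the minimality built into the definition of the next shift $\tau_{i+1}$ (the smallest round after $\tau_i$ for which this universal-over-arms property holds), we conclude $\tau_{i+1} \leq t_{l+1}-1$. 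Combined with $\tau_{i+1} > t_l$ (as $\tau_i$ is the largest shift not exceeding $t_l$), this places $\tau_{i+1} \in [t_l, t_{l+1}[$, giving the desired significant shift inside the episode.

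The main obstacle I anticipate is bookkeeping rather than a deep difficulty. I must be careful that the eviction of a depth-$m$ bin may be triggered by the eviction of a coarser ancestor at some depth $d' \leq m$; however, \cref{prop:eviction_implies_sig_regret} already accounts for this by producing the significant-regret interval through the responsible ancestor $\BParent \supseteq B$, so each depth-$m$ bin removed from $\cBM$ still yields a valid interval. I also need to verify the index alignment, namely that the correct previous shift is the largest $\tau_i \leq t_l$ so that the interval containment $[s_1, s_2] \subseteq [\tau_i, \cdot]$ matches \cref{def:significant_shift} verbatim, and that every eviction round $t_B$ satisfies $t_B \leq t_{l+1}-1$ so that the intervals stay within the episode.
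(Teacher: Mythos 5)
Your proof is correct and follows essentially the same route as the paper's: episode termination means $\cBM=\emptyset$, so every depth-$m$ bin was evicted during the final block, hence on $\mathcal{E}_1$ every arm incurs significant regret on an interval inside the episode, which forces a significant shift to occur within it. The only differences are in packaging: you invoke \cref{prop:eviction_implies_sig_regret} where the paper re-derives its content inline from the eviction rule \eqref{eq:star}, \cref{cor:concentration_and_bias} and \cref{prop:from_bin_to_arm}, and you make explicit the concluding minimality step (taking the largest $\tau_i\leq t_l$ and deducing $t_l<\tau_{i+1}\leq t_{l+1}-1$), which the paper asserts without detail — both choices are sound and, if anything, slightly tighter than the original write-up.
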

\begin{proof}[Proof of \cref{prop:relating_phase_and_episode}]
Let $[\tau_{l,m}, \tau_{l,m+1}[$ be the block during which episode $[t_l, t_{l+1}[$ terminates. By the design of \algo, the episode ends whenever $\cBM = \emptyset$ at the end of this block. Thus, by the eviction condition \eqref{eq:star}, we have:
\begin{align*}
\forall x \in \cX,\; \exists [s_1, s_2] \subseteq [\tau_{l,m}, \tau_{l,m+1}[,\; \exists d \in \bigcap_{s = s_1}^{s_2} \cD_s,\; \exists B \in \cB_{[s_1, s_2]}(d) \text{ with } x \in B,\; \exists B' \in \cB_{[s_1, s_2]}(d)\; \text{s.t.}
\end{align*}
\begin{align*}
\sum_{t = s_1}^{s_2} \hat\delta_t(B', B) 
> c_0 \log(T) \sqrt{(s_2 - s_1) 2^d \vee 4^d} + \frac{4(s_2 - s_1)}{2^d}.
\end{align*}

Since we are on event $\mathcal{E}_1$, by \cref{cor:concentration_and_bias},
\begin{align*}
\sum_{t = s_1}^{s_2} \hat\delta_t(B', B) 
&\leq \sum_{t = s_1}^{s_2} \delta_t(B', B) 
+ c_1 \log(T) \sqrt{(s_2 - s_1) 2^d \vee 4^d} + \frac{4(s_2 - s_1)}{2^d} \\
\implies \sum_{t = s_1}^{s_2} \delta_t(B', B) 
&\geq \sum_{t = s_1}^{s_2} \hat\delta_t(B', B) 
- c_1 \log(T) \sqrt{(s_2 - s_1) 2^d \vee 4^d} - \frac{4(s_2 - s_1)}{2^d} \\
\implies \sum_{t = s_1}^{s_2} \delta_t(B', B(x)) 
&\geq (c_0 - c_1) \log(T) \sqrt{(s_2 - s_1) 2^d \vee 4^d} 
\geq 3 \log(T) \sqrt{(s_2 - s_1) 2^d},
\end{align*}
where we used $B_d(x)$ to denote the bin at depth $d$ containing arm $x$.

This implies that bin $B_d(x)$ incurs significant regret on interval $[s_1, s_2]$ (by \cref{def:significant_regret_bin}). Then, by \cref{prop:from_bin_to_arm}, the arm $x \in B$ also incurs significant regret on $[s_1, s_2]$ (\cref{def:significant_regret_arm}).

Thus, every episode contains at least one significant shift.
\end{proof}

\subsection{Regret decomposition and discretion within one block}
We first introduce important definitions used throughout the regret analysis.

\begin{definition}[\textbf{Last safe arm} $\bm{x_i^\sharp}$]\label{def:x_sharp}
For each significant phase $[\tau_i, \tau_{i+1}[$, we define $x_i^\sharp$ as the last safe arm (\cref{def:significant_regret_arm}), with ties broken arbitrarily. For all $t \in [\tau_i, \tau_{i+1}[$, we define $x_t^\sharp = x_i^\sharp$.
\end{definition}

\begin{definition}[\textbf{Last safe bin} $\bm{B^\sharp_{i,d}}$]\label{def:Bsharp}
We denote by $B^\sharp_{i,d} \in \cT_d$ the unique bin at depth $d$ containing $x_i^\sharp$. For all $t \in [\tau_i, \tau_{i+1}[$, we define $B_{t,d}^\sharp = B^\sharp_{i,d}$.
\end{definition}

\begin{definition}[\textbf{Bin }$\bm{B_{t,d}}$]\label{def:B_t_d}
For each round $t$, let $B_{t,d} \in \cT_d$ denote the unique bin at depth $d$ containing the arm $x_t$ played at time $t$: $x_t \in B_{t,d}$. 
\end{definition}
Note that $B_{t,d}$ may not belong to the active set $\cB_t(d)$.

\textbf{Decomposing dynamic regret into relative regrets.}
We begin from the definition of the expected dynamic regret:
\begin{align*}
\E{}{R(\pi_{\algo}, T)} = \sum_{t=1}^T \sup_{x\in \cX}\mu_t(x) - \E{}{\sum_{t=1}^T \mu_t(x_t)} = \E{}{\sum_{t=1}^T \delta_t(x_t)}\,,
\end{align*}
which decomposes as
\begin{align*}
    \E{}{\sum_{t=1}^T \delta_t(x_t)} &= \sum_{t=1}^T \delta_t(x_t^\sharp) + \E{}{\sum_{t=1}^T \delta_t(x_t^\sharp,x_t)}\,,
\end{align*}
where $x_t^\sharp$ is deterministic. The first term can be bounded using the definition of $x_t^\sharp$:
\begin{align*}
\sum_{t=1}^T \delta_t(x_t^\sharp) = \sum_{i=0}^{\tilde{L}_T}\sum_{t=\tau_i}^{\tau_{i+1}} \delta_t(x_i^\sharp) \leq \sum_{i=0}^{\tilde{L}_T}\log(T)\left(\tau_{i+1}-\tau_i\right)^{2/3}\,.
\end{align*}
The difficulty lies in upper bounding the second sum $\E{}{\sum_{t} \delta_t(x_t^\sharp,x_t)}$. Without loss of generality, there are $T$ total episodes, and by convention we set $t_l = T+1$ if only $l-1$ episodes occur by round $t$. Summing over episodes gives
\begin{align*}
    \E{}{\sum_{t=1}^T \delta_t(x_t^\sharp,x_t)}\leq \sum_{l=1}^T\E{}{\sum_{t=t_l}^{t_{l+1}}\delta_t(x_t^\sharp,x_t)}\,.
\end{align*}

We further decompose each episode $[t_l, t_{l+1}[$ into blocks. Let $M_l = \lceil \log_8(7(t_{l+1} - t_l) + 1) - 1 \rceil$ be the maximum number of block within this episode. For notational convenience, we can extend $\tau_{l,m} = t_{l+1} - 1$ for $m > M_l$. Then,

\begin{align*}
\sum_{l=1}^T \E{}{\sum_{t=t_l}^{t_{l+1}}\delta_t(x_t^\sharp, x_t)} &\leq   \sum_{l=1}^T \E{}{\sum_{m=0}^{M_l}\sum_{t=\tau_{l,m}}^{\tau_{l,m+1}-1}\delta_t(x_t^\sharp, x_t)} \,.
\end{align*}

\textbf{Discretization trick within each block.}
Within each block $[\tau_{l,m}, \tau_{l,m+1}[$, we relate the regret $\delta_t(x_t^\sharp, x_t)$ to the regret of their corresponding parent bins at depth $m$, introducing a discretization bias of order $1/2^m$:
\begin{align*}
\sum_{t=\tau_{l,m}}^{\tau_{l,m+1}-1}\delta_t(x_t^\sharp, x_t) &= \sumblock \mu_t(x_t^\sharp) - \mu_t(x_t)\\
&=\sumblock \mu_t(x_t^\sharp) - \mu_t(B_{t,m}^\sharp) + \sumblock \mu_t(B_{t,m}^\sharp)-\mu_t(x_t)
\end{align*}
The first term captures the \emph{cumulative discretization error} over one block $[\tau_{l,m},\tau_{l,m+1}[$ at depth $m$, and by \cref{assumption:lipschitz}, it is upper bounded as
\begin{align*}
    \sumblock \mu_t(x_t^\sharp) - \mu_t(B_{t,m}^\sharp)\leq \sumblock |\mu_t(x_t^\sharp) - \mu_t(B_{t,m}^\sharp)| \leq \frac{1}{2^m}(\tau_{l,m+1}-\tau_{l,m})\leq 4^m\,.
\end{align*}
For the second term, by design of our sampling scheme (\cref{alg:sampling_scheme}), if $B_{t,m}$ is active, then conditionally on $B_{t,m}=B$ we have $x_t\sim\mathcal{U}(B)$. Otherwise, $x_t$ is sampled uniformly from one of its active parent at higher depth (that has no active child), and conditionally on $B_{t,m}=B$ we also have $x_t\sim\mathcal{U}(B)$, and therefore in both cases, $\forall t,\,\E{}{\mu_t(x_t)\condi B_{t,m}=B}=\mu_t(B)$. This yields
\begin{align*}
\E{}{\sum_{t=\tau_{l,m}}^{\tau_{l,m+1}-1}  \mu_t(B^\sharp_{t,m}) - \mu_t(x_t)}&= \E{}{\sum_{t=\tau_{l,m}}^{\tau_{l,m+1}-1}  \mu_t(B^\sharp_{t,m}) - \mu_t(B_{t,m})}\\
&=\E{}{\sum_{t=\tau_{l,m}}^{\tau_{l,m+1}-1}  \delta_t(B^\sharp_{t,m},B_{t,m})}\,.
\end{align*}
Putting this together,
\begin{align}\label{eq:goal_regret_one_block}
\E{}{\sum_{m=0}^{M_l}\sum_{t=\tau_{l,m}}^{\tau_{l,m+1}-1}\delta_t(x_t^\sharp, x_t)}
&\leq \E{}{\sum_{m=0}^{M_l} 4^m}  + \E{}{\sum_{m=0}^{M_l} \sum_{t=\tau_{l,m}}^{\tau_{l,m+1}-1}\delta_t(B^\sharp_{t,m}, B_{t,m})}\,.
\end{align}
\textbf{Upper bounding the bias.} Using the definition of $M_l$, we upper bound the bias over one episode:
\begin{align*}
\E{}{\sum_{m=0}^{M_l} 4^m} 
= \frac{4^{M_l+1} - 1}{3} 
\leq 6\E{}{(t_{l+1} - t_l)^{2/3}}.
\end{align*}
Summing over all episodes yields:
\begin{align}\label{eq:bias_term}
\sum_{l=1}^T \E{}{\sum_{m=0}^{M_l} 4^m} 
\leq 6 \sum_{l=1}^T \E{}{(t_{l+1} - t_l)^{2/3}}.
\end{align}
\textbf{Summary of regret decomposition.} 
So far, we have shown that expected dynamic regret is upper bounded as
\begin{align*}
\E{}{R(\pi_{\algo}, T)} &\leq \log(T) \sum_{i=0}^{\tilde{L}_T} (\tau_{i+1} - \tau_i)^{2/3} 
+ 6 \sum_{l=1}^T \E{}{(t_{l+1} - t_l)^{2/3}} \\
&+ \sum_{l=1}^T \E{}{\sum_{m=0}^{M_l} \sumblock \delta_t(B_{t,m}^\sharp, B_{t,m})}
\end{align*}
Using the fact that event $\mathcal{E}_1$ holds with probability at least $1-1/T^3$ (\cref{prop:concentration}), and the fact that rewards are bounded in $[0, 1]$, we get
\begin{align}\label{eq:summary}
\E{}{R(\pi_{\algo}, T)} &\leq \log(T) \sum_{i=0}^{\tilde{L}_T} (\tau_{i+1} - \tau_i)^{2/3} 
+ 6 \sum_{l=1}^T \E{}{\I{\mathcal{E}_1}(t_{l+1} - t_l)^{2/3}} + 6\frac{T^2}{T^3}\nonumber\\
&+ \sum_{l=1}^T \E{}{\I{\mathcal{E}_1}\sum_{m=0}^{M_l} \sumblock \delta_t(B_{t,m}^\sharp, B_{t,m})}  + \frac{T^2}{T^3}\nonumber\\
&= \log(T) \sum_{i=0}^{\tilde{L}_T} (\tau_{i+1} - \tau_i)^{2/3} 
+ 6 \sum_{l=1}^T \E{}{\I{\mathcal{E}_1}(t_{l+1} - t_l)^{2/3}} \nonumber\\
&+ \sum_{l=1}^T \E{}{\I{\mathcal{E}_1}\sum_{m=0}^{M_l} \sumblock \delta_t(B_{t,m}^\sharp, B_{t,m})} +\frac{7}{T}\,.
\end{align}
It remains to bound the terms 
\begin{align*}
    \sum_{l=1}^T \E{}{\I{\mathcal{E}_1}(t_{l+1} - t_l)^{2/3}}
\end{align*}
and 
\begin{align*}
    \sum_{l=1}^T \E{}{\I{\mathcal{E}_1}\sum_{m=0}^{M_l} \sumblock \delta_t(B_{t,m}^\sharp, B_{t,m})}\,.
\end{align*}
We first focus on the latter term. We introduce the following useful definition.
\begin{definition}[\textbf{Last bin in }$\bm{\cBM}$: $\bm{\Blast_{l,m}}$]\label{def:Blast}
We denote by $\Blast_{l,m} \in \cT_m$ the last bin (with ties broken arbitrarily) at depth $m$ within block $[\tau_{l,m}, \tau_{l,m+1}[$ that belongs to $\cBM$.
\end{definition}
Then we decompose the last sum of \eqref{eq:summary}, and condition on $\cF_{\tau_{l,m}}$ using a tower rule,
\begin{align}\label{eq:reconditionning}
&\E{}{\sum_{m=0}^{M_l}\sum_{t=\tau_{l,m}}^{\tau_{l,m+1}-1}\delta_t(B^\sharp_{t,m}, B_{t,m})} \nonumber\\
&= \E{}{\sum_{m=0}^{M_l}\sum_{t=\tau_{l,m}}^{\tau_{l,m+1}-1}\delta_t(\Blast_{l,m}, B_{t,m})} + \E{}{\sum_{m=0}^{M_l}\sum_{t=\tau_{l,m}}^{\tau_{l,m+1}-1}\delta_t(B_{t,m}^\sharp, \Blast_{l,m})}\nonumber\\
&=\E{}{\sum_{m=0}^{M_l}\E{}{\sum_{t=\tau_{l,m}}^{\tau_{l,m+1}-1}\delta_t(\Blast_{l,m}, B_{t,m})\,\Big|\,\cF_{\tau_{l,m}}}}  + \E{}{\sum_{m=0}^{M_l}\E{}{\sum_{t=\tau_{l,m}}^{\tau_{l,m+1}-1}\delta_t(B_{t,m}^\sharp, \Blast_{l,m})\,\Big|\,\cF_{\tau_{l,m}}} }\,,
\end{align}
where we recall we defined the filtration $(\cF_t)_{t\geq 1}$ as $\cF_{t} = \sigma(\{x_s, Y_s(x_s)\}_{s=1}^{t-1})$, with by convention $\cF_{1}$ being the trivial sigma algebra. For the next two subsections, we focus on the conditional expectations
\begin{align*}
    \textcolor{blue}{(A)}=\E{}{\I{\mathcal{E}_1}\sum_{t=\tau_{l,m}}^{\tau_{l,m+1}-1}\delta_t(\Blast_{l,m}, B_{t,m})\,\Big|\,\cF_{\tau_{l,m}}}
\end{align*}
and
\begin{align*}
    \textcolor{blue}{(B)}=\E{}{\I{\mathcal{E}_1}\sum_{t=\tau_{l,m}}^{\tau_{l,m+1}-1}\delta_t(B_{t,m}^\sharp, \Blast_{l,m})\,\Big|\,\cF_{\tau_{l,m}}}\,,
\end{align*}
and our goal is to show that these terms are upper-bounded almost surely by a term of order of $4^m$.

We conclude this subsection by noting two important observations:
\begin{itemize}
    \item At the start of each block $\tau_{l,m}$, all previous observations are discarded.
    \item Replays within block $[\tau_{l,m}, \tau_{l,m+1}[$ are scheduled at $\tau_{l,m}$, \emph{independently of observations prior to that round}. Thus, for $t\in[\tau_{l,m},\tau_{l,m+1}[$ and $d<m$, $R_{t,d}$ is $\cF_{\tau_{l,m}}$-measurable.
\end{itemize}

\subsection{Upper bounding \textcolor{blue}{(A)}}
The term ${\sum_{t=\tau_{l,m}}^{\tau_{l,m+1}-1}\delta_t(\Blast_{l,m}, B_{t,m})}$ captures the cumulative regret between any bin that is deemed safe by the \emph{algorithm} and the bin $B_{t,m}$ selected by the algorithm at each round $t$. Notably, $B_{t,m}$ is either
\begin{itemize}
    \item A bin at depth $m$ deemed safe and \emph{retained} in the active set $\cBM$, or
    \item A bin that has been \emph{evicted} (or whose parent was evicted at a shallower depth $d < m$) and is being re-explored as part of a \emph{replay} initiated at any active depth $d<m$.
\end{itemize}
Importantly, evicted bins can reappear during replays, but only through replays initiated at depths strictly less than $m$. We aim to show that \textcolor{blue}{(A)} is upper-bounded almost surely by a quantity of order $4^m$.
\begin{proposition}[\textbf{Upper bound of \textcolor{blue}{(A)}}]\label{prop:upper_bound_term_A}
There exists a positive numerical constant $c_A > 0$ such that
\begin{align*}
    \textcolor{blue}{(A)} \leq c_A\log^2(T) 4^m\,.
\end{align*}
\end{proposition}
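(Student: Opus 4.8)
The plan is to attribute the regret in $\textcolor{blue}{(A)}$ to the shallowest active replay at each round and to bound each replay's contribution separately. First I would record the structural fact that $\Blast_{l,m}$, being the last bin surviving in $\cBM$, is never evicted during the block; since evicting a bin evicts all its descendants, none of its ancestors $\Par(\Blast_{l,m},d)$ is evicted either. Hence $\Par(\Blast_{l,m},d)$ is a fixed \emph{safe} comparator that remains active at every depth $d$ throughout the block, which is exactly the hypothesis required by \cref{cor:useful_corollary}. Partitioning the rounds of the block according to the minimum active depth $\dmin=\min\cD_t$, I write
\[
\sum_{t=\tau_{l,m}}^{\tau_{l,m+1}-1}\delta_t(\Blast_{l,m},B_{t,m})=\sum_{d=0}^{m}\ \sum_{t:\,\dmin=d}\delta_t(\Blast_{l,m},B_{t,m}),
\]
where the rounds with $\dmin=d<m$ are covered by the pairwise non-overlapping depth-$d$ replays, and $d=m$ corresponds to the single main replay running over the whole block.

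For a fixed $d$ and a replay at depth $d$, on the rounds where $\dmin=d$ the sampling scheme (\cref{alg:sampling_scheme}) draws $B_{t,d}\sim\mathcal{U}(\cB_t(d))$ uniformly and then $B_{t,m}\subseteq B_{t,d}$. I telescope the depth-$m$ regret through the two depth-$d$ bins $\Par(\Blast_{l,m},d)$ and $B_{t,d}$:
\begin{align*}
\delta_t(\Blast_{l,m},B_{t,m})
&=\big(\mu_t(\Blast_{l,m})-\mu_t(\Par(\Blast_{l,m},d))\big)\\
&\quad+\delta_t(\Par(\Blast_{l,m},d),B_{t,d})
+\big(\mu_t(B_{t,d})-\mu_t(B_{t,m})\big).
\end{align*}
The first and third terms are pure discretization errors between nested bins of width $2^{-d}$; by \cref{assumption:lipschitz} each is at most $2^{-d}$ per round, hence at most $8^d\cdot 2^{-d}=4^d$ summed over a replay of length at most $8^d$. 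The middle term is a genuine depth-$d$ relative regret against the never-evicted comparator $\Par(\Blast_{l,m},d)$, and for $d=m$ the two discretization terms vanish so only this term remains.

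The crux is to show that, on $\mathcal{E}_1$, the depth-$d$ relative regret $\sum_t\delta_t(\Par(\Blast_{l,m},d),B_{t,d})$ accumulated during one replay is $O(\log^2(T)4^d)$. Since $\textcolor{blue}{(A)}$ is a conditional expectation, by the tower rule it suffices to bound the \emph{expected} per-round regret $\tfrac{1}{|\cB_t(d)|}\sum_{B\in\cB_t(d)}\delta_t(\Par(\Blast_{l,m},d),B)$ summed over the replay, using that sampling is exactly uniform at $\dmin=d$. On $\mathcal{E}_1$, \cref{cor:useful_corollary} controls the cumulative relative regret of every bin that survives up to a given round by $(c_0+c_1)\log(T)\sqrt{(\cdot)2^d\vee4^d}+8(\cdot)/2^d$; summing bin-by-bin, weighted by $1/|\cB_t(d)|$, I would convert these cumulative bounds into a total-regret bound via a Cauchy--Schwarz argument over the $2^d$ bins together with a peeling over the $O(\log T)$ dyadic length scales of the eviction threshold, the latter producing the second logarithmic factor. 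This successive-elimination bound under \emph{hierarchical} sampling --- where uniformity holds only at $\dmin$, the active set shrinks by evictions that reset at the start of each replay, and individual-round gaps may be negative so one cannot bound the realized regret pathwise --- is the main obstacle, and is precisely where the bad-segment decomposition of a bin's cumulative regret is needed.

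Finally I would sum over replays. Combining the three terms, each depth-$d$ replay contributes $O(\log^2(T)4^d)$ on $\mathcal{E}_1$. Because the replay schedule \eqref{eq:proba_of_replay} is drawn independently of the reward feedback, I take expectation over it inside $\E{}{\cdot\mid\cF_{\tau_{l,m}}}$: a depth-$d$ replay may start only at the $8^{m-d}$ rounds with $s-\tau_{l,m}\equiv0\,[8^d]$, the $k$-th of which triggers with probability $\sqrt{8^d/(k\,8^d)}=k^{-1/2}$, so the expected number of depth-$d$ replays in the block is $\sum_{k=1}^{8^{m-d}}k^{-1/2}=O(8^{(m-d)/2})$. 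Hence the depth-$d$ contribution to $\textcolor{blue}{(A)}$ is $O\!\big(8^{(m-d)/2}\log^2(T)4^d\big)=O\!\big(\log^2(T)\,2^{(3m+d)/2}\big)$, and summing this geometric series over $d=0,\dots,m-1$ yields $O(\log^2(T)2^{2m})=O(\log^2(T)4^m)$, while the main phase $d=m$ (a single replay) contributes $O(\log^2(T)4^m)$ as well. This establishes $\textcolor{blue}{(A)}\le c_A\log^2(T)4^m$ for a suitable universal constant $c_A$.
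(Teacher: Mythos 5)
Your overall architecture coincides with the paper's: the reduction of $\textcolor{blue}{(A)}$ to the uniformly-averaged relative regret at depth $\dmin$ against the never-evicted ancestors $\Par(\Blast_{l,m},d)$ is exactly \cref{prop:alternative_form_A}, your per-round discretization bias of order $2^{-d}$ matches the treatment of $\textcolor{blue}{(A.2)}$, and your replay-counting step (expected $O(\sqrt{8^{m-d}})$ depth-$d$ replays, geometric sum over depths) reproduces \eqref{eq:bound_mixture_proba}. Those parts are sound.

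The genuine gap is the step you yourself flag as the crux: showing that, on $\mathcal{E}_1$, a single depth-$d$ replay contributes
\begin{align*}
\sum_{t}\frac{1}{|\cB_t(d)|}\sum_{B\in\cB_t(d)}\delta_t\bigl(\Par(\Blast_{l,m},d),B\bigr)\;\le\; O\bigl(\log^2(T)\,4^d\bigr)\,.
\end{align*}
You leave this unproved and gesture at "Cauchy--Schwarz over the $2^d$ bins plus peeling", adding that the bad-segment decomposition is needed here; neither suggestion works. The paper's mechanism is different and elementary: re-index the bins of $\cT_d$ by their eviction time within the replay (\cref{def:bin_ordering,def:eviction_time}), so that while the $k$-th evicted bin $B_d^{(k)}$ is still active one has $|\cB_t(d)|\ge 2^d+1-k$; apply \cref{cor:useful_corollary} to the pair $(B_d^{(k)},\Par(\Blast_{l,m},d))$, both active on that window, to bound its cumulative gap by $c_2\log(T)4^d$; then the harmonic sum $\sum_{k=1}^{2^d}\frac{1}{2^d+1-k}\le\log(2^d)+1\le\log(T)+1$ produces the second logarithmic factor. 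Cauchy--Schwarz cannot substitute for this: even granting the eviction ordering, $\sum_k w_kC_k\le(\sum_k w_k^2)^{1/2}(\sum_k C_k^2)^{1/2}=O(\log(T)\,2^{d/2}4^d)$, and the extra $2^{d/2}$ breaks the final geometric sum over depths (the $d=m-1$ term alone is of order $2^{(3m+2d)/2}\approx 2^{5m/2}\gg 4^m$). Moreover, the bad segments of \cref{def:bad_segment} are the device for term $\textcolor{blue}{(B)}$, where the comparator $B_{t,m}^\sharp$ may already have been evicted and one must prove a well-timed replay is scheduled (\cref{prop:a_replay_occurs}); in $\textcolor{blue}{(A)}$ every comparison is between two simultaneously active bins, so \cref{cor:useful_corollary} applies pathwise on $\mathcal{E}_1$. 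Your worry that signed per-round gaps prevent a pathwise bound is resolved simply by taking positive parts $\bigl(\sum_t\delta_t\bigr)_+$ of the per-bin cumulative gaps, as the paper does, which is legitimate because the weights $1/|\cB_t(d)|$ are non-decreasing within a replay and the corollary bounds the gap on every sub-interval.
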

Our approach begins by upper bounding \textcolor{blue}{(A)} through a decomposition at depth $\dmin$ and leverage the properties of our sampling scheme (\cref{alg:sampling_scheme}) . In particular, it leverages the fact that at each round $t$, we choose \emph{uniformly} an active bin at depth $\dmin$.

\begin{lemma}\label{prop:alternative_form_A}
We have that \textcolor{blue}{(A)} is upper bounded as
\begin{align*}
    \textcolor{blue}{(A)}\leq \E{}{\I{\mathcal{E}_1}\sum_{t=\tau_{l,m}}^{\tau_{l,m+1}-1}\left(\sum_{B\in\cB_t(\dmin)}\frac{\delta_t\left( \Par(\Blast_{l,m}, \dmin), B\right)}{|\cB_t(\dmin)|}+\frac{4}{2^{\dmin}}\right)\condicFgros}\,.
\end{align*}
\end{lemma}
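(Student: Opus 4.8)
The plan is to pass from the depth-$m$ relative regret $\delta_t(\Blast_{l,m}, B_{t,m})$ to a depth-$\dmin$ quantity, where the uniform sampling at the minimum active depth gives exact control. Recall that the sampling scheme (\cref{alg:sampling_scheme}) first draws a bin uniformly from $\cB_t(\dmin)$ and only afterwards descends into its sub-bins, so the depth-$\dmin$ bin containing $x_t$ is exactly this uniformly sampled bin: $B_{t,m}\subseteq B_{t,\dmin}$ and $\mathbb{P}(B_{t,\dmin}=B\mid\cF_{t-1})=1/|\cB_t(\dmin)|$ for every $B\in\cB_t(\dmin)$. Note also that $\dmin$ and $\cB_t(\dmin)$ are $\cF_{t-1}$-measurable, since the replay schedule of the block is fixed at $\tau_{l,m}$ and evictions up to round $t-1$ are $\cF_{t-1}$-measurable.

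First I would establish a pathwise reduction from depth $m$ to depth $\dmin$. Since $\Blast_{l,m}\subseteq\Par(\Blast_{l,m},\dmin)$ and $B_{t,m}\subseteq B_{t,\dmin}$ are nested within bins of $\cT_{\dmin}$, \cref{assumption:lipschitz} gives $|\mu_t(B')-\mu_t(B)|\le 1/2^{\dmin}$ for nested $B'\subseteq B$ with $B\in\cT_{\dmin}$, exactly as in \cref{prop:sampling_bias}. Upper bounding $\mu_t(\Blast_{l,m})$ and lower bounding $\mu_t(B_{t,m})$ then yields, for every round $t$ and every trajectory,
\begin{align*}
\delta_t(\Blast_{l,m}, B_{t,m}) \le \delta_t\big(\Par(\Blast_{l,m},\dmin),\, B_{t,\dmin}\big) + \frac{2}{2^{\dmin}}\,.
\end{align*}

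The key step is to replace $B_{t,\dmin}$ by the uniform average over $\cB_t(\dmin)$. Writing $g_t=\delta_t(\Par(\Blast_{l,m},\dmin), B_{t,\dmin})$ and $\bar g_t=\sum_{B\in\cB_t(\dmin)}\delta_t(\Par(\Blast_{l,m},\dmin),B)/|\cB_t(\dmin)|$, the crucial observation is that the $\Par(\Blast_{l,m},\dmin)$ term cancels in the difference,
\begin{align*}
g_t-\bar g_t = \sum_{B\in\cB_t(\dmin)}\frac{\mu_t(B)}{|\cB_t(\dmin)|} - \mu_t(B_{t,\dmin})\,,
\end{align*}
so that $g_t-\bar g_t$ no longer depends on the future-measurable bin $\Blast_{l,m}$; moreover, uniform sampling at depth $\dmin$ gives $\E{t-1}{g_t-\bar g_t}=0$, making $(g_t-\bar g_t)_t$ a martingale difference sequence adapted to $(\cF_t)_t$. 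Multiplying the pathwise bound by $\I{\mathcal{E}_1}$, summing over the block, and conditioning on $\cF_{\tau_{l,m}}$ then reduces the claim to bounding $\E{}{\I{\mathcal{E}_1}\sum_t(g_t-\bar g_t)\mid\cF_{\tau_{l,m}}}$ by $\E{}{\I{\mathcal{E}_1}\sum_t 2/2^{\dmin}\mid\cF_{\tau_{l,m}}}$, the gap $4/2^{\dmin}-2/2^{\dmin}$ in the statement being exactly the slack I reserve for the martingale term.

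The main obstacle is precisely the future-measurability of $\Blast_{l,m}$: as the last bin surviving in $\cBM$, it depends on the whole block and forbids a naive round-by-round conditioning. This is dissolved by the cancellation above, after which the residual martingale term is controlled by writing $\I{\mathcal{E}_1}=1-\I{\mathcal{E}_1^c}$. Since the block-end round $\tau_{l,m+1}$ is a stopping time (the block stops either after $8^m$ rounds or when $\cBM=\emptyset$), extending $g_t-\bar g_t$ by $0$ past $\tau_{l,m+1}$ makes the sum a martingale over a fixed horizon, so $\E{}{\sum_t(g_t-\bar g_t)\mid\cF_{\tau_{l,m}}}=0$; meanwhile $|\sum_t(g_t-\bar g_t)|\le\tau_{l,m+1}-\tau_{l,m}\le T$ and $\mathbb{P}(\mathcal{E}_1^c)\le T^{-3}$ (\cref{prop:concentration}) render the $\I{\mathcal{E}_1^c}$ correction negligible, of order $T^{-2}$, which is absorbed by the reserved slack. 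Combining the three steps gives the stated bound.
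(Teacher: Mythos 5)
Your proposal follows essentially the same route as the paper's proof: a pathwise Lipschitz reduction from the depth-$m$ pair $(\Blast_{l,m}, B_{t,m})$ to the depth-$\dmin$ pair $\left(\Par(\Blast_{l,m},\dmin), B_{t,\dmin}\right)$ (you get the tighter bias $2/2^{\dmin}$ where the paper uses $4/2^{\dmin}$; both suffice), followed by replacing the uniformly sampled depth-$\dmin$ bin by the uniform average over $\cB_t(\dmin)$. Your cancellation observation is in fact a genuine improvement in rigor over the paper's write-up: the paper proves its conditional-expectation identity for a \emph{fixed} bin $B\in\cT_m$ and then writes ``choosing $B=\Blast_{l,m}$'', i.e.\ it substitutes a future-measurable random bin into an identity proved pointwise in $B$. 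That substitution is legitimate precisely because the difference $g_t-\bar g_t$ between the two sides is the same random variable for every choice of $B$ --- which is exactly the fact you make explicit.

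The gap is in your final absorption step. The lemma is an inequality between $\cF_{\tau_{l,m}}$-conditional expectations, hence an almost-sure inequality between random variables. After the decomposition $\I{\mathcal{E}_1}=1-\I{\mathcal{E}_1^c}$ and optional stopping, your residual term satisfies
\begin{align*}
\left|\E{}{\I{\mathcal{E}_1^c}\sum_{t=\tau_{l,m}}^{\tau_{l,m+1}-1}(g_t-\bar g_t)\condicFgros}\right|
\;\le\; 8^m\,\mathbb{P}\left(\mathcal{E}_1^c\condi\cF_{\tau_{l,m}}\right),
\end{align*}
which involves the \emph{conditional} probability of $\mathcal{E}_1^c$, whereas \cref{prop:concentration} controls only the unconditional one, $\mathbb{P}(\mathcal{E}_1^c)\le 1/T^3$. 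On an $\cF_{\tau_{l,m}}$-measurable event of small but positive probability, $\mathbb{P}(\mathcal{E}_1^c\condi\cF_{\tau_{l,m}})$ can be of order one, so the residual can be as large as $8^m$; simultaneously, your reserved slack $\E{}{\I{\mathcal{E}_1}\sum_t 2/2^{\dmin}\condicFgros}$ degenerates there, since it carries the factor $\I{\mathcal{E}_1}$. Hence the residual is \emph{not} almost surely dominated by the slack, and the inequality as stated is not established; what your argument proves is the stated bound up to an additive error whose \emph{total} expectation is $O(T^{-2})$. That weaker conclusion is sufficient for how the lemma is used downstream (the conditional bounds are re-integrated in \eqref{eq:reconditionning} before anything else happens), but to close the proof you must carry this correction term explicitly rather than claim it is absorbed. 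For what it is worth, the paper's own proof avoids ever producing such a residual by commuting $\I{\mathcal{E}_1}$ with the inner conditioning in its ``tower rule'' step, which is not valid as written either; your decomposition is the honest route, it just needs the error handled in expectation rather than almost surely.
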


\begin{proof}[Proof of \cref{prop:alternative_form_A}]
For all $t\in[\tau_{l,m}, \tau_{l,m+1}[$, we first relate the relative regret of $B_{t,m}$ to $\Blast_{l,m}$ to the relative regret of their parent at depth $\dmin$,
\begin{align*}
    \delta_t(\Blast_{l,m}, B_{t,m}) &= \mu_t(\Blast_{l,m}) - \mu_t(B_{t,m})\\
    &= \mu_t(\Blast_{l,m})- \mu_t\left(\Par(\Blast_{l,m}, \dmin)\right)  \\
    &+ \mu_t\left(\Par(\Blast_{l,m}, \dmin)\right) - \mu_t(\Par(B_{t,m}, \dmin)) \\
    &+ \mu_t(\Par(B_{t,m}, \dmin)) - \mu_t(B_{t,m})\\
    &\leq \left|\mu_t(\Blast_{l,m})- \mu_t\left(\Par(\Blast_{l,m}, \dmin)\right) \right| \\
    &+ \mu_t\left(\Par(\Blast_{l,m}, \dmin)\right) - \mu_t(\Par(B_{t,m}, \dmin)) \\
    &+ \left|\mu_t(\Par(B_{t,m}, \dmin)) - \mu_t(B_{t,m})\right|\\
    &\leq \delta_t\left( \Par(\Blast_{l,m}, \dmin), \Par(B_{t,m}, \dmin) \right) + \frac{4}{2^{\dmin}}\,,
\end{align*}
where in the last inequality we used the fact that $B_{t,m}\subseteq \Par(B_{t,m},\dmin)$, $\Blast_{l,m}\subseteq\Par(\Blast_{l,m}, \dmin)$ and \cref{assumption:lipschitz}. Note that $\Par(B_{t,m}, \dmin)$ is necessarily active at depth $\dmin$.
Summing this inequality over the block $[\tau_{l,m}, \tau_{l,m+1}[$,
\begin{align*}
\sum_{t=\tau_{l,m}}^{\tau_{l,m+1}-1}\delta_t\left(\Blast_{l,m}, B_{t,m}\right) &\leq \underbrace{\sum_{t=\tau_{l,m}}^{\tau_{l,m+1}-1}\delta_t\left( \Par(\Blast_{l,m}, \dmin), \Par(B_{t,m}, \dmin) \right)}_{\text{Regret of parents at minimum active depth}} \\
&+ \underbrace{ \sum_{t=\tau_{l,m}}^{\tau_{l,m+1}-1}\frac{4}{2^{\dmin}}}_{\text{cumulative bias}}\,.
\end{align*}
By tower rule, since $\cF_{\tau_{l,m}}\subseteq\cF_{t-1}$ we have for the first sum, for any bin $B\in\cT_m$
\begin{align*}
    &\E{}{\I{\mathcal{E}_1}\sum_{t=\tau_{l,m}}^{\tau_{l,m+1}-1}\delta_t\left( \Par(B, \dmin), \Par(B_{t,m}, \dmin) \right)\,\Big|\,\cF_{\tau_{l,m}}} \\
    &=\E{}{\I{\mathcal{E}_1}\sum_{t=\tau_{l,m}}^{\tau_{l,m+1}} \E{}{\delta_t\left( \Par(B, \dmin), \Par(B_{t,m}, \dmin) \right)\,\bigg|\,\cF_{t-1}}\,\Big|\,\cF_{\tau_{l,m}}}\,.
\end{align*}
Since $\Par(B_{t,m},\dmin)$ is an active bin at depth $\dmin$ and since we choose \emph{uniformly} an active bin at depth $\dmin$, we can write for any bin $B\in\cT_m$,
\begin{align*}
    &\E{}{\delta_t\left( \Par(B, \dmin), \Par(B_{t,m}, \dmin) \right)\,\bigg|\, \cF_{t-1}} \\
    &= \sum_{B\in\cB_t(\dmin)}\delta_t\left( \Par(B, \dmin), B \right)\mathbb{P}\left(\Par(B_{t,m}, \dmin)=B\,\big|\, \cF_{t-1}\right)\\
    &=\sum_{B\in\cB_t(\dmin)}\delta_t\left( \Par(B, \dmin), B \right)\frac{1}{|\cB_t(\dmin)|}\,,
\end{align*}
and therefore for all $B\in\cT_m$,
\begin{align*}
&\E{}{\I{\mathcal{E}_1}\sum_{t=\tau_{l,m}}^{\tau_{l,m+1}-1}\delta_t\left( \Par(B, \dmin), \Par(B_{t,m}, \dmin) \right)\,\Big|\,\cF_{\tau_{l,m}}}\\
&=\E{}{\I{\mathcal{E}_1}\sumblock \sum_{B\in\cB_t(\dmin)}\delta_t\left( \Par(B, \dmin), B \right)\frac{1}{|\cB_t(\dmin)|}\,\Big|\,\cF_{\tau_{l,m}}}
\end{align*}
Choosing $B = \Blast_{l,m}\in\cT_m$ in particular, the result above implies
\begin{align*}
&\E{}{\I{\mathcal{E}_1}\sum_{t=\tau_{l,m}}^{\tau_{l,m+1}-1}\delta_t\left( \Par(\Blast_{l,m}, \dmin), \Par(B_{t,m}, \dmin) \right)\,\Big|\,\cF_{\tau_{l,m}}}\\
&=\E{}{\I{\mathcal{E}_1}\sumblock \sum_{B\in\cB_t(\dmin)}\delta_t\left( \Par(\Blast_{l,m}, \dmin), B \right)\frac{1}{|\cB_t(\dmin)|}\,\Big|\,\cF_{\tau_{l,m}}}\,,
\end{align*}
and therefore
\begin{align*}
&\E{}{\I{\mathcal{E}_1}\sum_{t=\tau_{l,m}}^{\tau_{l,m+1}-1}\delta_t\left(\Blast_{l,m}, B_{t,m}\right)\condicFgros} \\
&\leq \E{}{\I{\mathcal{E}_1}\sumblock\left( \sum_{B\in\cB_t(\dmin)}\delta_t\left( \Par(\Blast_{l,m}, \dmin), B \right)\frac{1}{|\cB_t(\dmin)|}+\frac{4}{2^{\dmin}}\right)\,\Big|\,\cF_{\tau_{l,m}}}
\end{align*}
\end{proof}
We are now ready to prove \cref{prop:upper_bound_term_A}. The rest of this subsection is dedicated to this purpose.

Thanks to \cref{prop:alternative_form_A}, it now suffices to bound both the parent's regret contribution at depth $\dmin$
\begin{align*}
    \textcolor{blue}{(A.1)} = \E{}{\I{\mathcal{E}_1}\sum_{t=\tau_{l,m}}^{\tau_{l,m+1}-1}\sum_{B\in\cB_t(\dmin)}\frac{\delta_t\left( \Par(\Blast_{l,m}, \dmin), B\right)}{|\cB_t(\dmin)|}\condicFgros}
\end{align*}
and the cumulative bias term
\begin{align*}
    \textcolor{blue}{(A.2)} = \E{}{\I{\mathcal{E}_1}\sum_{t=\tau_{l,m}}^{\tau_{l,m+1}-1}\frac{4}{2^{\dmin}}\condicFgros}\,. 
\end{align*}

\textbf{Upper bounding \textcolor{blue}{(A.1.)}.}
Considering all possible value of $\dmin$, \textcolor{blue}{(A.1)} can be rewritten using our ordering introduced in \cref{def:bin_ordering}: denoting $B_d^{(1)}$, $B_d^{(1)}\dots,B_d^{(2^d)}$ the bins at depth $d$, \begin{align}\label{eq:A1_two_sums}
&\textcolor{blue}{(A.1)}\nonumber\\
&=\E{}{\I{\mathcal{E}_1} \sum_{t=\tau_{l,m}}^{\tau_{l,m+1}-1} \sum_{d=0}^m \sum_{B\in\cB_t(d)}\I{\dmin=d}\frac{\delta_t\left(\Par(\Blast_{l,m}, d), B\right)}{|\cB_t(d)|}\condicFgros}\nonumber \\
&= \E{}{\I{\mathcal{E}_1}\sum_{t=\tau_{l,m}}^{\tau_{l,m+1}-1}\sum_{d=0}^m \sum_{k=1}^{2^d}\I{\dmin = d}\frac{\delta_t\left(\Par(\Blast_{l,m}, d), B_d^{(k)}\right)}{|\cB_t(d)|}\I{B_d^{(k)}\in\cB_t(d)}\condicFgros} \nonumber \\
&= \underbrace{\E{}{\I{\mathcal{E}_1} \sum_{t=\tau_{l,m}}^{\tau_{l,m+1}-1}\sum_{k=1}^{2^m}\I{\dmin = m}\frac{\delta_t\left(\Blast_{l,m}, B_m^{(k)}\right)}{|\cB_t(m)|}\I{B_m^{(k)}\in\cB_t(m)}\condicFgros}}_{\text{Regret contribution at depth }m}\nonumber \\
&+\underbrace{\E{}{\I{\mathcal{E}_1}\sum_{t=\tau_{l,m}}^{\tau_{l,m+1}-1}\sum_{d=0}^{m-1} \sum_{k=1}^{2^d}\I{\dmin = d}\frac{\delta_t\left(\Par(\Blast_{l,m}, d), B_d^{(k)}\right)}{|\cB_t(d)|}\I{B_d^{(k)}\in\cB_t(d)}\condicFgros}}_{\text{Regret contribution of replays of depth }d<m}\,.
\end{align}
We introduce the \emph{eviction time} of a bin $B$, which will be convenient to quantify the regret contribution of each bin.
\begin{definition}[\textbf{Eviction time within a replay} $\bm{M(s,d,B)}$]\label{def:eviction_time}
For all bin $B$, $M(s,d,B)$ denotes the last round where bin $B$ is active within a replay at depth $d$ starting at round $s$, $M(s,d,B)\in[s,s+8^d]$. If bin $B$ is not evicted during this replay, we define $M(s,d,B)=s+8^d$. By abuse of notations, we define $M(\tau_{l,m},m,B)$ as the last round where bin $B$ is retained in $\cBM$, and define $M(\tau_{l,m},m,B)=\tau_{l,m}+8^m-1=\tau_{l,m+1}-1$ if it is not evicted during the block $[\tau_{l,m},\tau_{l,m+1}[$.
\end{definition}

\textbf{Regret contribution at depth $m$.}
Since $\cBM$ is initialized at round $t=\tau_{l,m}$, we have for all bins $B_m^{(k)}$, on event $\mathcal{E}_1$,
\begin{align*}
&\sum_{t=\tau_{l,m}}^{\tau_{l,m+1}-1}\sum_{k=1}^{2^m}\I{\dmin = m}\frac{1}{|\cB_t(m)|}\delta_t\left(\Blast_{l,m}, B_m^{(k)}\right)\I{B_m^{(k)}\in\cB_t(m)}\\
&=\sum_{k=1}^{2^m}\sum_{t=\tau_{l,m}}^{M(\tau_{l,m},m,B_m^{(k)})}\I{\dmin = m}\frac{1}{|\cB_t(m)|}\delta_t\left(\Blast_{l,m}, B_m^{(k)}\right) \,.
\end{align*}

Without any loss of generality, we assume that $\{B_m^{(1)},\cdots,B_m^{(2^m)}\}$ is the ordering according to the \emph{eviction time} of these bins in the block $[\tau_{l,m},\tau_{l,m+1}[$ (otherwise we can always re-index the bins), that is,
\begin{align*}
\tau_{l,m}\leq M\big(\tau_{l,m},d,B_m^{(1)}\big)\leq M\big(\tau_{l,m},m,B_m^{(2)}\big) \leq \dots \leq \underbrace{M\big(\tau_{l,m}, m, B_d^{(2^m)}\big)}_{=\Blast_{l,m}}\leq \tau_{l,m+1}-1\,.
\end{align*}
Thanks to this ordering, we can lower bound the active bin set at depth $d$ as
\begin{align*}
\min_{t\in[\tau_{l,m},M(\tau_{l,m},m,B_m^{(k)})]}|\cB_t(m)|\geq 2^m +1 - k\,,
\end{align*}
and therefore
\begin{align*}
&\sum_{k=1}^{2^m}\sum_{t=\tau_{l,m}}^{M(\tau_{l,m},m,B_m^{(k)})}\I{\dmin = m}\frac{1}{|\cB_t(m)|}\delta_t\left(\Blast_{l,m}, B_m^{(k)}\right) \\
&\leq \sum_{k=1}^{2^m}\frac{1}{2^m+1-k}\sum_{t=\tau_{l,m}}^{M(\tau_{l,m},m,B_m^{(k)})}\I{\dmin = m}\delta_t\left(\Blast_{l,m}, B_m^{(k)}\right)\,.
\end{align*}

Moreover, since $\Blast_{l,m}$ and $B_m^{(k)}$ are by definition not evicted on interval $[\tau_{l,m}, M(\tau_{l,m},m,B_m^{(k)})]$, we have by \cref{cor:useful_corollary} on event $\mathcal{E}_1$ 
\begin{align*}
    &\sum_{t=\tau_{l,m}}^{M(\tau_{l,m},m,B_m^{(k)})}\I{\dmin = m}\delta_t\left(\Blast_{l,m}, B_m^{(k)}\right)\\
    &\leq (c_0+c_1)\log(T)\sqrt{\left(M(\tau_{l,m},m,B_m^{(k)})-\tau_{l,m}\right)2^m \vee 4^m}+\frac{8\left(M(\tau_{l,m},m,B_m^{(k)})-\tau_{l,m}\right)}{2^m}\\
    &\leq (c_0+c_1)\log(T)4^m+8\times 4^m\\
    &\leq (c_0 + c_1 + 8)\log(T)4^m\\
    &\leq c_2\log(T)4^m\,,
\end{align*}
assuming again horizon $T$ satisfies $\log(T)\geq 1$ and setting $\textcolor{Maroon}{c_2=c_0+c_1+8}$. Therefore, the regret contribution of replay at depth $m$ is upper bounded as
\begin{align*}
\sum_{k=1}^{2^m}\sum_{t=\tau_{l,m}}^{M(\tau_{l,m},m,B_m^{(k)})}\I{\dmin = m}\frac{\delta_t\left(\Blast_{l,m}, B_m^{(k)}\right)}{|\cB_t(m)|} &\leq  c_2\log(T)4^m \sum_{k=1}^{2^m}\frac{1}{2^m+1-k} \nonumber \\
&\leq (c_0 + c_1 + 8)\log(T)4^m (\log(2^m)+1) \nonumber\\
&\leq (c_2+1)\log^2(T)4^m\,,
\end{align*}
where we used the fact that $m\leq \log(T)$. 

Therefore, the regret contribution of the replay at depth $m$ is upper bounded as
\begin{align}\label{eq:regret_contrib_replay_m}
&\E{}{\I{\mathcal{E}_1} \sum_{t=\tau_{l,m}}^{\tau_{l,m+1}-1}\sum_{k=1}^{2^m}\I{\dmin = m}\frac{\delta_t\left(\Blast_{l,m}, B_m^{(k)}\right)}{|\cB_t(m)|}\I{B_m^{(k)}\in\cB_t(m)}\condicFgros}\nonumber\\
&\leq (c_2+1)\log^2(T)4^m\,.
\end{align}

\textbf{Regret contribution of replays at depth $d<m$.}
Fix a depth $d \in [\![0, m-1]\!]$ and a bin $B_d^{(k)}$ at this depth. We consider the set of rounds $t > \tau_{l,m}$ for which $\dmin = d$ and $B_d^{(k)} \in \cB_t(d)$, and partition this set into contiguous intervals (represented as blue intervals in \cref{fig:partition}). 

Due to the structure of our replay schedule, each such interval must correspond to a single replay at depth $d$, which begins at some round $s$ satisfying $R_{s,d} = 1$. That is, each interval is initiated at the start of a replay and includes only the rounds during which $B_d^{(k)}$ remains active in that replay. 

Each interval concludes either when $B_d^{(k)}$ is evicted from the active set $\cB_t(d)$, or when the replay itself ends at time $s + 8^d$. Therefore, each interval spans the range $[s, M(s,d,B_d^{(k)})]$, where $M(s,d,B_d^{(k)})$ is the final round during which $B_d^{(k)}$ is active within that specific replay (see \cref{def:eviction_time}).

The key idea of our analysis is to treat these replays at depth $d$ \emph{independently}, and quantify the regret incurred due to playing the bin $B_d^{(k)}$ during each such replay window. This allows us to control the regret by bounding its contribution separately within each replay interval.

\begin{figure}[H]
    \centering
    \includegraphics[width=1\linewidth]{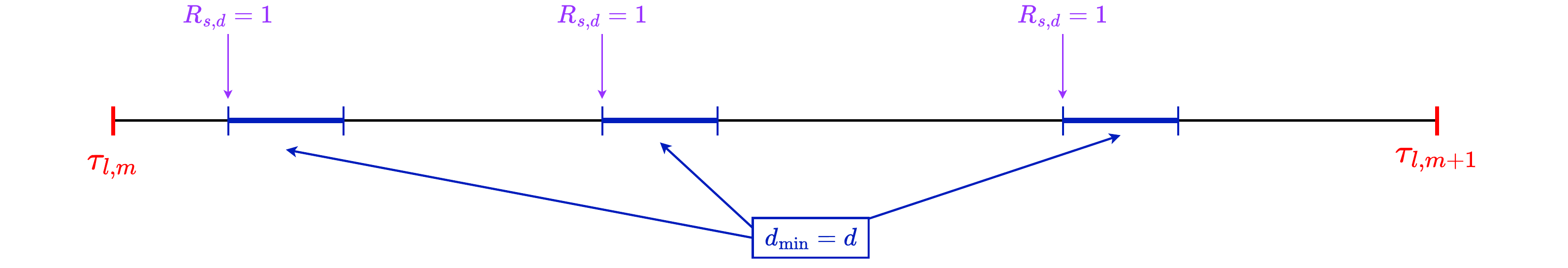}
    \caption{For a given depth $d$, partition of the rounds of a block where $\dmin=d$. Any of these blue intervals should be initialized by the start of a replay at this depth, \emph{i.e.} $R_{s,d}=1$.}
    \label{fig:partition}
\end{figure}

Following these remarks, we can upper bound the second sum of \cref{eq:A1_two_sums} on event $\mathcal{E}_1$ as
\begin{small}
\begin{align*}
&\sum_{t=\tau_{l,m}}^{\tau_{l,m+1}-1}\sum_{d=0}^{m-1} \sum_{k=1}^{2^d}\I{\dmin = d}\frac{\delta_t\left(\Par(\Blast_{l,m}, d), B_d^{(k)}\right)}{|\cB_t(d)|}\I{B_d^{(k)}\in\cB_t(d)}\\
&\leq \sum_{s=\tau_{l,m}}^{\tau_{l,m+1}-1}\sum_{d=0}^{m-1} \sum_{k=1}^{2^d}R_{s,d}\left(\sum_{t=s}^{M(s,d,B_d^{(k)})}\frac{\delta_t\left(\Par(\Blast_{l,m}, d), B_d^{(k)}\right)}{|\cB_t(d)|}\right)_{+} \\
&\leq \sum_{s=\tau_{l,m}}^{\tau_{l,m+1}-1}\sum_{d=0}^{m-1} \sum_{k=1}^{2^d}R_{s,d}\frac{1}{\min_{t\in[s,M(s,d,B_d^{(k)})]}|\cB_t(d)|}\left(\sum_{t=s}^{M(s,d,B_d^{(k)})}\delta_t\left(\Par(\Blast_{l,m}, d), B_d^{(k)}\right)\right)_+\,.
\end{align*}
\end{small}
Again, without any loss of generality, we assume that $\{B_d^{(1)},\cdots,B_d^{(2^d)}\}$ is the ordering corresponding to eviction time of the replay at depth $d$ starting at round $s$,
\begin{align*}
s\leq M\big(s,d,B_d^{(1)}\big)\leq M\big(s,d,B_d^{(2)}\big) \leq \dots \leq M\big(s, d, B_d^{(2^d)}\big)\leq s+8^d\,.
\end{align*}
Thanks to this ordering, we can lower bound the size of the active bin set at depth $d$ as
\begin{align*}
\min_{t\in[s,M(s,d,B_d^{(k)})]}|\cB_t(d)|\geq 2^d +1 - k\,,
\end{align*}
and therefore, 
\begin{align*}
& \sum_{s=\tau_{l,m}}^{\tau_{l,m+1}-1}\sum_{d=0}^{m-1} \sum_{k=1}^{2^d}R_{s,d}\frac{1}{\min_{t\in[s,M(s,d,B_d^{(k)})]}|\cB_t(d)|}\left(\sum_{t=s}^{M(s,d,B_d^{(k)})}\delta_t\left(\Par(\Blast_{l,m}, d), B_d^{(k)}\right)\right)_+\\
&\leq  \sum_{d=0}^{m-1} \sum_{k=1}^{2^d}\frac{1}{2^d - k + 1}\sum_{s=\tau_{l,m}}^{\tau_{l,m+1}-1}R_{s,d}\left(\sum_{t=s}^{M(s,d,B_d^{(k)})}\delta_t\left(\Par(\Blast_{l,m}, d), B_d^{(k)}\right)\right)_+ \,.
\end{align*}
We now proceed to bound the cumulative relative regret incurred by a bin $B_d^{(k)}$ with respect to its parent $\Par(\Blast_{l,m}, d)$. By design of $\Blast_{l,m}$ (it is a safe bin over the entire block), it follows that its parents $\{\Par(\Blast_{l,m}, d)\}_{d \in \cD_t}$ must also be safe. Indeed, if a parent $\Par(\Blast_{l,m}, d)$ were deemed unsafe at some active depth $d \in \cD_t$, it would necessarily be evicted. Due to the eviction mechanism, this would result in the eviction of all of its descendants, including $\Blast_{l,m}$ itself, contradicting the assumption that $\Blast_{l,m}$ remains safe.

Now, since the bin $B_d^{(k)}$ is active over the interval $[s, M(s,d,B_d^{(k)})]$, we can apply \cref{cor:useful_corollary} to obtain the desired bound on the event $\mathcal{E}_1$.

\begin{align*}
\sum_{t=s}^{M(s,d,B_d^{(k)})}\delta_t\left(\Par(\Blast_{l,m}, d), B_d^{(k)}  \right) &\leq (c_0+c_1)\log(T)\sqrt{2^d\left( 
M(s,d,B_d^{(k)}) - s \right)\vee 4^d} \\
&+ \frac{8\left( 
M(s,d,B_d^{(k)})-s \right)}{2^d}\\
&\leq (c_0+c_1)\log(T)\sqrt{2^d8^d\vee 4^d} + \frac{8\times8^d}{2^d}\\
&\leq (c_0+c_1)\log(T)4^d + 8\times4^d \\
&\leq (c_0+c_1+8)\log(T)4^d\\
&\leq c_2\log(T)4^d\,,
\end{align*}
assuming horizon $T$ satisfies $\log(T)\geq 1$, and where the second inequality follows from the fact that the length of a replay at depth $d$ stating at round $s$ is at most $8^d$. We used the same definition of $c_2$ as above. Therefore, we have
\begin{align*}
& \E{}{\I{\mathcal{E}_1}\sum_{d=0}^{m-1} \sum_{k=1}^{2^d}\frac{1}{2^d - k + 1}\sum_{s=\tau_{l,m}}^{\tau_{l,m+1}-1}R_{s,d}\left(\sum_{t=s}^{M(s,d,B_d^{(k)})}\delta_t\left(\Par(\Blast_{l,m}, d), B_d^{(k)}\right)\right)_+ \condicFgros}\\
&\leq c_2 \log(T)\E{}{\I{\mathcal{E}_1}\sum_{d=0}^{m-1} \sum_{k=1}^{2^d}\frac{1}{2^d - k + 1}\sum_{s=\tau_{l,m}}^{\tau_{l,m+1}-1}R_{s,d}4^d \condicFgros} \\
&\leq c_2 \log(T)\E{}{\I{\mathcal{E}_1}\sum_{d=0}^{m-1} \sum_{s=\tau_{l,m}}^{\tau_{l,m+1}-1}R_{s,d} (\log(2^d)+1) 4^d \condicFgros} \\
&\leq c_2\log^2(T)\E{}{\I{\mathcal{E}_1}\sum_{d=0}^{m-1} \sum_{s=\tau_{l,m}}^{\tau_{l,m+1}-1}R_{s,d} 4^d \condicFgros} \\
&\leq c_2\log^2(T)\E{}{\sum_{d=0}^{m-1} \sum_{s=\tau_{l,m}}^{\tau_{l,m+1}-1}R_{s,d} 4^d \condicFgros}\,,
\end{align*}
where we used the fact that $m\leq \log(T)$.

We now leverage the fact that \emph{relatively few replays occur in expectation}, and the fact that the Bernoulli random variables $R_{s,d}$ are sampled \emph{independently of the observations collected during the episode}. More precisely, all Bernoulli variables are drawn \emph{i.i.d.} at the beginning of the block, \emph{i.e.} at round $t = \tau_{l,m}$. This allows us to treat the scheduling decisions as fixed ahead of time. Furthermore, by the design of the replay scheduling mechanism in \cref{alg:sampling_scheme}, 

\begin{align}\label{eq:bound_mixture_proba}
\E{}{\sum_{d=0}^{m-1} \sum_{s=\tau_{l,m}}^{\tau_{l,m+1}-1}R_{s,d} 4^d\condicFgros} 
&\leq \E{}{\sum_{d=0}^{m-1} \sum_{s=\tau_{l,m}+1}^{\tau_{l,m+1}-1}\frac{\I{s-\tau_{l,m}\equiv 0[8^d]}}{\sqrt{s-\tau_{l,m}}} \sqrt{8^d}4^d\condicFgros}\nonumber \\
&\leq \sum_{d=0}^{m-1} \sum_{i=1}^{8^{m-d}+1}\frac{1}{\sqrt{i8^d}}  \sqrt{8^d}4^d\nonumber \\
&\leq 2\sqrt{8}\sum_{d=0}^{m-1}4^d\sqrt{8^{m-d}} \nonumber\\
&\leq 2\sqrt{8}\sqrt{8^m}\frac{\sqrt{2^m}}{\sqrt{2}-1} \leq 6\sqrt{8}\times 4^m\,.
\end{align}
Therefore, the regret contribution of replays at depth $d\in[\![0,m-1]\!]$ is upper bounded as
\begin{align}\label{eq:temp1_A_1}
&\E{}{\I{\mathcal{E}_1}\sum_{t=\tau_{l,m}}^{\tau_{l,m+1}-1}\sum_{d=0}^{m-1} \sum_{k=1}^{2^d}\I{\dmin = d}\frac{\delta_t\left(\Par(\Blast_{l,m}, d), B_d^{(k)}\right)}{|\cB_t(d)|}\I{B_d^{(k)}\in\cB_t(d)}\condicFgros}\nonumber \\
&\leq 6\sqrt{8}c_2\log^2(T)4^m\,.
\end{align}
From \cref{eq:A1_two_sums}, combining \cref{eq:regret_contrib_replay_m,eq:temp1_A_1} gives the desired rate for \textcolor{blue}{(A.1)},
\begin{align}\label{eq:bound_A_1}
    \textcolor{blue}{(A.1)} \leq 18(c_2+1)\log^2(T) 4^m\,.
\end{align}

\textbf{Upper bounding \textcolor{blue}{(A.2)}.}
To bound this term, we apply the same reasoning previously used to control \textcolor{blue}{(A.1)}, and in particular the partition argument of \cref{fig:partition}. By observing that each replay at depth $\dmin=d$ is initiated by a round $s$ such that $R_{s,d}=1$, and by using the fact that each replay at depth $d$ lasts at most $8^d$ rounds, we distinguish the cumulative discretization bias incurred at depth $m$ and the one incurred at other depths $d<m$,
\begin{align*}
     \textcolor{blue}{(A.2)} &=\E{}{\I{\mathcal{E}_1}\sum_{t=\tau_{l,m}}^{\tau_{l,m+1}-1}\frac{4}{2^{\dmin}}\condicFgros}\\
     &\leq \E{}{\sumblock\frac{4}{2^m}\I{\dmin=m}\condicFgros}+ \E{}{\sum_{d=0}^{m-1}\sumblock\frac{4}{2^d}\I{\dmin=d}\condicFgros}\\
     &\leq 4^{m+1} + 4\E{}{\sum_{d=0}^{m-1} \sum_{t=\tau_{l,m}}^{\tau_{l,m+1}-1}\frac{1}{2^d}\I{\dmin = d}\condicFgros} \\
     &\leq  4^{m+1} + 4\E{}{\sum_{d=0}^{m-1} \sum_{s=\tau_{l,m}}^{\tau_{l,m+1}-1}R_{s,d}\frac{1}{2^d}8^d \condicFgros} \\
     &=  4^{m+1}  + 4\E{}{\sum_{d=0}^{m-1} \sum_{s=\tau_{l,m}}^{\tau_{l,m+1}-1}R_{s,d} 4^d\condicFgros } \,,
\end{align*}
which is exactly the same term we bounded in the proof of \textcolor{blue}{(A.1)}. By \eqref{eq:bound_mixture_proba} we thus have the upper bound
\begin{align}\label{eq:temp_final_bound_A_2}
     \textcolor{blue}{(A.2)} \leq 4^{m+1} + 24\sqrt{8}\times4^m = 28\sqrt{8}\times 4^m\,.
\end{align}

\textbf{Conclusion.}
Combining \eqref{eq:bound_A_1} and \eqref{eq:temp_final_bound_A_2} gives the desired rate of $4^m$ for term \textcolor{blue}{(A)}, as
\begin{align*}
    \textcolor{blue}{(A)} &=\E{}{\I{\mathcal{E}_1}\sum_{t=\tau_{l,m}}^{\tau_{l,m+1}-1}\delta_t(\Blast_{l,m}, B_{t,m})\condicFgros} \\
    &\leq  \left(18(c_2+1)\log^2(T)+28\sqrt{8}\right) 4^m 
\end{align*}
Choosing $\textcolor{Maroon}{c_A=18(c_2+2)}$ concludes the proof.

\subsection{Upper bounding \textcolor{blue}{(B)}}
We recall that term $\textcolor{blue}{(B)}$ writes
\begin{align*}
    \textcolor{blue}{(B)} = \E{}{\I{\mathcal{E}_1}\sum_{t=\tau_{l,m}}^{\tau_{l,m+1}-1}\delta_t(B_{t,m}^\sharp, \Blast_{l,m})\condicFgros}\,.
\end{align*}
As discussed in \cref{subsec:proof_sketch_unstable_block}, the main challenge arises in scenarios where a significant shift occurs (as defined in \cref{def:significant_regret_arm}), causing the identity of the last safe arm $x_i^\sharp$ (and consequently the bin $B_{t,m}^\sharp$) to change. In order to avoid incurring large cumulative regret before this shift is detected, the algorithm must initiate a replay at the \emph{appropriate depth} to identify the change in environment. We claim that $\textcolor{blue}{(B)}$ is also of order $4^m$ almost surely.
\begin{proposition}[\textbf{Upper bound of \textcolor{blue}{(B)}}]\label{prop:upper_bound_term_B}
There exists a positive numerical constant $c_B$ such that
\begin{align*}
    \textcolor{blue}{(B)}\leq c_{B}\log^2(T)4^m\,.
\end{align*}
\end{proposition}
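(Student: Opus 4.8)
The term $\textcolor{blue}{(B)}$ measures the regret of the algorithm's retained bin $\Blast_{l,m}$ against the \emph{true} last-safe region $B^\sharp_{t,m}$, and unlike $\textcolor{blue}{(A)}$ it can be large only when an \emph{undetected significant shift} has made $B^\sharp_{t,m}$ substantially better than $\Blast_{l,m}$. The plan is to show that such undetected regret cannot persist: a replay scheduled at the appropriate depth re-introduces the region around $B^\sharp_{t,m}$ and forces the eviction of $\Blast_{l,m}$ (hence $\cBM=\emptyset$ and block termination) before the regret exceeds order $4^m$. First, exactly as in \cref{prop:alternative_form_A}, I would pass from $\delta_t(B^\sharp_{t,m},\Blast_{l,m})$ to the regret between the ancestor bins $\Par(B^\sharp_{t,m},\dmin)$ and $\Par(\Blast_{l,m},\dmin)$ at the minimum active depth, paying a per-round discretization bias of at most $4/2^{\dmin}$ (controlled exactly as the term $\textcolor{blue}{(A.2)}$), and then bucket the rounds by the value $\dmin=d$. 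This reduces the task to controlling, for each depth $d\le m$, the regret accumulated against the depth-$d$ ancestor of $B^\sharp$ during rounds when the algorithm is not exploring that region.

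Next, for each depth $d$ I would partition these rounds into \emph{bad segments} $[s^m_{i,j},s^m_{i,j+1}]$: maximal intervals over which the depth-$d$ ancestor of $\Blast_{l,m}$ steadily accumulates regret against that of $B^\sharp$ up to the detection scale of order $\log(T)\,4^d$ that governs depth-$d$ eviction through \eqref{eq:star}. By construction each bad segment contributes at most $O(\log^2(T)\,4^d)$ to the regret, with $\tilde s^m_{i,j}$ denoting the approximate midpoint at which half of the threshold is reached. The crucial ingredient is a separate proposition showing that, with high probability on an event $\mathcal{E}_2(\tau_{l,m})$, a replay at depth $d$ is scheduled during the first half of each such bad segment. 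Here one uses the calibrated probabilities \eqref{eq:proba_of_replay}: a bad segment at depth $d$ for a shift of magnitude $\Delta$ (necessarily with $\Delta$ at least of order $2^{-d}$) has length at least of order $2^d/\Delta^2$, so it contains enough candidate start rounds $s\equiv 0\,[8^d]$ that the probability no replay begins is geometrically small; a union bound over all segments and depths then yields $\mathcal{E}_2(\tau_{l,m})$.

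Finally, I would establish \emph{detection}: at the start of a replay at depth $d$ the active set is reset to $\cT_d$, re-introducing the ancestor of $B^\sharp$, while $m\in\cD_t$ throughout the block ensures that evicting a depth-$d$ bin propagates to its child $\Blast_{l,m}$ at depth $m$. Because $x_i^\sharp$ is genuinely safe on its phase (\cref{def:x_sharp}), so that its enclosing bins incur low regret, while $\Blast_{l,m}$ has accumulated significant regret over the bad segment, the eviction criterion \eqref{eq:star} fires on event $\mathcal{E}_1$ (combining \cref{cor:useful_corollary} for the good region, \cref{prop:from_bin_to_arm} and \cref{prop:eviction_implies_sig_regret} for the bad one), evicting $\Blast_{l,m}$; being the last bin of $\cBM$, the block then terminates and no further regret accrues. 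Summing the per-segment bound over depths with the replay-density accounting already used for $\textcolor{blue}{(A)}$ --- namely $\sum_{d\le m}\sqrt{8^{m-d}}\,4^d = O(4^m)$ --- and adding the negligible contribution of the complement of $\mathcal{E}_1\cap\mathcal{E}_2(\tau_{l,m})$ yields $\textcolor{blue}{(B)}\le c_B\log^2(T)\,4^m$.

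The main obstacle is precisely the coupling of three scales in the detection argument: the magnitude $\Delta$ of the undetected shift, the depth $d$ at which it becomes statistically detectable, and the replay frequency $p_{s,d}$. These must be balanced so that a replay of length $8^d$ falling inside a bad segment accumulates enough true gap to overcome both the concentration term $\log(T)\,4^d$ and the bias term of order $4^d$ in \eqref{eq:star}, while simultaneously guaranteeing that such a replay is scheduled before the pre-detection regret overshoots the target $4^m$ order. Getting the bad-segment length, the replay probability, and the eviction threshold to align quantitatively --- uniformly across depths $d\le m$ --- is the delicate part of the proof.
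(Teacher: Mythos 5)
Your high-level strategy is the same as the paper's: decompose the block into \emph{bad segments} on which $\Blast_{l,m}$ accumulates regret against $B^\sharp_{t,m}$, argue that a replay whose depth is matched to the segment (a ``well-timed replay'') re-activates the region of $B^\sharp_{t,m}$ and forces eviction of $\Blast_{l,m}$ via \eqref{eq:star} on $\mathcal{E}_1$, and terminate the block before the regret exceeds order $4^m$. The detection step (your third paragraph) is essentially the paper's \cref{prop:bad_segments}. However, your probabilistic step contains a genuine gap: you claim that, on an event $\mathcal{E}_2(\tau_{l,m})$ obtained by a union bound, a well-timed replay is scheduled in the first half of \emph{each} bad segment, the per-segment failure probability being ``geometrically small.'' This is false under the scheduling probabilities \eqref{eq:proba_of_replay}. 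A bad segment matched to depth $d$ has length $\ell \leq 8^{d+2}$, hence at most $\ell/8^d \leq 64$ candidate start rounds $s \equiv 0\,[8^d]$, and each fires with probability $p_{s,d}=\sqrt{8^d/(s-\tau_{l,m})}$, which for a segment occurring late in the block ($s-\tau_{l,m} \asymp 8^m$) is of order $8^{(d-m)/2}$. The probability that \emph{some} well-timed replay starts inside that single segment is therefore $O\bigl(8^{(d-m)/2}\bigr) = o(1)$ for $d<m$ --- nowhere near $1-1/\mathrm{poly}(T)$ --- so no union bound over segments can rescue the argument. Relatedly, your final accounting ``number of bad segments at depth $d$ $\approx \sqrt{8^{m-d}}$'' conflates a quantity determined by the adversary (how many bad segments occur) with one determined by the algorithm (how many replays are scheduled); without a detection guarantee, bad segments can keep accumulating and their number is only bounded by $8^m/(\log(T)4^d)$, which ruins the $4^m$ budget.

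The paper closes exactly this gap with an \emph{amortized} argument that your proposal is missing. It defines bad segments only at depth $m$, with length-dependent threshold $c_3\log(T)\sqrt{2^m\,\ell}$ (\cref{def:bad_segment}), matches the replay depth to each segment's length via $8^{d_{i,j}+1}\leq \ell \leq 8^{d_{i,j}+2}$, and then introduces the \emph{bad round} $s_{l,m}(B)$ (\cref{def:bad_round}): the first time the accumulated sum $\sum_j \sqrt{2^m \ell_j}$ over completed bad segments exceeds $c_4\log(T)\sqrt{2^m(s-\tau_{l,m})}$. By that time, the \emph{aggregate} expected number of well-timed replays summed over all those segments is at least $\tfrac{c_4}{8}\log(T)$, so a single Chernoff bound on the aggregate count (\cref{prop:a_replay_occurs}) shows that w.h.p.\ \emph{at least one} segment --- not every segment --- receives a well-timed replay before the bad round, which suffices to evict $\Blast_{l,m}$ and stop the accumulation. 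The regret is then charged to all bad segments completed before the bad round, whose total is capped by construction at $c_3c_4\log^2(T)4^m$, plus at most one non-bad segment per phase. This trade --- tolerating many undetected bad segments but capping their cumulative weight by the bad-round definition --- is the key idea your proof lacks; your preliminary reduction to $\dmin$-ancestors bucketed by $\dmin$ (which the paper uses only for term \textcolor{blue}{(A)}) is also unnecessary here, since the paper runs the whole argument directly at depth $m$.
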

To prove \cref{prop:upper_bound_term_B}, we analyse the cumulative regret incurred by $\Blast_{l,m}$ with respect to $B_{l,m}^\sharp$ by decomposing the rounds of the blocks into \emph{bad segments}, that is, time intervals during which $\Blast_{l,m}$ incurs significant regret relative to the safe bin $B_{t,m}^\sharp$. Importantly, by construction, $\Blast_{l,m}$ is never evicted during the block $[\tau_{l,m}, \tau_{l,m+1}-1]$, and thus remains active at depth $m$ throughout. However, since $\Blast_{l,m}$ is only revealed at the \emph{end} of the block, we must define bad segments relative to all possible bins at depth $m$.

\begin{definition}[\textbf{Bad segment and midpoint of a bad segment}]\label{def:bad_segment}
Let $[\tau_{l,m}, \tau_{l,m+1}[$ a block, and let $[\tau_i, \tau_{i+1}[$ be any phase intersecting this block. For all bin $B\in\cT_m$, define rounds $(s^m_{i,j}(B))_{j}$ with $s^m_{i,j}(B)\in[\tau_{l,m}\vee\tau_i,\tau_{i+1}\wedge \tau_{l,m}+8^m[$ recursively  as follows:
\begin{itemize}
    \item $s^m_{i,0}(B) = \tau_{l,m}\vee\tau_i$.
    \item $s^m_{i,j}(B)$ is the smallest round in $]s^m_{i,j-1}(B),\tau_{i+1}\wedge \tau_{l,m+1}[$ such that
    \begin{align*}
        \sum_{t=s^m_{i,j-1}(B)}^{s^m_{i,j}(B)}\delta_t(B_{t,m}^\sharp, B)\geq c_3\log(T) \sqrt{2^m\left(s^m_{i,j}(B) - s^m_{i,j-1}(B)\right)}\,,
    \end{align*}
\end{itemize}
where $c_3$ is a fixed positive numerical constant. 

For all bin $B\in \cT_m$, we define the \textbf{midpoint of a bad segment} $[s^m_{i,j}(B),s^m_{i,j+1}(B)[$ as the round 
\begin{align*}
\tilde{s}^m_{i,j}(B) = \left\lfloor\frac{s^m_{i,j}(B) + s^m_{i,j+1}(B)}{2}\right\rfloor\,.
\end{align*}    
\end{definition}
Now, our goal is to analyse the relative regret of any bin $B\in\cT_m$ with respect to $B_{t,m}^\sharp$ on these \emph{bad segments}. To build intuition, our objective is to prevent bad segments from accumulating for any bin $B\in\cT_m$. Fortunately, the design of our replay mechanism ensures that if a \emph{well-timed} replay at a \emph{suitable depth} is triggered at the start of a bad segment, it can promptly detect the shift in rewards and evict $B$, thereby limiting the incurred regret.
\begin{proposition}[\textbf{Properties on bad segments}]\label{prop:bad_segments}
Let $B\in\cT_m$ be a bin and let $[s^m_{i,j}(B),s^m_{i,j+1}(B)[$ be a bad segment. Then, on $\mathcal{E}_1$, any replay starting at round $t_{\mathrm{start}}\in[s^m_{i,j}(B),s^m_{i,j+1}(B)[$ never evicts $\Par(B_{t,m}^\sharp,d)$ for $d\in\bigcap_{t=t_{\mathrm{start}}}^{s^m_{i,j+1}(B)}\cD_t$. Moreover, if a replays at depth $d_{i,j}$ satisfying 
$$8^{d_{i,j}+1}\leq s^m_{i,j+1}(B)-s^m_{i,j}(B)\leq 8^{d_{i,j}+2}$$ 
starts at $t_{\mathrm{start}}\in[s^m_{i,j}(B),\tilde{s}^m_{i,j}(B)[\,$,
then $B$ is evicted by round $s^m_{i,j+1}(B)$.
\end{proposition}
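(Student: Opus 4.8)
The plan is to treat the two assertions separately, since they encode complementary ``soundness'' and ``detection'' guarantees of the replay mechanism, and I would prove the first before using it inside the second.

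For the first assertion (the safe parent is never evicted) I would argue by contradiction using \cref{prop:eviction_implies_sig_regret}. Suppose that for some $d\in\bigcap_{t=t_{\mathrm{start}}}^{s^m_{i,j+1}(B)}\cD_t$ the bin $\Par(B^\sharp_{t,m},d)$ were evicted at some round $t'\le s^m_{i,j+1}(B)$ during the replay started at $t_{\mathrm{start}}$. Within a replay, eviction only ever uses intervals $[s_1,s_2]\subseteq[t_{\mathrm{start}},t']$, and since $B^\sharp_{t,m}\subseteq\Par(B^\sharp_{t,m},d)$ we have $x_i^\sharp\in\Par(B^\sharp_{t,m},d)$; hence \cref{prop:eviction_implies_sig_regret} would furnish an interval $[s_1,s_2]$ with $s_1\ge t_{\mathrm{start}}\ge s^m_{i,j}(B)\ge\tau_i$ and $s_2=t'\le s^m_{i,j+1}(B)$ on which $x_i^\sharp$ incurs significant regret. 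But by \cref{def:bad_segment} the segment endpoint satisfies $s^m_{i,j+1}(B)<\tau_{i+1}$, so $[s_1,s_2]\subseteq[\tau_i,\tau_{i+1}[$, contradicting that the last safe arm $x_i^\sharp$ incurs no significant regret on any subinterval strictly inside its own phase (\cref{def:x_sharp}, \cref{def:significant_shift}). Thus no such eviction occurs.

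For the second assertion I would begin with timing bookkeeping: from $8^{d_{i,j}+1}\le L:=s^m_{i,j+1}(B)-s^m_{i,j}(B)$ the replay length obeys $8^{d_{i,j}}\le L/8$, so a replay started at $t_{\mathrm{start}}\in[s^m_{i,j}(B),\tilde s^m_{i,j}(B)[$ runs entirely inside the segment, i.e. its window $W=[t_{\mathrm{start}},t_{\mathrm{start}}+8^{d_{i,j}}]\subseteq[s^m_{i,j}(B),s^m_{i,j+1}(B)[$. Next I reduce the depth-$m$ regret to depth $d_{i,j}$: by \cref{assumption:lipschitz}, $\delta_t(\Par(B^\sharp_{t,m},d_{i,j}),\Par(B,d_{i,j}))\ge\delta_t(B^\sharp_{t,m},B)-2/2^{d_{i,j}}$, so the cumulative gap of $\Par(B,d_{i,j})$ against the comparator $\Par(B^\sharp_{t,m},d_{i,j})$ over $W$ is at least the depth-$m$ window regret minus $2\cdot4^{d_{i,j}}$. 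Crucially, by the first assertion this comparator stays active throughout the replay, so it is admissible in \eqref{eq:star}. I then argue by contradiction: if $\Par(B,d_{i,j})$ survived all of $W$, \cref{cor:useful_corollary} would cap its relative regret over $W$ by $(c_0+c_1)\log(T)4^{d_{i,j}}+8\cdot4^{d_{i,j}}$, and the Lipschitz reduction would then cap the depth-$m$ window regret by $O(\log(T)4^{d_{i,j}})$. Evicting $\Par(B,d_{i,j})$ — which cascades to evict $B$ and all its descendants at depth $m$ — therefore reduces to proving the depth-$m$ regret accumulated \emph{inside} $W$ exceeds this cap; the final passage from true to empirical sums is handled on $\mathcal{E}_1$ via \cref{cor:concentration_and_bias}, choosing $c_3$ large relative to $c_0,c_1$.

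The hard part is exactly this last lower bound. The bad-segment definition only certifies that the \emph{full} segment crosses $c_3\log(T)\sqrt{2^mL}$, while the minimality of $s^m_{i,j+1}(B)$ gives \emph{upper} bounds on every prefix; it does not by itself lower-bound the regret sitting inside the short window $W$. My plan is to combine the full-segment lower bound with the first-crossing prefix bound (which, since $t_{\mathrm{start}}<\tilde s^m_{i,j}(B)$, caps the regret up to $t_{\mathrm{start}}$ by $c_3\log(T)\sqrt{2^mL/2}$) to localize a constant fraction $(1-1/\sqrt2)$ of the significant regret into $[t_{\mathrm{start}},s^m_{i,j+1}(B)[$, and then to use the scale calibration $8^{d_{i,j}+1}\le L\le 8^{d_{i,j}+2}$ to show a length-$8^{d_{i,j}}$ window carries $\Omega(\log(T)4^{d_{i,j}})$ of it, since $\sqrt{2^mL}\,(8^{d_{i,j}}/L)=2^{(m-d_{i,j})/2}4^{d_{i,j}}/\sqrt8\ge 4^{d_{i,j}}/\sqrt8$ using $d_{i,j}<m$. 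I expect the genuinely delicate point to be the accounting of \emph{where} the regret lies within the segment — ensuring the replay window observes a constant fraction of it for every admissible start time rather than just over $[t_{\mathrm{start}},s^m_{i,j+1}(B)[$ — and this is where I anticipate needing the most care, possibly invoking additional structural properties of consecutive bad segments beyond the bare minimality bound.
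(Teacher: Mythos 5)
Your proof of the first assertion is correct and is essentially the paper's own argument: an eviction of $\Par(B^\sharp_{t,m},d)$ during a replay started at $t_{\mathrm{start}}\in[s^m_{i,j}(B),s^m_{i,j+1}(B)[$ only uses data from intervals contained in $[\tau_i,\tau_{i+1}[$, so on $\mathcal{E}_1$ it would force the last safe arm $x_i^\sharp$ to incur significant regret inside its own phase, a contradiction. The genuine gap is in the second assertion, and it is exactly the one you flag. You try to trigger eviction through the depth-$d_{i,j}$ test restricted to the replay window $W=[t_{\mathrm{start}},t_{\mathrm{start}}+8^{d_{i,j}}]$, which forces you to localize a share of the segment's regret \emph{inside} $W$; this is not just unproven, it is false in general. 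Since $8^{d_{i,j}}\leq (s^m_{i,j+1}(B)-s^m_{i,j}(B))/8$ and $t_{\mathrm{start}}<\tilde{s}^m_{i,j}(B)$, every admissible replay terminates well before $s^m_{i,j+1}(B)$; yet \cref{def:bad_segment} is perfectly compatible with an environment in which $\delta_t(B^\sharp_{t,m},B)=0$ on the whole segment except its last $O\bigl(\log(T)\sqrt{2^m(s^m_{i,j+1}(B)-s^m_{i,j}(B))}\bigr)$ rounds, where all of the crossing mass sits; for long segments this tail is disjoint from every admissible $W$. In that case the data collected during the replay is indistinguishable from that of a zero-gap segment, so no sound test on $W$ — hence no pigeonhole or averaging refinement, and no structural property of consecutive bad segments — can evict $B$ by round $s^m_{i,j+1}(B)$. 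The window-based detection route is unworkable, not merely incomplete. (A further obstruction to working at depth $d_{i,j}$: one may have $\Par(B,d_{i,j})=\Par(B^\sharp_{t,m},d_{i,j})$, in which case no test at that depth can separate $B$ from $B^\sharp_{t,m}$ at all.)

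The paper's proof avoids localization into the replay window entirely. It combines the full-segment crossing with the first-crossing minimality at the midpoint — precisely the computation you sketch — to obtain the \emph{true-gap} lower bound $\sum_{t=\tilde{s}^m_{i,j}(B)}^{s^m_{i,j+1}(B)}\delta_t(B^\sharp_{t,m},B)\geq \tfrac{c_3}{4}\log(T)\sqrt{2^m\bigl(s^m_{i,j+1}(B)-\tilde{s}^m_{i,j}(B)\bigr)}$, and then applies \cref{cor:concentration_and_bias} and the eviction rule \eqref{eq:star} \emph{at depth $m$ over that long interval}: this is admissible because depth $m$ stays active for the entire block, so intervals of length up to $8^m$ may be used there. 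In that argument the replay hypothesis enters only (i) through your first assertion, to guarantee the comparator $B^\sharp_{t,m}$ remains active and hence eligible in \eqref{eq:star} on $[\tilde{s}^m_{i,j}(B),s^m_{i,j+1}(B)]$, and (ii) through the calibration $s^m_{i,j+1}(B)-\tilde{s}^m_{i,j}(B)\leq 8^{d_{i,j}+2}\leq 8^{m+1}$, which lets the bias term $4(s_2-s_1)/2^m$ be absorbed into the concentration term. Redirecting your midpoint computation to the depth-$m$ test in this way is what closes the proof.
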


\begin{proof}[Proof of \cref{prop:bad_segments}]
Assume a replay evicts $\Par(B_{t,m}^\sharp,d)$ with $d\in\bigcap_{t=t_{\mathrm{start}}}^{s^m_{i,j+1}(B)}\cD_t$ and $t_{\mathrm{start}}\geq s_{i,j}^m(B)$ before round $s^m_{i,j+1}(B)$. Then, by \cref{prop:from_bin_to_arm}, it means that, on event $\mathcal{E}_1$, every arm $x\in B_{t,m}^\sharp$ incurs significant regret (\cref{def:significant_regret_arm}). This contradicts the definition of $B^\sharp_{l,m}$. Indeed, by definition, $x_{t}^\sharp$ does not change on $[t_{\mathrm{start}}, s_{i,j+1}^m]\subset [\tau_i,\tau_{i+1}]$, so neither does $B_{t,m}^\sharp$.
Therefore, such replay never evicts $\Par(B_{t,m}^\sharp, d)$.

Now, for all $B\in\cT_m$, we have
\begin{small}
\begin{align*}
    \sum_{t=\tilde{s}^m_{i,j}(B)}^{s^m_{i,j+1}(B)}\delta_t(B_{t,m}^\sharp, B) &= \sum_{s^m_{i,j}(B)}^{s^m_{i,j+1}(B)}\delta_t(B_{t,m}^\sharp, B) - \sum_{s^m_{i,j}(B)}^{\tilde{s}^m_{i,j}(B)-1}\delta_t(B_{t,m}^\sharp, B)\,.
\end{align*}
\end{small}
By definition of a bad segment (\cref{def:bad_segment}),
\begin{align*}
    \sum_{s^m_{i,j}(B)}^{s^m_{i,j+1}(B)}\delta_t(B_{t,m}^\sharp, B) \geq c_3\log(T)\sqrt{2^m\left(s^m_{i,j+1}(B)-s^m_{i,j}(B)\right)}\,.
\end{align*}
and since $s^m_{i,j+1}(B)$ is defined as the \emph{first} round that satisfies the above inequality, 
\begin{align*}
\sum_{s^m_{i,j}(B)}^{\tilde{s}^m_{i,j}(B)}\delta_t(B_{t,m}^\sharp, B) \leq c_3\log(T)\sqrt{2^m\left( \tilde{s}^m_{i,j}(B) - s^m_{i,j}(B)\right)}\,.
\end{align*}
By combining these two inequalities above,
\begin{align*}
\sum_{t=\tilde{s}^m_{i,j}(B)}^{s^m_{i,j+1}(B)}\delta_t(B_{t,m}^\sharp, B)&\geq c_3\log(T)\sqrt{2^m}\left(\sqrt{s^m_{i,j+1}(B) - s^m_{i,j}(B)}-\sqrt{\tilde{s}^m_{i,j}(B) - s^m_{i,j}(B)}\right)\\
&\geq \frac{c_3}{4}\log(T)\sqrt{2^m \left(s^m_{i,j+1}(B)-\tilde{s}^m_{i,j}(B)\right)}\,,
\end{align*}
where we used the inequality $\sqrt{a+b}-\sqrt{a}\geq \frac{\sqrt{b}}{4}$.

Since $B^\sharp_{l,m}$ is active during the whole of the bad segment, on $\mathcal{E}_1$, we have by \cref{cor:concentration_and_bias}, on event $\mathcal{E}_1$,

\begin{align*}
    &\sum_{t=\tilde{s}^m_{i,j}(B)}^{s^m_{i,j+1}(B)}\hat\delta_t(B_{t,m}^\sharp, B) \\
    &\geq \sum_{t=\tilde{s}^m_{i,j}(B)}^{s^m_{i,j+1}(B)}\delta_t(B_{t,m}^\sharp, B) - c_1\log(T)\sqrt{(s^m_{i,j+1}(B)-\tilde{s}^m_{i,j}(B))2^m} -\frac{4(s^m_{i,j+1}(B)-\tilde{s}^m_{i,j}(B))}{2^m}\\
    &\geq \left(\frac{c_3}{4}-c_1\right)\log(T)\sqrt{ \left(s^m_{i,j+1}(B)-\tilde{s}^m_{i,j}(B)\right)2^m} - \frac{4(s^m_{i,j+1}(B)-\tilde{s}^m_{i,j}(B))}{2^m}\,.
\end{align*}
Using the fact that $s^m_{i,j+1}(B)-\tilde{s}^m_{i,j}(B)\leq 8^{d_{i,j}+2}\leq 8^{m+1}$, we have 
\begin{align*}
    \frac{s^m_{i,j+1}(B)-\tilde{s}^m_{i,j}(B)}{2^m}&\leq 8\sqrt{s^m_{i,j+1}(B)-\tilde{s}^m_{i,j}(B)}\frac{(2\sqrt{2})^m}{2^m}\\
    &\leq 8\sqrt{2^m(s^m_{i,j+1}(B)-\tilde{s}^m_{i,j}(B))}\,,
\end{align*}
and therefore
\begin{align*}
\sum_{t=\tilde{s}^m_{i,j}(B)}^{s^m_{i,j+1}(B)}\hat\delta_t(B_{t,m}^\sharp, B)\geq \left(\frac{c_3}{4}-c_1-\frac{32}{\log(T)}\right)\log(T)\sqrt{ \left(s^m_{i,j+1}(B)-\tilde{s}^m_{i,j}(B)\right)2^m} \,.
\end{align*}
Assuming horizon large enough, \emph{e.g.} $\log(T)\geq 1$, setting $\textcolor{Maroon}{c_3 = 140 + 4c_1}$ implies
\begin{align*}
\sum_{t=\tilde{s}^m_{i,j}(B)}^{s^m_{i,j+1}(B)}\hat\delta_t(B_{t,m}^\sharp, B)\geq 3\log(T)\sqrt{ \left(s^m_{i,j+1}(B)-\tilde{s}^m_{i,j}(B)\right)2^m}\,,
\end{align*}
and hence by \eqref{eq:star}, on event $\mathcal{E}_1$, $B$ is evicted.
\end{proof}

Having established that a well-timed replay at the appropriate depth can evict a bin $B$, it remains to demonstrate that such a replay is indeed \emph{triggered} before the cumulative regret accrued over the bad segments is too large. To formally define the \emph{earliest} round at which this cumulative regret becomes significant, we introduce the notion of a \emph{bad round}.

\begin{definition}[\textbf{Bad round with respect to bin $\bm{B}$}]\label{def:bad_round}
For all block $[\tau_{l,m},\tau_{l,m+1}[\,$, for all bin $B\in\cT_m$, we define the bad round with respect to $B$ as 
\begin{align*}
s_{l,m}(B) = &\inf\left\{s>\tau_{l,m}\,:\,\sum_{(i,j)\in\mathcal{P}(B,s)}\sqrt{2^m (s^m_{i,j+1}(B) - s^m_{i,j}(B))}>c_{4}\log(T)\sqrt{2^m(s - \tau_{l,m})}\right\}\\
&\wedge (\tau_{l,m}+8^m)\,,
\end{align*}
where $c_4$ is a fixed positive numerical constant, and we define the set $\mathcal{P}(B,s)$ as
\begin{align*}
    \mathcal{P}(B,s) =\big\{ (i,j)\,:\, i,j\in\Nat\,,\text{ such that }[s^m_{i,j}(B),s^m_{i,j+1}(B)]\text{ is a bad segment and }s^m_{i,j+1}(B)<s \big\}\,. 
\end{align*}
\end{definition}
Now, let $[\tau_{l,m}, \tau_{l,m+1})$, be a block, and let $B \in \cT_m$. We distinguish two cases for this bin.

\textbf{\underline{Case 1:} the bad round happens after the end of the block $(s_{l,m}(B) = \tau_{l,m}+8^m)$.}
By applying the definition of $s_{l,m}(B)$ (\cref{{def:bad_round}}) directly, we have
\begin{align*}
    \sum_{(i,j)\in\mathcal{P}(B,s_{l,m}(B))}\sqrt{2^m(s_{i,j+1}^m (B)-s_{i,j}^m(B))}&\leq c_4\log(T)\sqrt{8^m2^m}= c_4\log(T)4^m\,.
\end{align*}

\textbf{\underline{Case 2:} the bad round happens before the end of the block $(s_{l,m}(B) < \tau_{l,m}+8^m)$.} In this case, we claim the following result.

\begin{proposition}[\textbf{A well-timed replay at a suitable depth is triggered}]\label{prop:a_replay_occurs}
Let $[\tau_{l,m},\tau_{l,m+1}[$ be a block. We define $\mathcal{E}_2(\tau_{l,m})$ as the following event:
\begin{align*}
    &\forall B \in \cT_m \text{ such that } s_{l,m}(B) < \tau_{l,m+1}, \text{ there exists a bad segment } [s_{i,j}^m(B), s_{i,j+1}^m(B)] \text{ such that } \\
    &s_{i,j+1}^m(B) < s_{l,m}(B) \text{ and a } \underbrace{\texttt{Replay}(t_{\mathrm{start}}, d_{i,j})}_{\text{(\cref{def:replay_s_d})}} \text{ such that }t_{\mathrm{start}}\text{ and }d_{i,j}\text{ satisfy} \\
    &t_{\mathrm{start}} \in [s_{i,j}^m(B), s_{i,j+1}^m(B)] \text{ and }8^{d_{i,j}+1} \leq s_{i,j+1}^m(B) - s_{i,j}^m(B) \leq 8^{d_{i,j}+2}\,.
\end{align*}

Then, $\mathcal{E}_2(\tau_{l,m})$ holds with probability at least $1-1/T^2$.
\end{proposition}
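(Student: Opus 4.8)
The plan is to exploit a crucial structural fact: the bad segments $[s^m_{i,j}(B),s^m_{i,j+1}(B)[$ and the bad round $s_{l,m}(B)$ of \cref{def:bad_segment,def:bad_round} are \emph{deterministic} functions of the environment, since they depend only on the true gaps $\delta_t(B^\sharp_{t,m},B)$ and on the deterministic quantities $x_i^\sharp, B^\sharp_{t,m}$. Hence the only randomness entering $\mathcal{E}_2(\tau_{l,m})$ is the replay schedule, i.e.\ the Bernoulli variables $(R_{s,d})$ drawn once and for all at $\tau_{l,m}$. I fix a bin $B\in\cT_m$ in Case~2 ($s_{l,m}(B)<\tau_{l,m+1}$), write $D=s_{l,m}(B)-\tau_{l,m}$ and $\ell_{i,j}=s^m_{i,j+1}(B)-s^m_{i,j}(B)$, and work with the completed bad segments indexed by $\mathcal{P}(B,s_{l,m}(B))$, which by definition of the bad round (with strict inequality at the infimum) satisfy $\sum_{(i,j)}\sqrt{\ell_{i,j}}>c_4\log(T)\sqrt{D}$ after dividing the defining inequality by $\sqrt{2^m}$.

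First I would verify that every bad segment admits a schedulable target depth. Since rewards lie in $[0,1]$, reaching the threshold $c_3\log(T)\sqrt{2^m\ell_{i,j}}$ that defines a segment forces $\ell_{i,j}\geq c_3^2\log^2(T)\,2^m\geq 8$, while $\ell_{i,j}\leq 8^m$ because the segment lies inside the block. As the ranges $[8^{d+1},8^{d+2}]$ for $d=0,\dots,m-1$ cover $[8,8^{m+1}]$, there is a valid depth $d_{i,j}\in\{0,\dots,m-1\}$ with $8^{d_{i,j}+1}\leq \ell_{i,j}\leq 8^{d_{i,j}+2}$, and $d_{i,j}\leq m-1$ makes it an actually schedulable replay depth. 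Next I would lower bound, for a single segment, the probability $q_{i,j}$ that a depth-$d_{i,j}$ replay starts in its first half $[s^m_{i,j}(B),\tilde s^m_{i,j}(B)[$. The replay-eligible rounds there are the $s$ with $s-\tau_{l,m}\equiv 0\,[8^{d_{i,j}}]$; since the first half has length $\lfloor\ell_{i,j}/2\rfloor\geq 8^{d_{i,j}}$ it contains at least one such round, and at each of them $p_{s,d_{i,j}}=\sqrt{8^{d_{i,j}}/(s-\tau_{l,m})}\geq\sqrt{8^{d_{i,j}}/D}\geq\tfrac18\sqrt{\ell_{i,j}/D}$, using $s-\tau_{l,m}<s^m_{i,j+1}(B)-\tau_{l,m}<D$ together with $8^{d_{i,j}}\geq\ell_{i,j}/64$. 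Independence of the $R_{s,d}$ gives $q_{i,j}\geq 1-\exp(-\sum_s p_{s,d_{i,j}})\geq \tfrac{1}{16}\sqrt{\ell_{i,j}/D}$, via $1-e^{-x}\geq x/2$ on $[0,1]$ and $\ell_{i,j}<D$.

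To combine the segments I would use that the bad segments (hence their first halves) are pairwise disjoint time intervals, so the events ``a depth-$d_{i,j}$ replay starts in the first half of $(i,j)$'' depend on disjoint families of Bernoulli variables and are therefore \emph{independent}. Consequently
\begin{align*}
\mathbb{P}\Big(\text{no suitable replay before } s_{l,m}(B)\Big)=\prod_{(i,j)}(1-q_{i,j})\leq \exp\Big(-\tfrac{1}{16}D^{-1/2}\sum_{(i,j)}\sqrt{\ell_{i,j}}\Big)\leq T^{-c_4/16},
\end{align*}
by the bad-round inequality. A union bound over the $2^m\leq T^{1/3}$ bins of $\cT_m$, with $c_4$ chosen large enough (e.g.\ $c_4\geq 48$), yields a failure probability at most $T^{-2}$. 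Finally, a replay starting in the first half is a fortiori a replay with $t_{\mathrm{start}}\in[s^m_{i,j}(B),s^m_{i,j+1}(B)]$, so this establishes $\mathcal{E}_2(\tau_{l,m})$; moreover the first-half guarantee is exactly what \cref{prop:bad_segments} needs, that this replay actually evicts $B$ by round $s^m_{i,j+1}(B)$.

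The main obstacle is the per-segment step: the number of replay opportunities inside a segment is only a constant and each trigger probability $\sqrt{8^{d_{i,j}}/(s-\tau_{l,m})}$ is individually small, so neither a pure union bound over opportunities nor a crude count suffices on its own. The calibration $8^{d_{i,j}+1}\leq\ell_{i,j}\leq 8^{d_{i,j}+2}$ is precisely what forces simultaneously ``at least one opportunity in the first half'' and ``trigger probability of order $\sqrt{\ell_{i,j}/D}$'', producing the scale-free bound $q_{i,j}\gtrsim\sqrt{\ell_{i,j}/D}$. Once this is in hand, the independence across disjoint segments and the telescoping against the bad-round threshold $\sum_{(i,j)}\sqrt{\ell_{i,j}}>c_4\log(T)\sqrt{D}$ are the engine that turns aggregated segment lengths into a $T^{-\Theta(c_4)}$ failure probability, the key being that the $\sqrt{D}$ cancels exactly.
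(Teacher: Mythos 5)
Your proposal is correct and follows essentially the same route as the paper: conditionally on the block start the bad segments are fixed and only the replay schedule is random; you lower bound the per-segment trigger probability by $\tfrac{1}{16}\sqrt{\ell_{i,j}/D}$ using the depth calibration $8^{d_{i,j}+1}\le \ell_{i,j}\le 8^{d_{i,j}+2}$ together with the scheduling probability $\sqrt{8^{d_{i,j}}/(s-\tau_{l,m})}$, aggregate these via the bad-round inequality $\sum_{(i,j)}\sqrt{\ell_{i,j}} > c_4\log(T)\sqrt{D}$, and finish with a union bound over the bins of $\cT_m$. The only immaterial differences are that where the paper applies a Chernoff bound to the total count of well-timed replays in the first halves of bad segments, you bound the no-trigger probability directly by $\prod_{(i,j)}(1-q_{i,j})\le\exp\bigl(-\sum_{(i,j)} q_{i,j}\bigr)$ using independence of the Bernoulli variables across disjoint segments, and you additionally verify a detail the paper leaves implicit, namely that every bad segment satisfies $\ell_{i,j}\ge 8$ so that a schedulable depth $d_{i,j}\le m-1$ exists.
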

\begin{proof}[Proof of \cref{prop:a_replay_occurs}]
Let $[\tau_{l,m},\tau_{l,m+1}[$ be a block, and let $B\in\cT_m$ such that $s_{l,m}(B)<\tau_{l,m}+8^m$. Let $d_{i,j}$ the integer satisfying 
\begin{align*}
8^{d_{i,j}+1}\leq s_{i,j+1}^m(B) - s_{i,j}^m(B)\leq 8^{d_{i,j}+2}\,.
\end{align*}
First, we remark that $R_{t,d_{i,j}}$, $s_{i,j}^m(B)$, $\tilde{s}_{i,j}^m(B)$ and $s_{l,m}(B)$ only depend on the fixed bin $B$, the starting round of the block $\tau_{l,m}$ $B_{t,m}^\sharp$ (which is deterministic), and the randomness of scheduling of the replays. Using Chernoff's bound over randomness of the algorithm conditionally on $\cF_{\tau_{l,m}}$ gives
\begin{align}\label{eq:temp_constante_8}
&\mathbb{P}\left( \sum_{(i, j)\in\cP(B,s_{l,m}(B))} \sum_{t=s^m_{i,j}(B)}^{\tilde{s}^m_{i,j}(B)}R_{t,d_{ij}}\leq \frac{1}{2}\E{}{\sum_{(i,j)\in\cP(B,s_{l,m}(B))} \sum_{t=s^m_{i,j}(B)}^{\tilde{s}^m_{i,j}(B)}R_{t,d_{ij}}\,\bigg|\, \cF_{\tau_{l,m}}} \,\bigg|\, \cF_{\tau_{l,m}} \right)\nonumber\\
&\leq \exp\left(-\frac{1}{8}\E{}{\sum_{(i,j)\in\cP(B,s_{l,m}(B))} \sum_{t=s^m_{i,j}(B)}^{\tilde{s}^m_{i,j}(B)}R_{t,d_{ij}}\,\bigg|\, \cF_{\tau_{l,m}}}\right)\,.
\end{align}
Then, we lower bound the expectation of \eqref{eq:temp_constante_8},
\begin{align*}
&\E{}{\sum_{(i,j)\in\cP(B,s_{l,m}(B))} \sum_{t=s^m_{i,j}(B)}^{\tilde{s}^m_{i,j}(B)}R_{t,d_{ij}}\,\bigg|\, \cF_{\tau_{l,m}}} \\
&= \E{}{ \sum_{(i,j)\in\cP(B,s_{l,m}(B))} \sum_{t=s^m_{i,j}(B)}^{\tilde{s}^m_{i,j}(B)}\frac{\I{t-\tau_{l,m}\equiv 0[8^{d_{i,j}}]}}{\sqrt{t-\tau_{l,m}}}\sqrt{8^{d_{i,j}}} \,\bigg|\, \cF_{\tau_{l,m}}}\\
&\geq \frac{1}{8}\E{}{ \sum_{(i,j)\in\cP(B)} \frac{1}{\sqrt{\tilde{s}^m_{i,j}(B)-\tau_{l,m}}}\sqrt{8^{d_{i,j}+2}} \,\bigg|\, \cF_{\tau_{l,m}}}\\
&\geq \frac{1}{8}\E{}{ \sum_{(i,j)\in\cP(B)} \sqrt{\frac{s^m_{i,j+1}(B)-s^m_{i,j}(B)}{s_{l,m}(B)-\tau_{l,m}}} \,\bigg|\, \cF_{\tau_{l,m}}}\\
&\geq\frac{c_4}{8}\log(T)\,,
\end{align*}
where we use the fact that there is at least one round $t\in[s_{i,j}^m(B),\tilde{s}_{i,j}^m(B)]$ such that $t-\tau_{l,m}\equiv 0[8^{d_{ij}}]$, \emph{i.e.} that at least a replay at depth $d_{i,j}$ that can be scheduled in $[s_{i,j}^m(B),s_{i,j+1}(B)]$, and where we used \cref{def:bad_round} in the last inequality. Setting $\textcolor{Maroon}{c_4=192}$, we have
\begin{align*}
&\mathbb{P}\left( \sum_{(i, j)\in\cP(B,s_{l,m}(B))} \sum_{t=s^m_{i,j}(B)}^{\tilde{s}^m_{i,j}(B)}R_{t,d_{ij}}\leq \frac{1}{2}\E{}{\sum_{(i,j)\in\cP(B,s_{l,m}(B))} \sum_{t=s^m_{i,j}(B)}^{\tilde{s}^m_{i,j}(B)}R_{t,d_{ij}}\,\bigg|\, \cF_{\tau_{l,m}}} \,\bigg|\, \cF_{\tau_{l,m}} \right)\\
&\leq 1/T^3\,.
\end{align*}
Taking a union bound with respect to all possible bins $B\in\cT_m$ (there are $2^m\leq T$ choices) concludes the proof.
\end{proof}

Therefore, using \cref{prop:a_replay_occurs}, for any bin $B\in\cT_m$, on event $\mathcal{E}_1\cap\mathcal{E}_2(\tau_{l,m})$, we have that there exists a bad segment $[s_{i,j_0}^m(B), s_{i,j_0+1}^m(B)]$ such that $s_{i,j_0+1}^m(B)<s_{l,m}(B)<\tau_{l,m+1}$, and there exists a $\texttt{Replay}(t_{\mathrm{start}},d_{i,j_0})$ starting within this segment, $t_{\mathrm{start}}\in[s_{i,j_0}^m(B), s_{i,j_0+1}^m(B)]$ and $d_{i,j_0}$ satisfying $8^{d_{i,j_0}}\leq s_{i,j_0+1}^m(B) -s_{i,j_0}^m(B)\leq 8^{d_{i,j_0}+2}$. By \cref{prop:bad_segments}, this implies that bin $B$ is evicted by round $s_{i,j_0+1}^m(B)$. Therefore, on $\mathcal{E}_1\cap\mathcal{E}_2(\tau_{l,m})$, using the definition of the bad round $s_{l,m}(B)<\tau_{l,m+1}$,
\begin{align*}
\sum_{(i,j)\in\mathcal{P}(B, s_{l,m}(B))}\sqrt{2^m (s^m_{i,j+1}(B) - s^m_{i,j}(B))}&\leq c_4\log(T)\sqrt{2^m(s_{i,j_0} - \tau_{l,m})}\\
&\leq c_4\log(T) 4^m\,.
\end{align*}

Since \cref{prop:a_replay_occurs} holds uniformly over all bins $B\in\cT_m$, it also holds for the particular choice $B = \Blast_{l,m}$, and hence in both cases $s_{l,m}(B) = \tau_{l,m}+8^m$ and $s_{l,m}(B)<\tau_{l,m}+8^m$, we have
\begin{align*}
\sum_{(i,j)\in\mathcal{P}(\Blast_{l,m}, s_{l,m}(\Blast_{l,m}))}\sqrt{2^m (s^m_{i,j+1}(\Blast_{l,m}) - s^m_{i,j}(\Blast_{l,m}))} \leq c_4\log(T) 4^m\,.
\end{align*}
Now recall by \cref{def:bad_segment} that $s_{i,j_0+1}^m(\Blast_{l,m})$ is the \emph{earliest} round satisfying the lower bound on the cumulative regret
\begin{align*}
        \sum_{t=s^m_{i,j_0}(\Blast_{l,m})}^{s^m_{i,j_0+1}(\Blast_{l,m})}\delta_t(B_{t,m}^\sharp, \Blast_{l,m})\geq c_3\log(T) \sqrt{2^m\left(s^m_{i,j_0}(\Blast_{l,m}) - s^m_{i,j_0-1}(\Blast_{l,m})\right)}\,,
    \end{align*}
Therefore, up to round $s^m_{i,j_0+1}(\Blast_{l,m})-1$ included, the cumulative regret can be \emph{upper bounded} as
\begin{align*}
\sum_{t=s^m_{i,j_0}(\Blast_{l,m})}^{s^m_{i,j_0+1}(\Blast_{l,m})-1}\delta_t(B_{t,m}^\sharp, \Blast_{l,m})\leq c_3\log(T) \sqrt{2^m\left(s^m_{i,j_0}(\Blast_{l,m}) - s^m_{i,j_0-1}(\Blast_{l,m})-1\right)}\,.
\end{align*}
Since on event $\mathcal{E}_1\cap\mathcal{E}_2(\tau_{l,m})$, $\Blast_{l,m}$ is evicted by round $s_{i,j_0+1}(\Blast_{l,m})$, we can upper bound the cumulative regret contribution of $\Blast_{l,m}$ with respect to $B_{t,m}^\sharp$ over all bad segments of $\Blast_{l,m}$,

\begin{align}\label{eq:regret_bound_over_bad_segments}
\sum_{(i,j)\in\mathcal{P}(\Blast_{l,m}, s_{l,m}(\Blast_{l,m}))}\sum_{t=s_{i,j}^m(\Blast_{l,m})}^{s_{i,j+1}^m(\Blast_{l,m})}\delta_t(B_{t, m}^\sharp,\Blast_{l,m})\leq c_3 c_4\log^2(T)4^m\,.
\end{align}

It remains to bound the regret incurred over the \emph{non-bad segments}. To this end, we observe from \cref{def:bad_segment} that each phase contains \emph{at most one such segment}. Therefore, there exists a positive numerical constant $c_5$ such that 
\begin{align}\label{eq:regret_contribution_over_non_bad_segments}
    \sum_{I\subseteq [\tau_{l,m},\tau_{l,m+1}[\text{ s.t. }I\text{ is non-bad segment}}\,\sum_{t\in I}\,\delta_t(B_{t,m}^\sharp, \Blast_{l,m})\leq c_5\log(T)4^m\,.
\end{align}
Summing \cref{eq:regret_bound_over_bad_segments,eq:regret_contribution_over_non_bad_segments} gives the total relative regret of $\Blast_{l,m}$ with respect to $B_{t,m}^\sharp$ over the block on event $\mathcal{E}_1\cap\mathcal{E}_2(\tau_{l,m})$,
 \begin{align*}
\sum_{t=\tau_{l,m}}^{\tau_{l,m+1}-1}\delta_t(B_{t,m}^\sharp, \Blast_{l,m}) \leq \overbrace{\left(c_3 c_4 + c_5\right)}^{\textcolor{Maroon}{=c_6}}\log^2(T)4^m\,.
\end{align*}
Finally, we use the fact that event $\mathcal{E}_2(\tau_{l,m})$ holds with probability at least $1-1/T^2$ (\cref{prop:a_replay_occurs}),
\begin{align*}
    \textcolor{blue}{(B)} &= \E{}{\I{\mathcal{E}_1}\sum_{t=\tau_{l,m}}^{\tau_{l,m+1}-1}\delta_t(B_{t,m}^\sharp, \Blast_{l,m})\condicFgros}\\
    &=\E{}{\I{\mathcal{E}_1\cap\mathcal{E}_2(\tau_{l,m})}\sum_{t=\tau_{l,m}}^{\tau_{l,m+1}-1}\delta_t(B_{t,m}^\sharp, \Blast_{l,m})\condicFgros}\\
    &+ \E{}{\I{\mathcal{E}_1\cap\mathcal{E}^c_2(\tau_{l,m})}\sum_{t=\tau_{l,m}}^{\tau_{l,m+1}-1}\delta_t(B_{t,m}^\sharp, \Blast_{l,m})\condicFgros}\\
    &\leq c_6\log^2(T) 4^m +\frac{8^m}{T^2}\\
    &\leq c_6\log^2(T) 4^m + \frac{1}{T}\,,
\end{align*}
where we used the fact that $\I{\mathcal{E}_1\cap\mathcal{E}^c_2(\tau_{l,m})}\leq \I{\mathcal{E}_2^c}$. 

Choosing $\textcolor{Maroon}{c_B = c_6 + 1}$ concludes the proof.

\subsection{Conclusion of the proof}
Recall from \cref{eq:summary} that we have
\begin{align*}
    \E{}{R(\pi_{\algo}, T)}&\leq \log(T) \sum_{i=0}^{\tilde{L}_T} (\tau_{i+1} - \tau_i)^{2/3} +\frac{7}{T}
 \nonumber\\
&+ 6 \sum_{l=1}^T \E{}{\I{\mathcal{E}_1}(t_{l+1} - t_l)^{2/3}}+ \sum_{l=1}^T \E{}{\I{\mathcal{E}_1}\sum_{m=0}^{M_l} \sumblock \delta_t(B_{t,m}^\sharp, B_{t,m})} \,.
\end{align*}
\cref{prop:upper_bound_term_A,prop:upper_bound_term_B} prove that the last sum is upper bounded as
\begin{align*}
    \E{}{\I{\mathcal{E}_1}\sum_{m=0}^{M_l}\sumblock \delta_t(B_{t,m}^\sharp,B_{t,m})}&\leq (c_A+c_B)\log^2(T)\E{}{\sum_{m=0}^{M_l}4^m}\\
    &\leq 6(c_A+c_B)\log^2(T)\E{}{(t_{l+1}-t_l)^{2/3}}\\
    &\leq 6(c_A+c_B)\log^2(T)\E{}{\I{\mathcal{E}_1}(t_{l+1}-t_l)^{2/3}} + \frac{c_A+c_B}{T^2} \,.
\end{align*}
where we used \cref{eq:bias_term} for the second inequality, and where we reconditioned on event $\mathcal{E}_1$ in the last line.
Therefore,
\begin{align}\label{eq:final_equation}
    \E{}{R(\pi_{\algo}, T)}&\leq \log(T) \sum_{i=0}^{\tilde{L}_T} (\tau_{i+1} - \tau_i)^{2/3} +\frac{7+ c_A + c_B}{T}\nonumber\\
    &+ 6\E{}{\I{\mathcal{E}_1}\sum_{l=1}^T (t_{l+1}-t_l)^{2/3}}+ 6(c_A+c_B)\log^2(T) \E{}{\I{\mathcal{E}_1}\sum_{l=1}^T(t_{l+1}-t_l)^{2/3}}\,.
\end{align}
We conclude the proof by relating episodes and phases, which will permit to bound the final term
\begin{align*}
    \E{}{\I{\mathcal{E}_1}\sum_{l=1}^T(t_{l+1}-t_l)^{2/3}}\,.
\end{align*}
\begin{definition}[\textbf{Phases intersecting episode $l$}]
We define the phases intersecting episode $[t_l,t_{l+1}[$ as
\begin{align*}
\Phases = \left\{ i \in [\tilde{L}_T] \;:\; [\tau_i, \tau_{i+1}[ \,\cap\, [t_l, t_{l+1}[ \,\neq\, \emptyset \right\} \,.
\end{align*}
\end{definition}
With this definition, we can rewrite the upper bound on cumulative regret over each episode as
\begin{align*}
\E{}{\I{\mathcal{E}_1}\sum_{l=1}^T(t_{l+1}-t_l)^{2/3} } \leq \E{}{\I{\mathcal{E}_1}\sum_{l=1}^T \sum_{i\in\Phases}(\tau_{i+1}-\tau_i)^{2/3}}\,.
\end{align*}
Recall that \cref{prop:relating_phase_and_episode} shows that on $\mathcal{E}_1$, each phase $[\tau_i,\tau_{i+1}[$ intersects at most two episodes. Therefore,
\begin{align*}
\E{}{\I{\mathcal{E}_1}\sum_{l=1}^T \sum_{i\in\Phases}(\tau_{i+1}-\tau_i)^{2/3}} \leq 2 \sum_{i=1}^{\tilde{L}_T}(\tau_{i+1}-\tau_i)^{2/3}\,,
\end{align*}
and finally we can conclude the proof by plugging the bound above into \eqref{eq:final_equation},
\begin{align*}
        \E{}{R(\pi_{\algo}, T)}&\leq \log(T) \sum_{i=0}^{\tilde{L}_T} (\tau_{i+1} - \tau_i)^{2/3} +\frac{7+ c_A + c_B}{T}\nonumber\\
    &+ 12\sum_{i=1}^{\tilde{L}_T}(\tau_{i+1}-\tau_i)^{2/3}+ 12(c_A+c_B)\log^2(T) \sum_{i=1}^{\tilde{L}_T}(\tau_{i+1}-\tau_i)^{2/3}\\
    &\leq \left(\log(T)+12+12(c_A+c_B)\log^2(T)\right)\sum_{i=1}^{\tilde{L}_T}(\tau_{i+1}-\tau_i)^{2/3}+\frac{7+c_A+c_B}{T}\\
    &\leq \bar{c_1}\log^2(T)\sum_{i=1}^{\tilde{L}_T}(\tau_{i+1}-\tau_i)^{2/3}\,,
\end{align*}
where $\textcolor{Maroon}{\bar{c_1} = 12(c_A+c_B)+2}$. This completes the proof. We summarize in \cref{tab:pure_constants} the exact values of the numerical constants used in the proof.

\begin{table}[h]
    \centering
    \begin{tabular}{c|c}
    \toprule
\textbf{Constant} & \textbf{Numerical value} \\ \midrule
        $c_0$ & $3 + 7(e-1)\sqrt{2}$ \\
        $c_1$ & $7(e-1)\sqrt{2}$\\
        $c_2$ & $c_0+c_1+8$ \\
        $c_3$ & $140+4c_1$\\
        $c_4$ & $192$\\
        $c_5$ & \cref{eq:regret_contribution_over_non_bad_segments}\\
        $c_6$ & $c_3 c_4+c_5$\\
        $c_A$ & $18(c_0+c_1+10)$\\
        $c_B$ & $c_6+1$\\
        $\Bar{c_1}$ & $12(c_A+c_B)+2$
    \end{tabular}
    \caption{Summary of numerical constants used in the proof}
    \label{tab:pure_constants}
\end{table}

\section{Proof of  \texorpdfstring{\cref{cor:upper_bound_total_variation}}{Lg}}\label{app:sec:proof_corollary_V_T}
Recall the definition of the total variation over the horizon $T$:
\begin{align*}
    V_T = \sum_{t=1}^{T-1} \max_{x \in [0, 1]} |\mu_{t+1}(x) - \mu_t(x)|\,.
\end{align*}

Fix a phase $[\tau_i, \tau_{i+1}[$, where $\tau_{i+1} \leq T+1$. Define the total variation within this phase as
\begin{align*}
    V_{[\tau_i, \tau_{i+1}[} = \sum_{t=\tau_i}^{\tau_{i+1}-1} \max_{x \in [0, 1]} |\mu_{t+1}(x) - \mu_t(x)|\,.
\end{align*}

Let $x_i = \argmax_{x \in [0, 1]} \mu_{\tau_{i+1}}(x)$ denote the best arm at the end of phase $i$. By definition, there exists an interval $[s_1, s_2] \subset [\tau_i, \tau_{i+1}[$ over which $x_i$ suffers significant regret:
\begin{align*}
    \sum_{t=s_1}^{s_2} \delta_t(x_i) \geq \log(T)\, (s_2 - s_1)^{2/3}.
\end{align*}

Now observe that
\begin{align*}
    (s_2 - s_1)^{2/3} = \sum_{t=s_1}^{s_2} \frac{1}{(s_2 - s_1)^{1/3}} \geq \sum_{t=s_1}^{s_2} \frac{1}{(\tau_{i+1} - \tau_i)^{1/3}}\,,
\end{align*}
which implies
\begin{align*}
    \sum_{t=s_1}^{s_2} \delta_t(x_i) \geq \sum_{t=s_1}^{s_2} \frac{\log(T)}{(\tau_{i+1} - \tau_i)^{1/3}}\,.
\end{align*}

Hence, there exists at least one round $t_i \in [s_1, s_2]$ such that:
\begin{align*}
    \delta_{t_i}(x_i) \geq \frac{\log(T)}{(\tau_{i+1} - \tau_i)^{1/3}}\,.
\end{align*}

Define $\tilde{x}_i = \argmax_{x \in [0, 1]} \mu_{t_i}(x)$. Then:
\begin{align*}
    \delta_{t_i}(x_i) &\leq \mu_{t_i}(\tilde{x}_i) - \mu_{t_i}(x_i) + \underbrace{\mu_{\tau_{i+1}}(x_i) - \mu_{\tau_{i+1}}(\tilde{x}_i)}_{\geq 0} \\
    &\leq |\mu_{\tau_{i+1}}(x_i) - \mu_{t_i}(x_i)| + |\mu_{t_i}(\tilde{x}_i) - \mu_{\tau_{i+1}}(\tilde{x}_i)| \\
    &\leq 2 V_{[\tau_i, \tau_{i+1})}\,.
\end{align*}

Therefore, assuming $\log(T) \geq 1$, we conclude that
\begin{align}\label{eq:relating_phase_and_TV}
    \frac{1}{(\tau_{i+1} - \tau_i)^{1/3}} \leq 2 V_{[\tau_i, \tau_{i+1})}\,.
\end{align}

We now recall the upper bound from \cref{th:upper_bound}:
\begin{align*}
    \mathbb{E}\left[R(\pi_{\algo}, T)\right] \leq \bar{c} \log^2(T) \sum_{i=0}^{\tilde{L}_T} (\tau_{i+1} - \tau_i)^{2/3}\,.
\end{align*}

We decompose the sum as:
\begin{align*}
    \sum_{i=0}^{\tilde{L}_T} (\tau_{i+1} - \tau_i)^{2/3} &= (T+1 - \tau_{\tilde{L}_T})^{2/3} + \sum_{i=0}^{\tilde{L}_T - 1} (\tau_{i+1} - \tau_i)^{2/3} \leq T^{2/3} + \sum_{i=0}^{\tilde{L}_T - 1} (\tau_{i+1} - \tau_i)^{2/3}\,.
\end{align*}

Applying Hölder’s inequality gives
\begin{align*}
    \sum_{i=0}^{\tilde{L}_T - 1} (\tau_{i+1} - \tau_i)^{2/3} 
    &= \sum_{i=0}^{\tilde{L}_T - 1} \frac{1}{(\tau_{i+1} - \tau_i)^{1/12}} \, (\tau_{i+1} - \tau_i)^{3/4} \\
    &\leq \left( \sum_{i=0}^{\tilde{L}_T - 1} \frac{1}{(\tau_{i+1} - \tau_i)^{1/3}} \right)^{1/4} \left( \sum_{i=0}^{\tilde{L}_T - 1} (\tau_{i+1} - \tau_i) \right)^{3/4}\,.
\end{align*}

Finally, by inequality \eqref{eq:relating_phase_and_TV}, we have
\begin{align*}
    \sum_{i=0}^{\tilde{L}_T - 1} \frac{1}{(\tau_{i+1} - \tau_i)^{1/3}} \leq 2 \sum_{i=0}^{\tilde{L}_T - 1} V_{[\tau_i, \tau_{i+1})} \leq 2 V_T\,.
\end{align*}

Therefore,
\begin{align*}
    \sum_{i=0}^{\tilde{L}_T - 1} (\tau_{i+1} - \tau_i)^{2/3} 
    \leq (2 V_T)^{1/4} \times T^{3/4} = 2^{1/4} V_T^{1/4} T^{3/4}\,.
\end{align*}

\section{Proof of the lower bound (\texorpdfstring{\cref{th:lower_bound}}{Lg})}\label{app:sec:proof_lower_bound}

\textbf{Proof sketch.}
The proof of \cref{th:lower_bound} relies on the following ideas. First, note that the lower bound $T^{2/3}$ follows from classical results of Lipschitz bandits \citep{kleinberg2004nearly}. To prove the lower bound either with respect to total variation $V_T$, or with respect to the total number of changes $L_T$, we construct $T/\tau$ stationary sub-problems with horizon $\tau$ to be chosen later. For each problem, we design $K = \tau^{1/3}$ possible Lipschitz-continuous rewards function. Each of these functions is almost flat, with the exception of one bump of size $\epsilon \approx \tau^{-1/3}$ hidden among $K$ possible bins. Classical arguments show that any bandit algorithm will misidentify the optimal bin with constant probability in some problem instance, thus suffering a regret $\tau^{2/3}$. Then, we create a dynamic problem instance by concatenating $T/\tau$ such problems, where the reward function corresponding to each problem of horizon $\tau$ is chosen independently. The total regret of any algorithm must then be
\begin{align*}
\frac{T}{\tau}  \tau^{2/3} = T\tau^{-1/3}\,.    
\end{align*}

To obtain a lower bound depending on the number of reward changes $L_T$, we set $\tau = T/L$, so that there are exactly $T/\tau$ stationary sub-problems. This yields a regret of the order $L_T^{1/3}T^{2/3}$. To obtain a lower bound depending on the total variation $V_T$, we set
$\tau = (T/V_T)^{3/4}$. Then, we see that there are $T/\tau$ changes of magnitude $\tau^{-1/3}$ each, so that the total variation is indeed $T/\tau \times \tau^{-1/3} = V_T$; moreover the cumulative regret over all problems is $T\tau^{-1/3} = T^{3/4}V^{1/4}$.

In the following, we assume that, conditionally on $x_t$, the reward is a Bernoulli random variable with mean $\mu_t(x_t)$.

\textbf{Notations.} For some integer $K\geq 2$ to be specified later, we set $\tau = K^3$ and $\epsilon = 1/(2K)$ (note that $\epsilon \leq 1/4$). We define $L = \lfloor T/\tau \rfloor$, and for $l\in \llbracket L\rrbracket$, we define the stationary phase $\mathcal{P}^l = \left\{(l-1)\tau + 1,\dots, l\tau - 1\right\}$. We also define the last stationary phase $\mathcal{P} = \left\{\tau L, \dots, T\right\}$. For $k < K$, define the intervals $I_k = [\frac{k-1}{K}, \frac{k}{K}[$, and we define $I_K = [\frac{K-1}{K}, 1]$. Finally, for a given sequence of rewards $\mu$, we denote by $\mathbb{E}_\mu$ and $\mathbb{P}_{\mu}$ the expectation and probability when the reward sequence is $\mu$. Note that at each round $t$, the algorithm $\pi$ only depends on an internal randomisation and on $\mathcal{F}_{t-1}$-measurable events. Thus, for all $l \leq L$ and all $\mathcal{F}_{\tau l}$-measurable random variable $X$ and event $E$, $\E{\mu}{X}$ and $\mathbb{P}_{\mu}(E)$ only depend on the sequence of rewards up to phase $l$, \emph{i.e.} on $(\mu_t)_{t \leq \tau l}$. In the following, we therefore abuse notation, and define for reward sequences $\mu$ of length $\tau l$, the expectation $\mathbb{E}_\mu$ and probability $\mathbb{P}_{\mu}$ pertaining to the first $l$ phases, \emph{i.e.} to $\mathcal{F}_{\tau l}$-measurable random variables and events.

\textbf{Reward function for a stationary phase.}
We define the possible choice of reward function during one stationary phase $\mathcal{P}^l$. For each $k\in \{2, ..., K\}$, define the function $m^k : [0,1]\mapsto[0,1]$ as
$$m^k(x) = \frac{1 - \epsilon}{2} + \frac{\epsilon - \left\vert x - \frac{1}{2}\right\vert}{2}\mathds{1}\left\{x \in I_1\right\} + \left(\epsilon - \left\vert x - \frac{2k-1}{2K}\right\vert\right)\mathds{1}\left\{x \in I_k\right\}.$$ 
We also define
$$m^1(x) = \frac{1 - \epsilon}{2} + \frac{\epsilon - \left\vert x - \frac{1}{2}\right\vert}{2}\mathds{1}\left\{x \in I_1\right\}.$$ 
Note that for $k \in \llbracket K \rrbracket$, when the mean reward function is $m^k$ the optimal action is $\frac{2k-1}{K}$; moreover  any action chosen outside of interval $I_k$ will have an instantaneous regret at least $\frac{\epsilon}{2}$. For $l\in \llbracket L \rrbracket$ and $k\in \llbracket K \rrbracket$, we denote $N_k^l = \sum_{t \in \mathcal{P}^l} \mathds{1}\left\{x_t \in I_k\right\}$ the number of actions chosen in interval $I_k$ during the stationary phase $\mathcal{P}^l$. Examples of possible reward functions for $K = 5$ are given in Figure \ref{fig:reward_functions}.

\begin{figure}[!ht]
\centering
\begin{subfigure}{.45\textwidth}
  \centering
  \includegraphics[width=\linewidth]{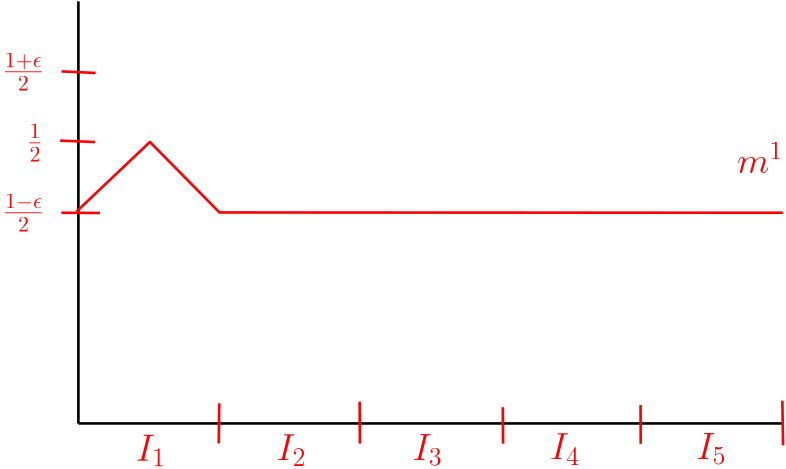}
  \label{fig:exp1_power}
\end{subfigure}%
\hfill
\begin{subfigure}{.45\textwidth}
  \centering
  \includegraphics[width=\linewidth]{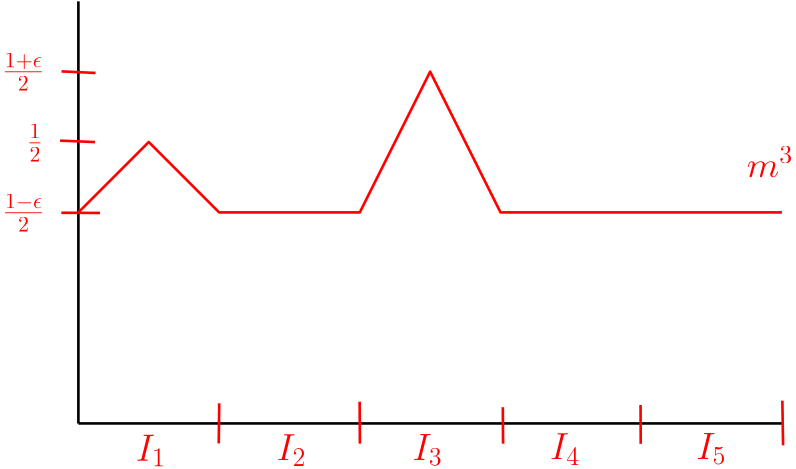}
  \label{fig:exp2_power}
\end{subfigure}\\
\caption{Reward functions $m^1$ (left) and $m^3$ (right) when $K = 5$.}
\vspace{-3mm}
\label{fig:reward_functions}
\end{figure}

\textbf{Sequential construction of the non-stationary reward function.}
For a given algorithm $\pi$, we build a difficult non-stationary reward function sequentially. We show by induction that for all $l \leq L$, there exists a non-stationary sequence of reward functions $(\mu_t)_{t \leq l\tau}$ such that the expected cumulative regret of algorithm $\pi$ over the stationary phases $\mathcal{P}_1$, ..., $\mathcal{P}_{l}$ is at least $0.01K^2 l$ for all $l \in \llbracket L \rrbracket$. We abuse notation and define $\mathcal{P}_0 = \emptyset$, so that the base case of the recursion (corresponding to $l = 0$) becomes trivial. 

We then proceed to prove this statement by induction. Assume that the statement holds for some $l$ such that $0 \leq l < L$ and some sequence of reward functions $(\mu_t)_{t \leq l\tau}$. For $k \leq K$, we define $\mu^k$ as the extended sequence of rewards such that $\mu^k_t = \mu_t \mathds{1}_{t \leq l\tau} + m^k\mathds{1}_{l\tau < t \leq (l+1)\tau}$. Then, let $k^*$ be defined as 
$$k^* = \argmin_{k}\mathbb{E}_{\mu^1}[N_k^{l+1}].$$
It holds that $\mathbb{E}_{\mu^1}[N_{k^*}^{l+1}] \leq \frac{\tau}{K}$. 

Now, if $k^* = 1$, it means that under $\mu^1$, the algorithm $\pi$ chooses in expectation at least $\frac{(K-1)\tau}{K}$ actions outside of interval $I_1$ in phase $l+1$. Then, the regret in this phase must be at least $\frac{(K-1)\tau}{K}\times \frac{\epsilon}{4} \geq \frac{\tau \epsilon}{8} = K^{2}/16$. In this case, using the assumption that the expected cumulated regret of algorithm $\pi$ over the stationary phases $\mathcal{P}_1$, ..., $\mathcal{P}_{l}$ is at least $0.01 K^2 l$, we find that the cumulative rewards up to phase $\mathcal{P}_{l+1}$ is at least $0.01K^2 (l+1)$, thus completing the induction step. We therefore assume henceforth that $k^* \neq 1$. Now, denoting $\KL\left(\mathbb{P}, \mathbb{Q}\right)$ the Kullback-Leibler divergence between probabilities $\mathbb{P}$ and $\mathbb{Q}$, and by $kl(p,q)$ the Kullback-Leibler divergence between Bernoulli distributions with parameters $p$ and $q$,  classical arguments (see, \emph{e.g.}, \cite[Chapter 15.1]{lattimore2020bandit}) show that 
\begin{align*}
    \KL\left(\mathbb{P}_{\mu^1}, \mathbb{P}_{\mu^{k^*}}\right)  &= \sum_{\tau l < t\leq\tau(l+1)} \mathbb{E}_{\mu^1}\left[kl\left(m^1\left(x_t\right), m^{k^*}\left(x_t\right)\right)\right].
\end{align*}
Now, for $x_t \notin I_{k^*}$, $kl\left(m^1\left(x_t\right), m^{k^*}\left(x_t\right)\right)=0$. Moreover, for all $x_t \in [0,1]$, $kl\left(m^1\left(x_t\right), m^{k^*}\left(x_t\right)\right)\leq kl\left(m^1\left(\frac{2k^*-1}{K}\right), m^{k^*}\left(\frac{2k^*-1}{K}\right)\right)$. Thus, 
\begin{align*}
    \KL\left(\mathbb{P}_{\mu^1}, \mathbb{P}_{\mu^{k^*}}\right) &\leq \sum_{\tau l < t\leq\tau(l+1)} \mathbb{E}_{\mu^1}\left[\mathds{1}\{x_t \in I_{k^*}\}kl\left(m^1\left(\frac{2k^*-1}{K}\right), m^{k^*}\left(\frac{2k^*-1}{K}\right)\right)\right]\\
    &\leq \mathbb{E}_{\mu^1}\left[N_{k^*}^{l+1}\right]kl\left(\frac{1-\epsilon}{2}, \frac{1+\epsilon}{2}\right)\\
    &\leq \frac{\tau}{K} 4\epsilon^2\\
    &\leq\frac{4K^3}{K\times 4K^2} = 1
\end{align*}
where we have used the definition of $k^*$ and of $\tau$ and $\epsilon$. Now, we apply Bretagnolle-Huber Inequality (see, \emph{e.g.}, \cite[Chapter 14]{lattimore2020bandit}) to the $\mathcal{F}_{\tau(l+1)}$-measurable event $A = \{ N_1^{l+1} \leq \frac{\tau}{2}\}$. This yields
\begin{align*}
    \mathbb{P}_{\mu_1}\left(A\right) + \mathbb{P}_{\mu_{k^*}}\left(A^c\right) &\geq \frac{1}{2}\exp\left(-\KL\left(\mathbb{P}_{\mu^1}, \mathbb{P}_{\mu^{k^*}}\right)\right)\\
    &\geq \frac{1}{2}\exp\left(-1\right).
\end{align*}
Using $\frac{1}{2}\exp\left(-1\right) \geq 0.18$ and $2\max(a, b) \geq a+b$, we find that 
\begin{align*}
    \max\left\{\mathbb{P}_{\mu_1}\left(A\right), \mathbb{P}_{\mu_{k^*}}\left(A^c\right)\right\} &\geq 0.09.
\end{align*}
Assume that $\mathbb{P}_{\mu_1}\left(A\right) \geq 0.09$. On the event $A$, the regret on phase $l$ under reward sequence $\mu^1$ is at least $\nicefrac{\tau}{2}\times \nicefrac{\epsilon}{2} = K^{2}/8$. Thus, the expected regret on phase $l$ under reward sequence $\mu^1$ is at least $0.01K^2$. Similar arguments show that if  $\mathbb{P}_{\mu_{k^*}}\left(A^c\right) \geq 0.09$, the expected regret on phase $l$ under reward sequence $\mu^{k^*}$ is at least $0.01K^2$. Using the induction assumption, we find that the cumulative rewards up to phase $\mathcal{P}_{l+1}$ is at least $0.01K^2 (l+1)$, thus completing the induction step.

\textbf{Conclusion.} 
The induction above shows that for all $K\geq 2$, there exists a sequence of reward functions $\mu_t$ with a most $\lfloor T/K^3 \rfloor$ shifts such that the total cumulative regret of algorithm $\pi$ is at least $0.01\lfloor T/K^{3} \rfloor K^2$. Assuming moreover that $K < (T/2)^{1/3}$, we see that $\lfloor T/K^3 \rfloor \geq T/(2K^3)$, so that the regret is at least $0.005T/K$. We now consider the two lower bounds separately.

\textbf{Lower bound depending on $L_T$.}
For $L_T \in \{0, 1, 2\}$, the bound follows from classical results on Lipschitz-continuous bandits (see, \emph{e.g.}, \cite{kleinberg2004nearly}), by noticing that since there exists a stationary problem and a constant $c>0$ on which any bandit algorithm must have regret at least $cT^{2/3}$, there exists a problem with 0 shifts (\emph{i.e.}, at most $L_T$) such that the regret is at least $cL^{1/3}T^{2/3}/2$. Therefore, we assume that $L_T > 2$. Then, setting $K = \lfloor (T/L_T)^{1/3} \rfloor$, we see that $K^3\leq T/2$. Thus, the non-stationary reward function designed above has regret at least $0.005T/K \geq 0.005T^{2/3}L_T^{1/3}$. Moreover, this reward function has at most $L_T$ changes.

\textbf{Lower bound depending on $V_T$.}
For $V_T \leq 2T^{-1/3}$, we have $V_T^{1/4}T^{3/4} \leq 2T^{2/3}$, the result follows from classical results on Lipschitz-continuous bandits (see, \emph{e.g.}, \cite{kleinberg2004nearly}) by noticing that there exists a stationary environment and a constant $c>0$ on which any bandit algorithm must have regret at least $cT^{2/3}$. Therefore, we assume henceforth that $V_T > T^{-1/3}$. Then, setting $K = \lfloor (T/V_T)^{1/4} \rfloor$, we see that $K^3\leq T/2$. Thus, the non-stationary reward function designed above has regret at least $0.005T/K \geq 0.005T^{3/4}V_T^{1/4}$. Moreover, the cumulative variation of this reward function is at most $V_T$.

\section{Extensions}\label{app:sec:extension_arbitrary_spaces}

\subsection{Extension to multi-dimensional action space}
In this subsection, we discuss how to extend our algorithm and results to multi-dimensional spaces.

\textbf{Problem setting in dimension $p$.}
We assume that at each round, the player chooses an action in the set $[0,1]^p$, where $p \in \mathbb{N}_*$ is the dimension. We assume that each mean reward function $\mu_t : [0, 1]^p\to [0, 1]$ satisfies the following Lipschitz condition:
$$\forall x, x'\in[0, 1]^p,\, |\mu_t(x') - \mu_t(x)|\leq   \frac{1}{\sqrt{p}}\lVert x - x'\rVert_p,$$
where $\lVert \cdot \rVert_p$ denotes the Euclidean norm in dimension $p$.

In dimension $p$, classical results show that when the horizon is $T$ and the problem is stationary, in the worst case a minimax-optimal algorithm can learn the reward function up to an error of order $\log(T)T^{-\frac{1}{p+2}}$, and incurs a regret of order $\log(T)T^{\frac{p+1}{p+2}}$. This justifies extending the notion of significant regret to $p$-dimensional action space as follows. We say that an arm $x\in [0, 1]^p$ incurs significant regret on interval $[s_1, s_2]$ if its cumulative regret on this interval is lower bounded as
\begin{align}\label{eq:significant_p}
    \sum_{t=s_1}^{s_2}\delta_t(x)\geq \log(T)(s_2-s_1)^{\frac{p+1}{p+2}}\,.
\end{align}
Then, based on this definition, we can define the \emph{significant shifts} and \emph{significant phases} the same as in \cref{def:significant_shift}.

\textbf{Adapting the algorithm.}
To estimate the mean reward function in dimension $p$, we rely on a recursive $p$-adic partitioning of the action space. More precisely, we can define \textbf{\textit{p}-adic tree} $\cT = \{\cT_d\}_{d\in\Nat}$ as the hierarchy of nested partitions of $[0, 1]^p$ at all possible depth $d\in\Nat$, where $\cT_d$ denotes the partition of $[0, 1]^p$ into bins (\emph{i.e.} hypercubes) with side length $1/2^d$.

We also adapt our doubling trick, considering \textbf{blocks} of duration $\tau_{l,m+1} - \tau_{l, m} = 2^{m(p+2)}$. Then, on the $m$-th block of the $l$-th episode, we consider as the \textsc{MASTER} set the bins at depths $m$. If the rewards are stable enough during this round, we expect the regret of the block to be of order $2^{m(p+1)}$.

We can use the same \textbf{sampling strategy} as presented in Section \ref{sec:algorithm}: given a set of active depth at time $t$, we sample a bin uniformly at the shallowest active depth, and then proceed by sampling uniformly bins among its active children at the next depth. An analysis similar to that of Proposition \ref{prop:concentration} reveals that the probability of sampling a bin at depth $d$ (given that this bin is active) is lower bounded by $2^{-pd}$. Then, the error for estimating the cumulative gaps between two bins at depth $d$ active between rounds $s_1$ and $s_2$ must be of order $\sqrt{(s_2-s_1)2^{pd} \vee 4^{pd}} + \frac{s_2-s_1}{2^d}$.

This motivates our considering an new \textbf{eviction criteria} of the form 
\begin{align*}
\max_{B' \in \cB_{[s_1, s_2]}(d)} \sum_{t=s_1}^{s_2} \hat{\delta}_t(B', B) 
> c'_0 \left(\log(T) \sqrt{(s_2 - s_1) 2^{pd} \vee 4^{pd}} + \frac{(s_2 - s_1)}{2^d}\right)
\end{align*}
for some well-chosen positive numerical constant $c_0'$. 

As previously, we choose as duration of a replay at depth $d$ the length of the $d$-th block of an episode. In the $p$-dimensional setting, this implies that we should schedule \textbf{replays of duration $2^{d(p+2)}$ at depth $d$}. Note that such a replay allows to estimate the cumulative gaps of two bins up to a precision $2^{d(p+1)}$.

It remains to choose the \textbf{probability of scheduling a replay} at depth $d$. At all times $s \in [\tau_{l,m}, \tau_{l,+1}[$ such that $t - \tau_{l,m} \equiv 0 \ [2^{d(p+2)}]$, we schedule a replay of duration $d<$ with probability $p_{s,d}$ given by
$$p_{s,d} = \sqrt{\frac{2^{d(p+2)}}{s-\tau_{l,m}}}.$$

\textbf{Regret analysis.}
Simple computations show that there are on average $\sqrt{2^{m(p+2)}/2^{d(p+2)}}$ replays of length $2^{d(p+2)}$ during a block of length $2^{m(p+2)}$, each one with regret of order $2^{d(p+1)}$. Thus, the regret due to replays at depth $d$ over this block is of order $\sqrt{2^{m(p+2)}/2^{d(p+2)}}2^{d(p+1)} = \sqrt{2^{m(p+2)}2^{dp}}$. Summing over the depths $d<m$, we find that the total contribution of the replays over a block of length $2^{m(p+2)}$ is of order $2^{m(p+1)}$, \emph{i.e.} of the same order as the minimax-optimal regret over this phase in the stationary setting.

To conclude, we argue that this choice of replay probability allows to detect significant shifts fast enough. Assume that the mean reward of an optimal arm undergoes a shift of magnitude $2^{-dp}$, so that it becomes sub-optimal. The algorithm should identify it as unsafe in less than $D$ rounds, where $D$ is such that $D2^{-dp} \leq 2^{m(p+1)}$, \emph{i.e.}, $D \leq 2^{m(p+1)+dp}$. Now, our choice of $p_{s,d}$ ensures that there are approximately $\sqrt{2^{m(p+2)}/2^{d(p+2)}}$ replays at depth $d$ scheduled during the block, so that on average, a replay at depth $d$ is scheduled every $\sqrt{2^{m(p+2)}2^{d(p+2)}}$ rounds. Noticing that $\sqrt{2^{m(p+2)}2^{d(p+2)}} \leq  2^{m(p+1)+dp}$, we see that enough replays of the adequate length are schedule, so to ensure that a significant shift does not go undetected for too long.

Thus, the algorithm has almost \emph{minimax-optimal} regret over stable phases, and detects quickly significant shifts. Conducting the same analysis as in the 1-dimensional case, we see that in dimension $p$ the regret of our algorithm should be bounded as

\begin{align*}
\mathbb{E}[R(\pi_{\algo}, T)] \leq c\log(T)^2\sum_{i = 1}^{\tilde{L}_T} (\tau_{i+1}(\mathbf{\mu}) - \tau_i(\mathbf{\mu}))^{\frac{p+1}{p+2}}\,,
\end{align*}
which yields the \emph{worst-case regret}
\begin{align*}
\mathbb{E}[R(\pi_{\algo}, T)] \leq c'\log(T)^2\tilde{L}_T^{\frac{1}{p+2}}T^{\frac{p+1}{p+2}}\,,
\end{align*}
for some positive numerical constant $c'$. We emphasize here that both the phases $\tau_i(\mu)$ and number of phases $\tilde{L}_T$ are based on the definition of a significant shift for $p$-dimensional actions, given in Equation \eqref{eq:significant_p}.

\subsection{Extension to Hölder bandits}

In this subsection, we discuss how to extend our algorithm and results to Hölder bandits.

\begin{assumption}[$(\kappa, \beta)$\textbf{-Hölder mean reward functions}]\label{assump:holder}
Each mean reward function $\mu_t$ satisfies $(\beta\leq1)$
\begin{align*}
\forall x, x'\in[0, 1],\,\quad \lvert \mu_t(x)-\mu_t(x') \rvert\leq \kappa\lvert x-x' \rvert^\beta\,.
\end{align*}
\end{assumption}
\textbf{Problem setting for Hölder bandits.} We start by defining a significant regret in this setting. Arm $x\in[0, 1]$ incurs significant regret on interval $[s_1, s_2]$ if 
\begin{align*}
\sum_{t=s_1}^{s_2}\delta_t(x)\geq\log(T)(s_2-s_1)^{\frac{1+\beta}{1+2\beta}}\kappa^{\frac{1}{1+2\beta}}\,.    
\end{align*}

To give intuition, an oracle policy $\pi_{\mathrm{oracle}}$ that knows when the significant shifts $\tau_i$'s discretizes the space into $K_i=(\tau_{i+1}-\tau_i)^{\frac{1}{1+2\beta}}\kappa^{\frac{2}{1+2\beta}}$ bins at each phase $[\tau_i, \tau_{i+1}[$, and incurs a regret of $\Tilde{\mathcal{O}}\left((\tau_{i+1}-\tau_i)^{\frac{1+\beta}{1+2\beta}}\kappa^{\frac{1}{1+2\beta}}\right)$ (by directly adapting results from \citet{kleinberg2004nearly}), where $\Tilde{\mathcal{O}}$ only hides logarithmic factors and numerical constants that does not depend on $\kappa$ or $\beta$. Thus, the dynamic regret of the oracle is upper bounded by (up to polylog factors)
\begin{align*}
\E{}{R(\pi_{\mathrm{oracle}}, T)}&\leq \sum_{i=1}^{\Tilde{L}_T}(\tau_{i+1}-\tau_i)^{\frac{1+\beta}{1+2\beta}}\kappa^{\frac{1}{1+2\beta}} \\
&\leq T^{\frac{1+\beta}{1+2\beta}}\Tilde{L}_T^{\frac{\beta}{1+2\beta}}\kappa^{\frac{1}{1+2\beta}}\,.    
\end{align*}
We now show how to adapt \algo so that it achieves this rate \emph{adaptively}.

\textbf{Adapting the algorithm.} We consider blocks $[\tau_{l,m}, \tau_{l,m+1}[$ of size 
\begin{align*}
    \tau_{l,m+1}-\tau_{l,m} = 2^{m(1+2\beta)}/\kappa^2\,.
\end{align*}
Thus, on each block, we discretize the space into \begin{align*}
(\tau_{l,m+1}-\tau_{l,m})^{\frac{1}{1+2\beta}}\kappa^{\frac{2}{1+2\beta}}=2^m    
\end{align*}
bins, allowing us to consider the MASTER set as the bins at depth $m$ (that is, the set of bins $\mathcal{T}_m$ of the dyadic tree). If the rewards are stable enough during this block, we expect the regret of the block to be of order
\begin{align*}
    R(m)=(\tau_{l,m+1}-\tau_{l,m})^{\frac{1+\beta}{1+2\beta}}\kappa^{\frac{1}{1+2\beta}} = 2^{m(1+\beta)}\,.
\end{align*}
For the \textbf{replays}, we choose as duration of a replay at depth $d$ the length of the $d$-th block of an episode: this implies that we should schedule replays of duration \begin{align*}
\ell(d)=2^{d(1+2\beta)}/\kappa^2    
\end{align*}
at depth $d$, and discretize $\ell(d)^{\frac{1}{1+2\beta}}\kappa^{\frac{2}{1+2\beta}}=2^d$ bins. Such replay allows to estimate the cumulative gaps of two bins up to precision $2^{d(1+\beta)}$.

Since we use exactly the same dyadic tree $(\mathcal{T}_d)_{1\leq d\leq m}$ as in the $1$-Lipschitz case, \textbf{the sampling strategy is exactly the same}, as well as the concentration error \cref{prop:concentration}. Only the bias for estimating the cumulative reward of two bins at depth $d$ changes, and will be exactly equals to $2(s_2-s_1)\kappa/2^{d\beta}$ over an interval $[s_1, s_2]$. This motivates a new eviction criteria depending on the Hölder constants $(\kappa, \beta)$, of the form
\begin{align*}
    \max_{B'\in\mathcal{B}_{[s_1, s_2]}(d)}\sum_{t=s_1}^{s_2}\hat\delta_t(B', B)>c_0\log(T)\sqrt{(s_2-s_1)2^d\vee4^d}+4\frac{(s_2-s_1)\kappa}{2^{d\beta}}\,,
\end{align*}
where $c_0 = 7(e-1)\sqrt{2}$ is a numerical constant. 

It remains to choose the \textbf{probability of scheduling a replay} at depth $d$, at a given round. At a round $s\in[\tau_{l,m},\tau_{l,m+1}[$ such that $s-\tau_{l,m}\equiv 0[\ell(d)]$, we schedule a replay of depth $d<m$ with probability $p_{s, d}$ given by
\begin{align*}
    p_{s, d} = \frac{1}{\kappa} \frac{2^{\frac{d(1+2\beta)}{2}}}{\sqrt{s-\tau_{l,m}}} \,\quad\, (\text{we always have }p_{s,d}< 1\text{ for any }d< m)\,.
\end{align*}

\textbf{Regret analysis.} Simple computations show that there are on average $2^{\frac{(1+2\beta)(m-d)}{2}}$ replays of length $\ell(d)$ during one block $[\tau_{l,m},\tau_{l,m+1}[$.  Thus, the regret due to replays at depth $d$ over this block is of order (ignoring multiplicative numerical constants that does not depend on $\kappa$ or $\beta$) 
\begin{align*}
    R(d)2^{\frac{(1+2\beta)(m-d)}{2}} = 2^{\frac{(1+2\beta)(m-d)+2d(1+\beta)}{2}}\,.
\end{align*} 

Summing over the depths $d<m$, we find that the total contribution of the replays over a block $m$ is exactly of order $2^{m(1+\beta)}$ (up to numerical constants), \emph{i.e.} of the same order as the minimax-optimal regret over this phase in the non-stationary setting. 

Moreover, this choice of replay ensures that there are approximatively $2^{\frac{(1+2\beta)(m-d)}{2}}$ replays at depth $d$ scheduled during block $m$, so that on average, a replay at depth $d$ is scheduled every $2^{\frac{(1+2\beta)(m+d)}{2}}$ rounds. Noticing that $2^{\frac{(1+2\beta)(m+d)}{2}}\leq 2^{m(1+\beta)}\times 2^{d\beta}$, we see that enough replays of the adequate length are scheduled so to ensure that a shift of magnitude $2^{-d\beta}$ does not go undetected for too long. 

Since in each episode $[t_{l+1}-t_l[$ there are at most $M_l = c\frac{\log_2((t_{l+1}-t_l)\kappa)}{1+2\beta}$, where $c$ is a numerical constant, the regret over one episode is upper bounded as 
\begin{align*}
\sum_{m=1}^{M_l} 2^{m(1+\beta)}\leq (t_{l+1}-t_l)^{\frac{1+\beta}{1+2\beta}}\kappa^{\frac{1}{1+2\beta}}\,.    
\end{align*}
Applying exactly \cref{prop:relating_phase_and_episode} and applying Hölder's inequality, we have exactly the upper bound of 
\begin{align*}
\E{}{R(\pi_{\algo}, T)}\leq \widetilde{\mathcal{O}}\left(T^{\frac{1+\beta}{1+2\beta}}\Tilde{L}_T^{\frac{\beta}{1+2\beta}}\kappa^{\frac{1}{1+2\beta}}\right)\,,    
\end{align*}
where $\widetilde{\mathcal{O}}$ hides polylog factors and numerical constants that does not depend on $\kappa$ or $\beta$. We next prove that this bound is in fact minimax-optimal with respect to $T, \tilde{L}_T$, $\kappa$ and $\beta$ simultaneously.

\textbf{Lower Bound for non-stationary Hölder bandits.} We adapt the proof of \cref{th:lower_bound} to this setting. Any policy interacting with a $(\kappa, \beta)$-Hölder bandit environment suffers regret at least $T^{\frac{1+\beta}{1+2\beta}} \kappa^{\frac{1}{1+2\beta}}$ in the stationary setting \citep{kleinberg2004nearly}. We divide the horizon into $T/\tau$ blocks of length $\tau$. In each, we define an amount of $K = \tau^{\frac{1}{1+2\beta}} \kappa^{\frac{2}{1+2\beta}}$ $(\kappa,\beta)$ mean reward functions satisfying \cref{assump:holder} with a single bump of size $\varepsilon = \sqrt{K/\tau} = \tau^{-\frac{\beta}{1+2\beta}} \kappa^{\frac{1}{1+2\beta}}$,
hidden uniformly at random. Standard arguments show that any algorithm misidentifies the optimal region in some instance with constant probability, implying per-block regret 
\begin{align*}
    R(\pi, \tau) \geq \tau \varepsilon = \tau^{\frac{1+\beta}{1+2\beta}} \kappa^{\frac{1}{1+2\beta}}\,.
\end{align*}
Concatenating $T/\tau$ such blocks (with reward functions chosen independently) gives total regret of
\begin{align*}
(T/\tau) \times R(\pi, \tau) \geq T \tau^{-\frac{\beta}{1+2\beta}} \kappa^{\frac{1}{1+2\beta}}\,.    
\end{align*}
To derive a lower bound that depends on $\Tilde{L}_T$, we first set $\tau = T / L_T$. This yields $
R(\pi, T) \geq T^{\frac{1+\beta}{1+2\beta}} L_T^{\frac{\beta}{1+2\beta}} \kappa^{\frac{1}{1+2\beta}}$, and we observe that we always have $L_T\geq \Tilde{L}_T$. For the dependency on $V_T$, we set \begin{align*}
    \tau = \kappa^{\frac{1}{1+3\beta}} T^{\frac{1+2\beta}{1+3\beta}}V_T^{-\frac{1+2\beta}{1+3\beta}}\,.
\end{align*}
It is easy to verify that the total variation over $T$ is exactly $V_T$. So for any algorithm $\pi$, there exist an environment such that

\begin{align*}
    R(\pi, T) \geq T^{\frac{1+\beta}{1+2\beta}} \widetilde{L}_T^{\frac{\beta}{1+2\beta}} \kappa^{\frac{1}{1+2\beta}}\wedge T^{\frac{1+\beta}{1+2\beta}} + T^{\frac{1+2\beta}{1+3\beta}} V_T^{\frac{\beta}{1+3\beta}}\kappa^{\frac{1}{1+3\beta}}\,.
\end{align*}

We recover our results for $\kappa=1$ and $\beta=1$. As the smoothness of the reward functions increases ($\beta \to 1$ and/or $\kappa \to 0$), the problem becomes easier, as the lower bound decreases.

\section{Computational complexity of \algo}\label{app:sec:computational_complexity}
While our algorithm is feasible on small-scale problems (as showed in our experiments in \cref{app:sec:simulations}), we acknowledge that its computational complexity may limit scalability in large-scale applications. Below, we provide a more precise characterization of the computational cost.

At each round within a block of length $\tau_{l,m+1}-\tau_{l,m} = 8^m$, the algorithm maintains estimates for all bins across all discretization depths $d = 1, \ldots, m$. The total number of bins at depth $d$ is $2^d$, and across all depths, we maintain a total of
\begin{align*}
    \text{NumberBins}_m = \sum_{d=1}^m 2^d = \mathcal{O}(2^m)\,.
\end{align*}
For each bin, we compute an importance-weighted mean estimate, resulting in a per-round cost of $\mathcal{O}(2^m)$ for estimation alone. In addition, at each round $t$ of this block, the algorithm performs a statistical test $\eqref{eq:star}$ over \emph{all} pairs of bins $(B_1, B_2)$ at each depth $d \in [1, m]$. There are $\binom{2^d}{2} = \mathcal{O}(4^d)$ such pairs per depth, and summing over all depths yields \begin{align*}
\text{TotalPairs}_m = \sum_{d=1}^m \mathcal{O}(4^d) = \mathcal{O}(4^m)    
\end{align*}
per block. For each pair $(B_1,B_2)$, we must consider all possible intervals $[s_1, t] \subseteq [\tau_{l,m}, t]$ during the block. The number of such intervals is $\mathcal{O}(8^m)$ in the worst case. 

Putting it all together, the worst-case time complexity for each round is 
\begin{align*}
\mathcal{O}(2^m) + \mathcal{O}(4^m \cdot 8^m) = \mathcal{O}(32^m)\,,    
\end{align*}
and the memory complexity is $\mathcal{O}(8^m)$. Since the number of block is upper bounded as $m\leq\log(T)$, we conclude that the worst-case time computational complexity of our algorithm is $\mathcal{O}(T^6)$ and its worst-case memory complexity is $\mathcal{O}(T^4)$, which are both \textbf{polynomial in $T$.}

While the computational cost per block is manageable for small values of $m$ (e.g., $m \leq 4$), it becomes quickly intractable as $m$ increases. Developing efficient \textbf{adaptive algorithms} that enjoy minimax optimal regret is therefore an important direction for future work, even in the $K$-armed setting.

\section{Numerical simulations}\label{app:sec:simulations}
In this section, we illustrate some numerical experiments to show the empirical performances of \algo on a synthetic  dataset. The code for these implementations is available at \url{https://github.com/nguyenicolas/NS_Lipschitz_Bandits}.

\textbf{Environment.} We simulate a $1$-Lipschitz, piecewise-linear reward function defined over the action space $[0,1]$, with a single peak shifting smoothly from \( x=0.3 \) to \( x=0.7 \) every \(10^5\) rounds. This setup induces \(\tilde{L}_T = 10\) significant shifts over a time horizon \(T = 10^6\). Thus, the mean reward changes every round, but only ten of these changes are significant under our framework. 

\textbf{Benchmarks.} We compare our method against two baselines: \texttt{BinningUCB (naive)} and \texttt{BinningUCB (oracle)}. The first baseline assumes knowledge only of the total time horizon $T$ and naively discretizes the action space into $K(T) \propto T^{1/3}$ actions, as if operating in a stationary environment of length $T$. It then runs a standard \texttt{UCB} algorithm without resetting its statistical estimates. The \texttt{oracle} baseline, by contrast, has access to the \emph{exact} times of the significant shifts $\tau_i$'s and resets its estimates at each significant phase, using the optimal per-phase discretization $K_i \propto (\tau_{i+1}-\tau_i)^{1/3}$.

\textbf{Results.} We report in \cref{fig:regret_plot} the cumulative dynamic regret of the three methods, averaged over $100$ independent runs. Our method (\algo) significantly outperforms \texttt{BinningUCB (naive)} by adapting to non-stationarities through replay mechanisms.

\begin{figure}
    \centering
    \includegraphics[width=\linewidth]{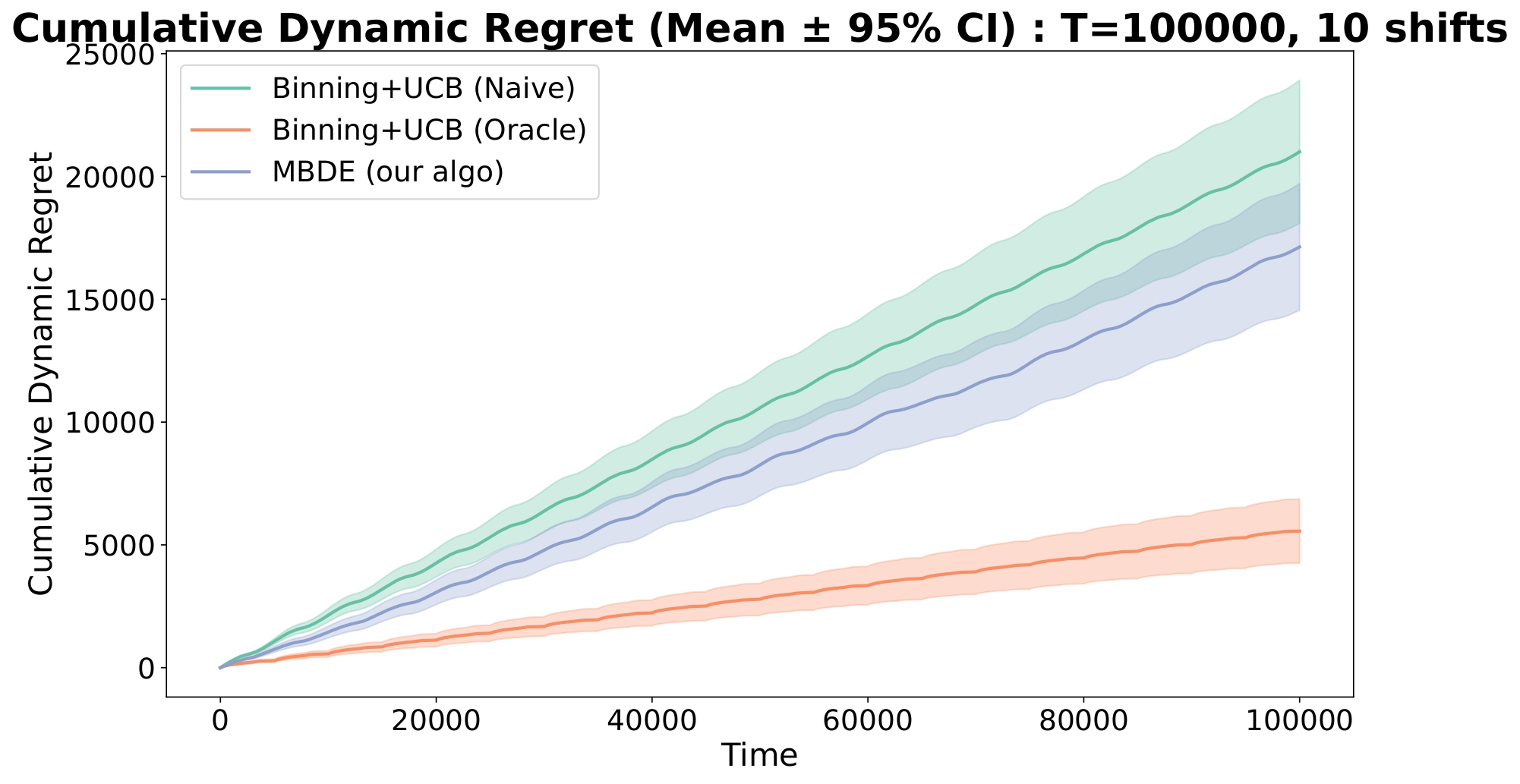}
    \caption{Cumulative dynamic regret of \algo, \texttt{BinningUCB (naive)}, and \texttt{BinningUCB (oracle)} over a total horizon of $T = 10^6$ rounds with $10$ significant shifts. Results are averaged over $100$ independent runs, with $95\%$ confidence intervals of the mean dynamic regret shown.}
    \label{fig:regret_plot}
\end{figure}

\end{document}